\documentclass[12pt]{article}

\usepackage[left=2.7cm,bottom=2.7cm,right=2.7cm,top=2.7cm]{geometry}

\usepackage[utf8]{inputenc} % allow utf-8 input
\usepackage[T1]{fontenc}    % use 8-bit T1 fonts
\usepackage{lmodern}
\usepackage{microtype}
\usepackage{amsmath,amssymb,amsthm,mathtools}
\usepackage{array}
\usepackage{booktabs}
\usepackage{multirow}
\usepackage{graphicx}
\usepackage{xcolor}
\usepackage{authblk}
\usepackage{enumitem}
\usepackage{indentfirst}
\usepackage[bottom]{footmisc}
\usepackage{comment}
\usepackage{algorithm}
\usepackage{algpseudocode}
\usepackage{tikz}
\usepackage{pgfplots}
\usepackage[authoryear,round,sort]{natbib}
\usepackage{doi}
\usepackage{hyperref}
\hypersetup{hidelinks}
\ifdefined\pdfsuppresswarningpagegroup
\fi

\theoremstyle{plain}

\newtheorem{theorem}{Theorem}[section]
\newtheorem{lemma}[theorem]{Lemma}
\newtheorem{corollary}[theorem]{Corollary}

\theoremstyle{definition}
\newtheorem{remark}[theorem]{Remark}

\usetikzlibrary{calc,fit}
\usepgfplotslibrary{groupplots}
\pgfplotsset{compat=1.17}

\algrenewcommand\algorithmicrequire{\textbf{Require:}}
\algrenewcommand\algorithmicensure{\textbf{Ensure:}}

\begin{document}
\title{A Unified Descriptive-Complexity Framework for Model Selection under Correlated Designs}

\author[1]{Yanhang Zhang}
\author[2]{Wei Liu}
\author[1, 3]{Yuhong Yang}

\affil[1]{\footnotesize Yau Mathematical Sciences Center, Tsinghua University.}
\affil[2]{\footnotesize Citigroup.}
\affil[3]{\footnotesize Beijing Institute of Mathematical Sciences and Applications.}

\date{}
\maketitle \sloppy

\begin{abstract}
Model selection becomes particularly challenging under strong
predictor dependence and model-class uncertainty, especially when there are exponentially many models. We propose a
Descriptive-Complexity Information Criterion (DCIC) that regularizes
large candidate model collections through Kraft-admissible code
lengths. Under sub-Weibull noise, we establish selection consistency
through approximation-error separation without relying on RIP-type
conditions, together with nonasymptotic oracle risk bounds that remain
valid under model misspecification. The same coding principle places heterogeneous classes on a
common complexity scale at a small additional class-identification cost. This extension yields class--model recovery under suitable
identifiability conditions and risk adaptation across classes. We further develop a complexity-guided search
path that makes the computation--statistics trade-off explicit. Large
penalties yield polynomial-size retained search regions with high
probability, whereas smaller penalties sharpen the oracle risk
benchmark. Numerical experiments illustrate stable support recovery
and favorable estimation performance under strong dependence and
model-class uncertainty.
\end{abstract}

\begin{keywords}
Model selection,
correlated designs,
Kraft inequality,
descriptive complexity,
multi-class selection.
\end{keywords}

\section{Introduction}\label{sec:introduction}
Model selection has been a central problem in  statistics and machine learning for decades, giving rise to an enormous literature on its theory, methodology, computation and applications.
Despite thousands of papers published on the subject, in our opinion, several key issues remain barely or insufficiently addressed.

\subsection{Three questions}\label{sub:question}
\begin{enumerate}[label=\textbf{\arabic*.}, leftmargin=2.5em, labelsep=0.6em, itemsep=0.5em]
\item Existing selection consistency results often require restrictive near-orthogonality conditions on the design, including the restricted isometry property (RIP) \citep{candes2005} and the sparse Riesz condition (SRC) \citep{zhang2008}. How can one establish selection consistency and oracle risk guarantees under possibly strong predictor dependence that is commonly seen in high-dimensional data?

\item Most variable-selection procedures in regression work with a prespecified variable dictionary or basis family. In many applications, however, one naturally wants to explore several approximation systems. Is it possible to select across heterogeneous model classes to guarantee risk adaptation and, under correct model specification, achieve joint recovery of the true model class and its within-class model?

\item Scalable model-selection procedures typically use convex optimization or tractable approximations to nonconvex objectives. Best subset selection (BSS) offers strong guarantees for selection consistency and estimation risk but generally involves combinatorial search over exponentially many subsets. Can one develop a method that explicitly links the extent of model-space exploration to an oracle risk benchmark, even under model misspecification?
\end{enumerate}

The first question arises from the difficulty of distinguishing competing models under strong predictor dependence. Many existing analyses unfortunately rule out strongly dependent designs through restrictive design assumptions. In observational studies, however, strong and complex dependence is common, and concerns have been raised about RIP-type conditions \citep{wasserman2012}. Rather than requiring the observed design to satisfy near-orthogonality conditions, a different direction is to characterize when selection consistency remains attainable under the observed dependence structure and to derive meaningful risk guarantees even when exact identification is difficult or irrelevant.

A further question is model-class uncertainty. Competing structural classes may exhibit markedly different approximation behavior and complexity, so restricting selection to a potentially misspecified class can result in substantial approximation error. For example, many methods for sparse additive regression approximate the target function using a prespecified basis family \citep{Ravikumar2009,Meier2009,huang2010}, even though other basis families may be better suited.  These considerations motivate a unified framework for selecting among heterogeneous classes that achieves risk adaptation and, under correct model specification, recovers the true class and its within-class model. To the best of our knowledge, such a framework is absent from the current literature.

A third question concerns the trade-off between statistical accuracy and the computational cost of exploring large model spaces. Some existing data-driven tuning procedures attain optimal rates, typically under favorable design conditions and sparse or weakly sparse approximation regimes \citep{bellec2018SLOPE,ing2020}. These results primarily characterize approximation--estimation trade-offs along a prescribed solution path. Under strong predictor dependence and model misspecification, however, they do not directly quantify how  expanding the explored model region affects the attainable statistical benchmark.

Motivated by these considerations, we seek to develop a model selection
framework that accommodates general predictor dependence by
characterizing the distinguishability of competing models rather than
imposing restrictive near-orthogonality conditions. The framework aims
to provide meaningful risk guarantees under model misspecification and
to compare heterogeneous model classes on a common complexity scale.
We further seek to use its tuning parameter to control an admissible
complexity budget, thereby linking the extent of model-space
exploration to the attainable statistical benchmark.

\subsection{A descriptive-complexity framework and main results}
\label{sub:framework}
We address these questions through the Descriptive-Complexity Information Criterion (DCIC), a unified framework based on Kraft-admissible code lengths. To place the discussion in a concrete setting, let $X\in\mathbb R^{n\times p}$ be a fixed design matrix and consider the general mean model
$$
y=\mu_n+\varepsilon,
$$
where $y\in\mathbb R^n$ is the response vector, $\mu_n\in\mathbb R^n$ is an unknown mean vector, and $\varepsilon=(\varepsilon_1,\ldots,\varepsilon_n)^\top$
consists of independent, mean-zero sub-Weibull random variables satisfying,
for some $\alpha\in(0,2]$ and $\zeta>0$,
\[
\mathbb{E}(\varepsilon_i^2)=\sigma^2,
\qquad
\Pr\!\left(\left|\varepsilon_i/\sigma\right|>t\right)
\leq
2\exp\!\left\{-\left(t/\zeta\right)^\alpha\right\},
\]
for all $t\geq 2$ and $i=1,\ldots,n$
\citep{comminges2021}.
When pursuing the consistency results, we specialize to
$
\mu_n=X\beta^*,
$
where $\beta^*\in\mathbb R^p$ is $s^*$-sparse with support $S^*$. For the risk analysis, we allow $\mu_n$ to be arbitrary and do not require it to be represented by any candidate model or even to admit a linear representation in $X$.

Consider a collection of candidate linear models. For a model $S$ of
size $s$, define
\[
\mathrm{DCIC}(S)
:= \mathrm{RSS}(S)/\sigma^2
+ \eta_n\, s
+ \lambda\, C_S^{2/\alpha},
\]
where \(\eta_n>0\), \(\mathrm{RSS}(S)=\bigl\|\bigl(\mathrm{I}_n-P_S\bigr)y\bigr\|_2^2\) is the residual sum of squares of model $S$, \(C_S\) is the descriptive complexity of model \(S\), and \(\lambda \ge 0\) is a tuning parameter.  
The exponent \(2/\alpha\) reflects the tail behavior under sub-Weibull noise and the need to control deviations uniformly over a coded model class. In particular, \(\alpha=2\) corresponds to the sub-Gaussian noise setting. 
We assume that \(C_S\) satisfies the Kraft inequality
\[
\sum_{S \in \mathcal{M}_n} e^{-C_S} \leq 1,
\]
where \(\mathcal{M}_n\) is the candidate model class. \(C_S\) can be viewed as a description length assigned to the model index, and the Kraft inequality ensures global admissibility of the resulting code.  
This coding interpretation is
classical in information-theoretic model selection and minimum
description length
\citep{Barron1991,barron1998,yang1999selection}. Its broader
connections with oracle risk theory are reviewed in
Section~\ref{sub:literature}.

The same Kraft-admissible complexity provides a unified answer to the
three questions in Section~\ref{sub:question}. First, we establish a
general selection consistency result for a broad class of model collections through approximation-error separation, without
requiring RIP/SRC-type conditions, and verify the
resulting assumptions under representative strongly correlated
designs. The same criterion also yields nonasymptotic oracle risk
bounds that remain meaningful under model misspecification and imply minimax-adaptive rates in several
canonical settings.

Second, we extend DCIC to selection over $M$ heterogeneous candidate
classes. By encoding the class label at an additional cost of order
$\log M$, DCIC places models from all candidate classes on a common
descriptive-complexity scale while preserving a global Kraft
inequality. This yields guarantees for both exact class--model recovery
and risk adaptation to the best approximation--complexity trade-off
across classes. We illustrate this extension through adaptive
basis-family selection in sparse additive models.

Third, we use the same descriptive-complexity structure to develop a
complexity-guided search strategy with an explicit
computation--statistics trade-off. Along the decreasing-$\lambda$ path, larger penalties yield polynomial-size retained regions with high probability, whereas smaller penalties expand the search and sharpen the oracle risk benchmark under model misspecification. We further establish
finite-sample control of false positives and false negatives along the
path and identify an exact-recovery region under suitable beta-min
conditions.

\subsection{Motivating illustrations of the computation--statistics trade-off}\label{sub:illustration}
We give two single-run illustrations of the trade-off, with detailed
simulation protocols deferred to Section~\ref{sec:experiments}.
Figure~\ref{fig:runtime_sparse} reports the average squared error (ASE)
and Matthews correlation coefficient (MCC) under a well-specified sparse
model, whereas Figure~\ref{fig:runtime_dense} reports ASE, model size,
and the number of selected strong signals under a misspecified dense-signal
model. In both figures, markers show the terminal performance of
representative competitors, while DCIC traces a decreasing-$\lambda$ path.
The piecewise time axis uses seconds initially and minutes thereafter.
\begin{figure}[H]
\centering
\includegraphics[scale=0.39]{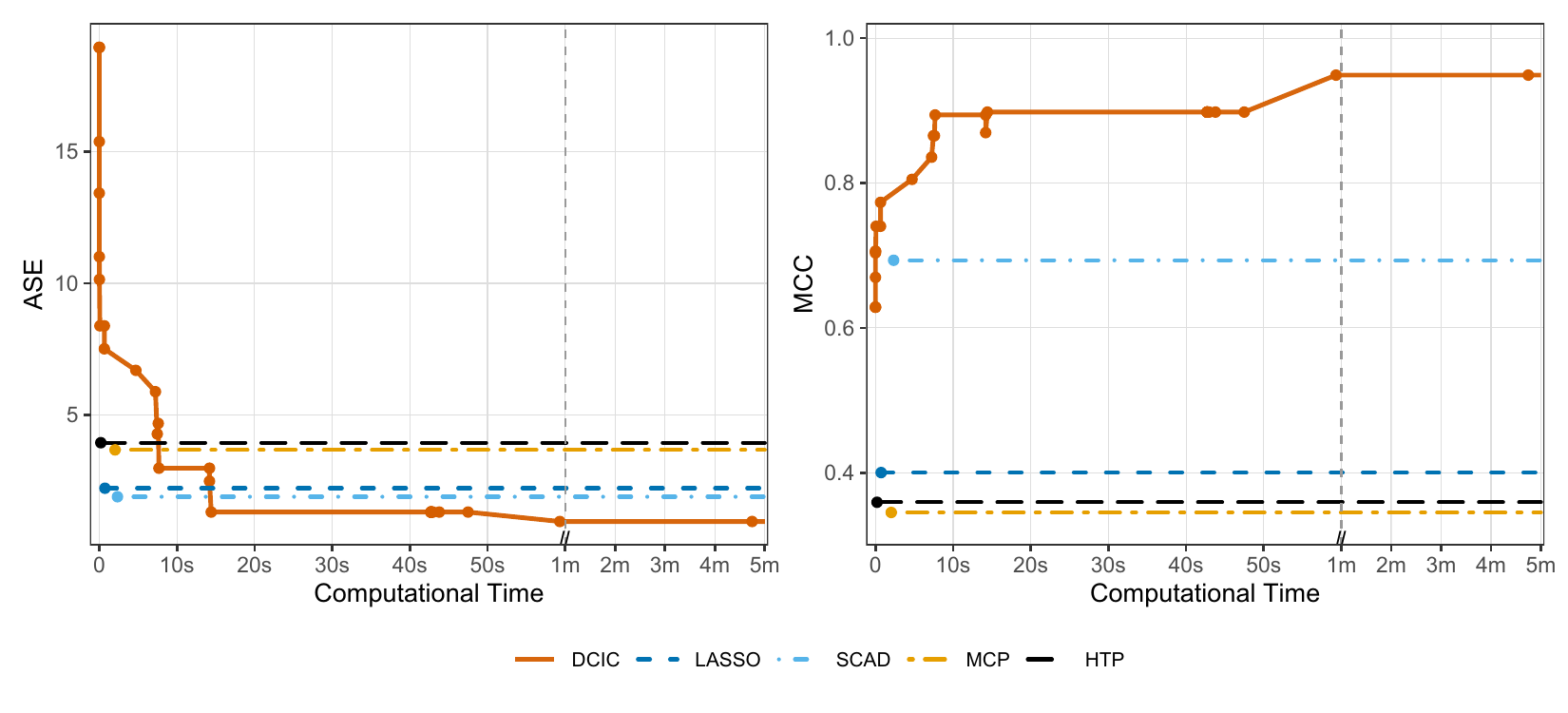}
\caption{
Computation--statistics trade-off under a well-specified, strongly
correlated design with $n=200$, $p=1000$, $s^*=20$, $\sigma=2$,
$\rho_{\mathrm{in}}=\rho_{\mathrm{out}}=0.9$, and $w=0.5$.
}
\label{fig:runtime_sparse}
\end{figure}
\begin{figure}[H]
\centering
\includegraphics[scale=0.39]{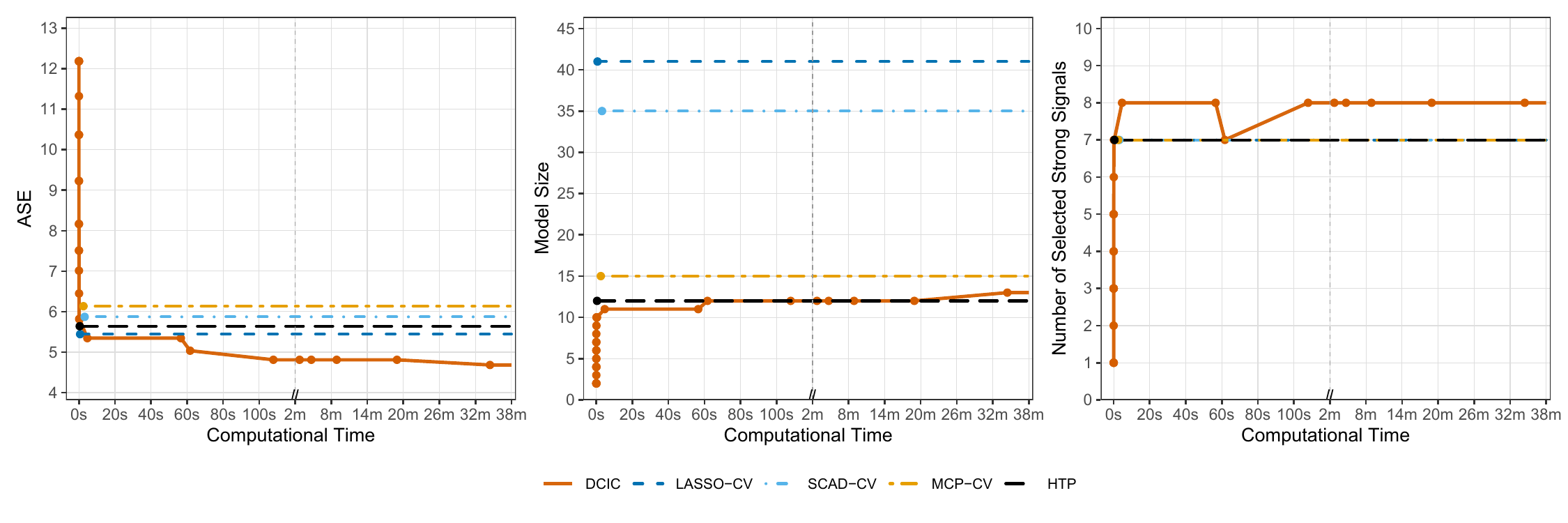}
\caption{
Computation--statistics trade-off under a misspecified dense-signal
model with $n=200$, $p=1000$, $\sigma=2$,
$\rho_{\mathrm{in}}=\rho_{\mathrm{out}}=0.7$, and $w=0.5$.
The coefficient vector contains $10$ strong, $50$ moderate, and $940$
weak signals. The variables are partitioned into ten equal-sized
blocks, each containing one strong and five moderate signals. Signal
magnitudes are drawn independently from
$\operatorname{Unif}(1,1.3)$,
$\operatorname{Unif}(0.2,0.4)$, and
$\operatorname{Unif}(0,0.1)$, respectively, with independent random
signs.
}
\label{fig:runtime_dense}
\end{figure}
The illustrations show how the DCIC path improves statistical performance
as the explored model region expands. In the well-specified setting, early
DCIC solutions are already competitive, while later path points further
improve support recovery. Under model misspecification, LASSO and SCAD tend
to select larger models, whereas MCP and HTP remain relatively sparse. Along
the DCIC path, the selected model gradually expands and ASE decreases.
Taken together, the illustrations suggest that DCIC provides more favorable
support recovery in the well-specified setting and identifies more strong
variables under model misspecification, at the cost of additional computation.
In both illustrations, DCIC attains lower ASE than the competing methods at
sufficiently small values of $\lambda$.
Section~\ref{sub:tradeoff} formalizes this pathwise computation--statistics
trade-off.

\subsection{Related literature}\label{sub:literature}
Model selection through penalized empirical criteria has long been
analyzed using oracle inequalities that characterize
approximation--estimation trade-offs
\citep{barron1999risk,birge2001gaussian,Birg2006}. This theory is closely
connected to information-theoretic coding and minimum description
length. Kraft-admissible code lengths provide uniform control over
large candidate collections and yield general oracle risk bounds
\citep{Barron1991,barron1998,yang1998,yang1999selection}.
Related coding-based penalties have also been studied for structured
sparse estimation \citep{huang2011JMLR}. In Gaussian
regression, complexity-penalized least squares further yields sharp
oracle inequalities and minimax-adaptive rates over rich model lists
\citep{verzelen2012minimax,wang2014adaptive}. These results primarily
address risk adaptation, and do not by themselves imply
selection consistency.

Selection consistency has been extensively studied through
information criteria. Classical BIC is consistent in fixed dimensions \citep{schwarz1978estimating}, but
penalties depending only on model dimension may fail to control the exponentially large model lists that
arise in high dimensions \citep{yang1998}, motivating various
BIC extensions
\citep{chen2008extended,wang2013,zhang2023}. Our goal is to combine complexity-based uniform control with selection consistency under possibly strong predictor dependence while retaining oracle risk guarantees under misspecification.

Earlier, \citet{shen2012,shen2013} developed a degree-of-separation theory for 
BSS under general designs and showed that, when the sparsity level is specified, 
the resulting separation boundary is necessary and attainable up to constants. 
Recent work further developed this separation-based perspective for BSS under 
correlated designs \citep{guo2021,zhu2024sure,roy2025ejs,gao2025}. 
In particular, \citet{guo2021} recast a closely related projection-gap condition 
as an identifiability margin and showed that, when $s^*$ is known, it is 
essentially necessary and sufficient for BSS consistency. 
\citet{roy2025ejs} clarified its dependence on design geometry and derived additional 
sufficient conditions, while \citet{zhu2024sure} established no false discoveries 
with high probability along the early BSS path for model sizes below $s^*$. 
\citet{gao2025} studied the optimal sample complexity of BSS with known and unknown 
$s^*$. When $s^*$ is unknown, their proposed criterion depends on typically 
unobservable quantities, including design-specific structure and minimum signal strength. 
Existing results mainly address all-subset selection, often with known sparsity or 
additional structural information. In contrast, DCIC accommodates unknown sparsity 
and broader model collections, including all-subset, group, and double-sparse selection, 
through a common Kraft-admissible complexity framework.

On the computational side, modern mixed-integer optimization (MIO) has substantially
improved the tractability of subset selection and its variants
\citep{bertsimas2016, bertsimas2020, Hazimeh2023, mazumder2023subset}.
Branch-and-bound algorithms produce feasible incumbents together with explicit
optimality gaps, allowing optimization error to be incorporated into statistical
guarantees. Many standard cardinality-based formulations assign the same
selection cost to all models of a given size. From a complementary perspective, DCIC assigns Kraft-admissible,
model-specific complexities and lets the tuning parameter determine a nonuniform
complexity budget. This provides a statistical coding principle for analyzing how
the retained search region and the oracle benchmark trade off along the tuning path,
rather than a replacement for modern MIO solvers.

More broadly, a growing literature studies model uncertainty and adaptation across multiple candidate classes. Bayesian model averaging addresses model uncertainty
through posterior averaging \citep{Hoeting1999}, while forecast combination aims to achieve optimal performance offered by the candidate procedures adaptively or even improve the best individual performance by aggregating them instead of selecting a single one
\citep{Yang2004}. \citet{donoho1994ideal} studied adaptive denoising over a
library of orthonormal bases and derived a near-oracle risk bound relative to the best basis. In nonparametric regression,
\citet{Sklar2013} selects basis functions from multiple libraries to
improve flexibility, but the resulting hybrid representations may
be difficult to interpret. High-dimensional additive
methods mainly focus on estimation within prescribed function classes
\citep{Tan2019}, rather than selection across heterogeneous classes.
Since different families may exhibit substantially different approximation
behavior, it remains unclear how to compare them on a common scale while
obtaining both selection and risk guarantees.

Finally, a substantial literature studies statistical guarantees along solution paths for model selection. \citet{bellec2018SLOPE} derives sharp oracle inequalities for the LASSO under misspecification and develops a Lepski-type procedure attaining minimax adaptation in well-specified sparse models. \citet{ing2020} analyzes the prediction error of the orthogonal greedy algorithm (OGA) under various sparsity regimes and selects among candidates along the full path, whereas \citet{stank2024} develops an early-stopping procedure with oracle inequalities along the nested OGA path, allowing dense signals within a linear model. Complementing these works, we develop a complexity-guided search strategy under possibly strong dependence and model misspecification, whose $\lambda$-path links the retained search size to an oracle risk benchmark and permits termination according to the computational budget.
\subsection{Notation and Organization}

Let $[p]:=\{1,\ldots,p\}$, and let $\mathcal M_n\subseteq 2^{[p]}$
denote the candidate model collection. For $S\in\mathcal M_n$, write
$s:=|S|$, let $X_S$ be the submatrix of $X$ indexed by $S$, and let
$P_S$ be the orthogonal projection onto $\operatorname{Col}(X_S)$, with
$r_S:=\operatorname{rank}(X_S)$. Define the normalized approximation error
\[
\operatorname{APP}(S)
:=
\|(\mathrm{I}_n-P_S)\mu_n\|_2^2/\sigma^2,
\]
where $\mathrm{I}_n$ is the $n\times n$ identity matrix.
Let $c_1,c_2,\ldots$ denote positive constants whose values may vary from line to line. 
%Unless otherwise stated, these constants depend only on $\alpha$ and $\zeta$, and this dependence is suppressed for simplicity.

The remainder of the paper is organized as follows.
Sections~\ref{sec:consistency} and~\ref{sec:risk} establish selection consistency
and risk bounds for DCIC under general predictor dependence.
Section~\ref{sec:multiclass} extends the framework to model-class uncertainty,
and Section~\ref{sec:strategy} develops a complexity-guided search strategy.
Section~\ref{sec:experiments} reports simulations, and Section~\ref{sec:conclusion}
provides a summary of our work. Technical proofs and additional simulations are deferred to the appendices.

\section{Model selection consistency of DCIC}\label{sec:consistency}
We study exact recovery in the well-specified setting. Assume that there
exists a true model $S^*\in\mathcal M_n$ with $s^*:=|S^*|$.
Assume that the dictionary index set $[p]$ is partitioned into nonempty
selection atoms $\mathcal U=\{U_1,\ldots,U_q\}$
and every $S\in\mathcal M_n$ is an admissible union of atoms. Let
$
\mathcal U(S):=\{U\in\mathcal U:U\subseteq S\}$
and
$
S=\bigcup_{U\in\mathcal U(S)}U.
$
Let $\mathcal U^*:=\mathcal U(S^*)$ and $q^*:=|\mathcal U^*|$.
The atoms specify the basic selection units, while $\mathcal M_n$ specifies the admissible unions of atoms.
This atom-level formulation yields a general consistency framework for structured model collections representable as admissible atom unions.
The examples below correspond to different choices of atoms and admissible unions.

We partition the models into the overfitted, underfitted, and
wrong classes,
\[
\mathcal M_{\mathrm{sup}}
:=
\{S\in\mathcal M_n:S^*\subsetneq S\},
\qquad
\mathcal M_{\mathrm{sub}}
:=
\{S\in\mathcal M_n:S\subsetneq S^*\},
\]
and
\[
\mathcal M_{\mathrm w}
:=
\mathcal M_n
\setminus
\bigl(
\mathcal M_{\mathrm{sup}}\cup\mathcal M_{\mathrm{sub}}\cup\{S^*\}
\bigr),
\]
respectively.
Define
$
\Delta^-(S)
:=
|\mathcal U^*\setminus\mathcal U(S)|$,
$
\Delta^+(S)
:=
|\mathcal U(S)\setminus\mathcal U^*|
$
and
$\widetilde r_S:=\operatorname{rank}(X_{S \cap S^*})$.
For $\ell\in [q^*]$, define
$
N(\ell)
\;:=\;
\Big|\big\{\, S\in\mathcal M_{\mathrm{sub}}:\ \Delta^-(S)=\ell \,\big\}\Big|.
$
Define 
$
u_S=\eta_n(s-s^*)+\lambda(C_S^{2/\alpha}-C_{S^*}^{2/\alpha}).$
For $S \in \mathcal{M}_\mathrm{w}$, define $B_{S} = \frac{1}{4}\lambda D_S^{2/\alpha}+\frac{1}{4}\log\log n+2(r_S-\tilde r_S),$ where
$$
D_{S} =  \Biggl\{\log\binom{q^*}{\Delta^-(S)}\binom{q-q^*}{\Delta^+(S)}\Biggr\}\wedge C_S.$$
Here $D_{S}$ uses the sharper of a local combinatorial count and the global Kraft complexity to control the multiplicity of wrong models.

\subsection{Model selection consistency of DCIC}

\begin{theorem}\label{thm:complexity}
Assume that $\eta_n \to \infty$ and that the descriptive complexity satisfies the Kraft inequality
$
\sum_{S\in\mathcal M_n} e^{-C_S}\le 1.
$
There exists a constant $\lambda_{\mathrm{sel}}$ depending only on $\alpha$ and $\zeta$ such that, for every $\lambda \geq \lambda_{\mathrm{sel}}$, if the following conditions hold:
\begin{itemize}
    \item[(\romannumeral 1)]
For every \(S\in\mathcal M_{\mathrm{sup}}\), \(C_S\ge C_{S^*}\). In addition, there
exist constants \(0<\tilde c_1<c_1\) such that, for every \(j\ge 1\),
\begin{equation}\label{eq:assumption1}
\sum_{S\in\mathcal M_{\mathrm{sup}}, \Delta^+(S)=j}
\exp\!\left\{
-c_1\lambda^{\frac{\alpha}{2}}\bigl(C_S-C_{S^*}\bigr)
\right\}
\le
\exp\!\left\{\tilde c_1(\eta_n j)^{\frac{\alpha}{2}}\right\}.
\end{equation}

    \item[(\romannumeral 2)]
    There exists a sufficiently large constant $c_2>1$ such that, for every
    $S\in\mathcal M_{\mathrm{sub}}$,
    \begin{equation}\label{eq:subAPP_thm}
    \mathrm{APP}(S)\ge c_2\Bigg(
    \eta_n(s^*-s)
    +\lambda\bigl(C_{S^*}^{\frac{2}{\alpha}}-C_S^{\frac{2}{\alpha}}\bigr)_+
    +\log^{\frac{2}{\alpha}} N\Bigl(\Delta^-(S)\Bigr)
    \Bigg).
    \end{equation}

    \item[(\romannumeral 3)]
    For every $S\in\mathcal M_w$ satisfying $2B_S>u_S$, we have
    $\mathrm{APP}(S)\ge 2(2B_S-u_S).$
\end{itemize}
Then
\[
\Pr\!\left(
\min_{S\in\mathcal M_n,\;S\neq S^*}\mathrm{DCIC}(S)>\mathrm{DCIC}(S^*)
\right)\to 1
\qquad \text{as } n\to\infty.
\]
\end{theorem}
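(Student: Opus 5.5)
We will prove the result with a union bound over the three classes $\mathcal M_{\mathrm{sup}}$, $\mathcal M_{\mathrm{sub}}$ and $\mathcal M_{\mathrm w}$, and show that each piece has vanishing probability. Write $y=X\beta^*+\varepsilon$ and use $(\mathrm I_n-P_{S^*})X\beta^*=0$. For any $S$ this gives
\[
\sigma^2\{\mathrm{DCIC}(S)-\mathrm{DCIC}(S^*)\}
=\|(\mathrm I_n-P_S)\mu_n\|_2^2+2\varepsilon^\top(\mathrm I_n-P_S)\mu_n-\varepsilon^\top(P_S-P_{S^*})\varepsilon+\sigma^2u_S .
\]
The event $\mathrm{DCIC}(S)\le \mathrm{DCIC}(S^*)$ therefore requires a quadratic form or a linear form in $\varepsilon$ to exceed a deterministic margin. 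We need one concentration tool for sub-Weibull vectors, which we derive from \citet{comminges2021}-type arguments. For a projection $P$ of rank $r$ we use
\[
\Pr\bigl(\varepsilon^\top P\varepsilon/\sigma^2\ge r+t\bigr)\le c\,e^{-c'(t)^{\alpha/2}} .
\]
For a unit vector $a$ we use $\Pr(|a^\top\varepsilon|/\sigma\ge t)\le c\,e^{-c' t^{\alpha}}$, obtained via a Hanson--Wright/Bernstein bound for sub-Weibull variables. The constant $\lambda_{\mathrm{sel}}$ is chosen so that the tail exponent $c'(\lambda C^{2/\alpha})^{\alpha/2}=c'\lambda^{\alpha/2}C$ exceeds $C$ with room to spare. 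Kraft's inequality then makes $\sum_S e^{-c'\lambda^{\alpha/2}C_S}$ small.

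\textbf{Overfitted models.} For $S\in\mathcal M_{\mathrm{sup}}$ there is no approximation term. We need $\varepsilon^\top(P_S-P_{S^*})\varepsilon>\sigma^2u_S$, and $P_S-P_{S^*}$ is a projection of rank at most $s-s^*$. Since $u_S\ge\eta_n(s-s^*)+\lambda(C_S^{2/\alpha}-C_{S^*}^{2/\alpha})$ with both parts nonnegative by (i), we get $u_S^{\alpha/2}\gtrsim (\eta_n j)^{\alpha/2}+\lambda^{\alpha/2}(C_S-C_{S^*})$, using $(a+b)^{\alpha/2}\ge c(a^{\alpha/2}+b^{\alpha/2})$ and convexity of $x\mapsto x^{2/\alpha}$. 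The failure probability for $S$ is then bounded by $\exp\{-c(\eta_n j)^{\alpha/2}-c\lambda^{\alpha/2}(C_S-C_{S^*})\}$. Summing over $S$ with $\Delta^+(S)=j$ and applying \eqref{eq:assumption1} with $\tilde c_1<c_1$ gives a bound $\exp\{-(c-\tilde c_1)(\eta_n j)^{\alpha/2}\}$. Summing over $j\ge1$ yields $o(1)$ because $\eta_n\to\infty$.

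\textbf{Underfitted models.} For $S\in\mathcal M_{\mathrm{sub}}$ we have $P_S\preceq P_{S^*}$, so the quadratic term is nonpositive and only helps. The linear term is $2\sigma\sqrt{\mathrm{APP}(S)}\,Z_S$ with $Z_S$ sub-Weibull. A failure requires $\mathrm{APP}(S)-|u_S|\le 2\sqrt{\mathrm{APP}(S)}|Z_S|$. Condition (ii) with $c_2$ large makes the left side at least $\mathrm{APP}(S)/2$. It also gives $\mathrm{APP}(S)\ge c_2\log^{2/\alpha}N(\ell)$, which also dominates the part of $u_S$ that may be negative. A failure thus forces $|Z_S|\ge\sqrt{\mathrm{APP}(S)}/4$, an event of probability at most $\exp\{-c\,\mathrm{APP}(S)^{\alpha/2}\}\le N(\ell)^{-c c_2^{\alpha/2}}\exp\{-c(\eta_n)^{\alpha/2}\}$. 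Summing over the $N(\ell)$ models with $\Delta^-(S)=\ell$ and then over $\ell\le q^*$ gives $o(1)$ once $c_2$ is large. The factor $\eta_n(s^*-s)\ge\eta_n\ell$ handles the sum over $\ell$.

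\textbf{Wrong models.} This is the expected main obstacle. Here both noise terms are present, and the quadratic form has rank up to $r_S-\tilde r_S$. To see this, $P_S-P_{S^*}\preceq P_S-P_{S\cap S^*}$, whose rank is $r_S-\tilde r_S$. We will show that, simultaneously for all wrong $S$ with high probability,
\[
\varepsilon^\top(P_S-P_{S\cap S^*})\varepsilon/\sigma^2\le B_S,\qquad 2|\varepsilon^\top(\mathrm I_n-P_S)\mu_n|/\sigma^2\le \tfrac12\mathrm{APP}(S)+B_S .
\]
For the second inequality we use $2ab\le a^2/2+2b^2$ and the linear-form tail bound. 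For the threshold $B_S$, its three components play separate roles: $2(r_S-\tilde r_S)$ absorbs the mean of the quadratic form, $\tfrac14\lambda D_S^{2/\alpha}$ pays for the multiplicity, and $\tfrac14\log\log n$ supplies the vanishing slack. The multiplicity is counted via $D_S$, which is the minimum of two bounds. When $D_S=C_S$ we sum $e^{-c\lambda^{\alpha/2}C_S}$ by Kraft. When $D_S$ is the combinatorial term we group models by $(\Delta^-,\Delta^+)$, each group having at most $\binom{q^*}{\Delta^-}\binom{q-q^*}{\Delta^+}$ elements. Each group contributes at most $(\log n)^{-c}$ times a geometric factor, and a careful split into the two cases, with grouping so that no double counting occurs, is the delicate part. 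On the resulting event the difference in DCIC is at least $\tfrac12\mathrm{APP}(S)-2B_S+u_S$. This is positive by (iii) when $2B_S>u_S$, and trivially positive otherwise. Finally, we combine the three classes with a union bound.
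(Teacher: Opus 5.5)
Your proposal is correct and follows essentially the paper's route. Both arguments use the same three-class union bound on the decomposition of $\mathrm{DCIC}(S)-\mathrm{DCIC}(S^*)$, the sub-Weibull Hanson--Wright and Bernstein tails, the same handling of wrong models (control of $\varepsilon^\top(P_S-\widetilde P_S)\varepsilon/\sigma^2$ at level $B_S$, multiplicity paid by $D_S$ through binomial counting plus Kraft), and condition (iii) to absorb $2B_S-u_S$. Dropping the nonnegative form $\varepsilon^\top(P_{S^*}-P_S)\varepsilon$ for underfitted models, and using $2ab\le a^2/2+2b^2$ instead of $\Gamma_S/\sqrt{\mathrm{APP}(S)}\ge\sqrt{2B_S}$, are only cosmetic simplifications.

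A few minor repairs are needed:
\begin{itemize}
\item Your quadratic tail $c\,e^{-c't^{\alpha/2}}$ holds only for $t\gtrsim r$, which is the only regime you use.
\item In the underfitted step, lower-bound $\mathrm{APP}(S)+u_S$ through the negative part $(-u_S)_+$, which is what (ii) controls, rather than through $\mathrm{APP}(S)-|u_S|$.
\item In the wrong-model case, define the good events with strict inequalities, so that the margin $\tfrac12\mathrm{APP}(S)-2B_S+u_S\ge 0$ yields $\mathrm{DCIC}(S)>\mathrm{DCIC}(S^*)$ rather than only $\ge$.
\end{itemize}
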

\begin{remark}\label{re:kraft} 
The three assumptions in Theorem~\ref{thm:complexity} correspond to overfitted,
underfitted, and wrong competitors relative to \(S^*\).
Assumption (\romannumeral 1) controls \(S\in\mathcal M_{\rm sup}\) through
Kraft-type summability induced by the complexity gap \(C_S-C_{S^*}\).
Assumption (\romannumeral 2) excludes \(S\in\mathcal M_{\rm sub}\) through
lower bounds on \(\mathrm{APP}(S)\), reflecting the signal loss from omitted
true atoms. Assumption (\romannumeral 3) treats \(S\in\mathcal M_w\) in two
stages: sufficiently large penalty gaps rule out models automatically, while
borderline cases require additional approximation-error separation.
All results remain valid under
$
\sum_{S\in\mathcal M_n} e^{-C_S}\le K_0
$
for any fixed finite \(K_0>0\), with constants allowed to depend on \(K_0\).
\end{remark}
\begin{corollary}\label{cor:complexity}
Let $\{C_k\}_{k=1}^q$ be nonnegative atomic code lengths satisfying
$
\sum_{k=1}^q e^{-C_k}\le 1 .
$
For each model $S\in\mathcal M_n$, define its descriptive complexity by
$
C_S:=1+\sum_{k: U_k\subseteq S} C_k .
$
For every $\lambda\ge \lambda_{\rm sel}$,
if assumptions \textnormal{(\romannumeral 2)}--\textnormal{(\romannumeral 3)} of
Theorem~\ref{thm:complexity} hold, then
\[
\Pr\left(
\min_{S\in\mathcal M_n,\ S\neq S^*}
\mathrm{DCIC}(S)>\mathrm{DCIC}(S^*)
\right)\to 1
\qquad \text{as } n\to\infty .
\]
\end{corollary}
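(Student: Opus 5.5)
The plan is to deduce the corollary directly from Theorem~\ref{thm:complexity}. We show that the additive code $C_S=1+\sum_{k:U_k\subseteq S}C_k$ satisfies the global Kraft inequality and assumption~(\romannumeral 1) automatically. Assumptions (\romannumeral 2)--(\romannumeral 3) are assumed, so once these two facts hold the theorem applies verbatim with the same $\lambda_{\rm sel}$.

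\textbf{Kraft inequality.} Every $S\in\mathcal M_n$ is a union of atoms, so distinct models correspond to distinct subsets of $[q]$. Hence
\[
\sum_{S\in\mathcal M_n}e^{-C_S}\le e^{-1}\sum_{A\subseteq[q]}\prod_{k\in A}e^{-C_k}
= e^{-1}\prod_{k=1}^q\bigl(1+e^{-C_k}\bigr)\le e^{-1}\exp\Bigl(\sum_k e^{-C_k}\Bigr)\le 1 .
\]

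\textbf{Monotonicity part of (\romannumeral 1).} For $S\in\mathcal M_{\rm sup}$ we have $\mathcal U^*\subseteq\mathcal U(S)$. Therefore
\[
C_S-C_{S^*}=\sum_{k:U_k\in\mathcal U(S)\setminus\mathcal U^*}C_k\ge 0,
\]
since the $C_k$ are nonnegative.

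\textbf{Summability condition \eqref{eq:assumption1}.} Fix $j\ge1$. Each $S$ with $\Delta^+(S)=j$ is determined by a $j$-subset $A$ of the $q-q^*$ non-true atoms, and $C_S-C_{S^*}=\sum_{k\in A}C_k$. Enlarging the sum to all such $j$-subsets and using the elementary symmetric function bound $e_j(x)\le(\sum x_k)^j/j!$ gives
\[
\sum_{\Delta^+(S)=j}\exp\{-c_1\lambda^{\alpha/2}(C_S-C_{S^*})\}
\le \frac{1}{j!}\Bigl(\sum_{k}e^{-c_1\lambda^{\alpha/2}C_k}\Bigr)^j .
\]
If $c_1\lambda^{\alpha/2}\ge1$, each term satisfies $e^{-c_1\lambda^{\alpha/2}C_k}\le e^{-C_k}$, so the inner sum is at most $1$. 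The left side is then at most $1\le\exp\{\tilde c_1(\eta_n j)^{\alpha/2}\}$ for any $\tilde c_1>0$. We therefore pick any $0<\tilde c_1<c_1$ with $c_1\ge\lambda_{\rm sel}^{-\alpha/2}$, and enlarge $\lambda_{\rm sel}$ if necessary; it still depends only on $\alpha,\zeta$.

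\textbf{Main obstacle.} The delicate point is that in Theorem~\ref{thm:complexity} the constants $c_1,\tilde c_1$ are presumably those appearing in its proof, tied to $\lambda_{\rm sel}$, rather than free choices. If $c_1$ is fixed by the proof, I would instead argue as follows. Take $\lambda_{\rm sel}$ large enough that $c_1\lambda_{\rm sel}^{\alpha/2}\ge 1$, which is a constant depending only on $\alpha,\zeta$. Alternatively, use the slack that $\eta_n\to\infty$: for $c_1\lambda^{\alpha/2}<1$ the sum is still at most
\[
\Bigl(\sum_k e^{-C_k}\Bigr)^{c_1\lambda^{\alpha/2}}\, q^{\,1-c_1\lambda^{\alpha/2}}
\]
by H\"older, raised to the power $j$, and this would need $\log q\lesssim\eta_n^{\alpha/2}$. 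The first route avoids that extra condition, so I would commit to it. With both hypotheses verified, the conclusion follows from Theorem~\ref{thm:complexity}.
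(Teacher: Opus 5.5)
Your proposal is correct and follows the paper's proof. You verify the global Kraft inequality via $e^{-1}\prod_k(1+e^{-C_k})\le e^{-1}\exp(\sum_k e^{-C_k})\le 1$, check assumption (\romannumeral 1) from nonnegativity of the atomic codes with $\lambda_{\rm sel}$ taken large enough that $c_1\lambda_{\rm sel}^{\alpha/2}\ge 1$, and then invoke Theorem~\ref{thm:complexity}.

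The one difference is minor. The paper bounds the $\Delta^+(S)=j$ sum by the full product $\prod_{k}(1+e^{-c_1\lambda^{\alpha/2}C_k})\le e$ and then needs $\eta_n\to\infty$. Your elementary-symmetric bound gives at most $1/j!\le 1$, so (A1) holds for every $n$.
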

Thus, additive atom-level codes automatically control overfitted models, leaving only the APP bounds for underfitted and wrong competitors to be verified.

\subsection{Consistency for all-subset selection}\label{sub:subset}
In this setting, each atom corresponds to a single covariate, so the selected atom set is simply \(\mathcal{U}(S)=S\) and the true atom set is \(\mathcal{U}^*=S^*\). In particular,
$
\Delta^{-}(S)=|S^*\setminus S|$
and
$\Delta^{+}(S)=|S\setminus S^*|.
$
Moreover, the counting function becomes
$
N(\ell)=\binom{s^*}{\ell}, \ell=1,\ldots,s^*.
$
We take $\eta_n=\log n$ and $C_S=s\log p$, which is Kraft-admissible
by Remark~\ref{re:kraft}.

\begin{corollary}\label{cor:all_subset}
For every $\lambda \geq \lambda_{\rm sel}$, if the following conditions hold:
\begin{itemize}
    \item[(\romannumeral 1)]
    There exists a sufficiently large constant $c_1>1$ such that, for every
    $S\in\mathcal M_{\mathrm{sub}}$,
    \begin{equation}\label{eq:suball}
    \mathrm{APP}(S)\ge
    c_1\Bigg(
    \Delta^-(S)\log n
    +\lambda\Bigl((s^*\log p)^{\frac{2}{\alpha}}-(s\log p)^{\frac{2}{\alpha}}\Bigr)
    +\log^{\frac{2}{\alpha}}\binom{s^*}{\Delta^-(S)}
    \Bigg).
    \end{equation}

    \item[(\romannumeral 2)]
    There exists a constant $\kappa>0$ such that
    $
    2B_S<u_S
    $ for all $S\in\mathcal M_w \text{ with } s>(1+\kappa)s^*.
    $
    Moreover, for every $S\in\mathcal M_w$ with $s\le (1+\kappa)s^*$,
\begin{equation}\label{eq:wrongall}
\begin{aligned}
\mathrm{APP}(S)\ge{}&
\lambda\left\{
\log
\binom{s^*}{\Delta^-(S)}
\binom{p-s^*}{\Delta^+(S)}
\right\}^{\frac{2}{\alpha}}
+2(s^*-s)\log n
\\
&+2\lambda\left\{
(s^*\log p)^{\frac{2}{\alpha}}
-(s\log p)^{\frac{2}{\alpha}}
\right\}
+\log\log n
+8(r_S-\widetilde r_S).
\end{aligned}
\end{equation}
\end{itemize}
Then
\[
\Pr\!\left(
\min_{S\in\mathcal M_n,\;S\neq S^*}\mathrm{DCIC}(S)>\mathrm{DCIC}(S^*)
\right)\to 1
\qquad \text{as } n\to\infty.
\]
\end{corollary}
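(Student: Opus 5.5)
The plan is to deduce Corollary~\ref{cor:all_subset} from Corollary~\ref{cor:complexity} (equivalently, from Theorem~\ref{thm:complexity}) by specializing to singleton atoms, $\eta_n=\log n$ and $C_S=s\log p$. We write $C_S=s\log p$ in the additive form $C_S=1+\sum_{k\in S}C_k$ of Corollary~\ref{cor:complexity}. Take $C_k=\log p$, so that $\sum_k e^{-C_k}=1$. The extra additive constant $1$ (or its absence) only changes the Kraft sum by a fixed factor, which is allowed by Remark~\ref{re:kraft}. Directly, $\sum_S p^{-s}\le\sum_s\binom{p}{s}p^{-s}\le e$. Assumption~(i) of Theorem~\ref{thm:complexity} is then automatic, and it remains to verify (ii) and (iii) of Theorem~\ref{thm:complexity}.

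\emph{Step 1: underfitted models.} For $S\in\mathcal M_{\mathrm{sub}}$ we have $s^*-s=\Delta^-(S)$ and $N(\Delta^-(S))=\binom{s^*}{\Delta^-(S)}$. Also $C_{S^*}\ge C_S$, so the positive part in \eqref{eq:subAPP_thm} is just the difference. Replacing $1+s\log p$ by $s\log p$ changes $C^{2/\alpha}$ differences only monotonically or by lower-order amounts, and these can be absorbed by enlarging $c_1$. Hence \eqref{eq:suball} is exactly \eqref{eq:subAPP_thm} with $c_2=c_1$.

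\emph{Step 2: wrong models with $s>(1+\kappa)s^*$.} The hypothesis gives $2B_S<u_S$ directly, so condition (iii) of Theorem~\ref{thm:complexity} is vacuous for these models.

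\emph{Step 3: wrong models with $s\le(1+\kappa)s^*$.} Here we need $\mathrm{APP}(S)\ge 2(2B_S-u_S)$ whenever $2B_S>u_S$. It suffices to show that the right-hand side of \eqref{eq:wrongall} dominates $4B_S-2u_S$. Expanding,
\[
4B_S-2u_S=\lambda D_S^{2/\alpha}+\log\log n+8(r_S-\widetilde r_S)+2(s^*-s)\log n+2\lambda\bigl(C_{S^*}^{2/\alpha}-C_S^{2/\alpha}\bigr).
\]
Use $D_S\le\log\binom{s^*}{\Delta^-}\binom{p-s^*}{\Delta^+}$, since here $q=p$ and $q^*=s^*$, together with monotonicity of $x\mapsto x^{2/\alpha}$. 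This bounds $4B_S-2u_S$ term by term by \eqref{eq:wrongall}.

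The main obstacle is bookkeeping the constant offset ``$1+$'' in $C_S$ under the power $2/\alpha$. For $\alpha<2$, the difference $(1+a)^{2/\alpha}-(1+b)^{2/\alpha}$ is not equal to $a^{2/\alpha}-b^{2/\alpha}$. The cleanest route is to bypass Corollary~\ref{cor:complexity} and apply Theorem~\ref{thm:complexity} directly with $C_S=s\log p$ and Kraft constant $K_0=e$ via Remark~\ref{re:kraft}. In that case condition (i) must be checked by hand. First, $C_S\ge C_{S^*}$ for $S\supsetneq S^*$. Second, with $\Delta^+=j$ there are $\binom{p-s^*}{j}\le p^j$ supersets. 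Since $C_S^{2/\alpha}-C_{S^*}^{2/\alpha}\ge(j\log p)^{2/\alpha}\ge j\log p$ when $j\log p\ge1$ and $2/\alpha\ge1$, we get
\[
\sum p^j\exp\bigl\{-c_1\lambda^{\alpha/2}(j\log p)^{2/\alpha}\cdots\bigr\}\le 1\le\exp\bigl\{\tilde c_1(j\log n)^{\alpha/2}\bigr\}
\]
once $\lambda\ge\lambda_{\mathrm{sel}}$ is large enough. Here I interpret the exponent in \eqref{eq:assumption1} as the penalty gap scaled appropriately. This completes the verification, and the conclusion follows from Theorem~\ref{thm:complexity}.
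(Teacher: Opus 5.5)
Your final route is correct and is the specialization the paper intends: apply Theorem~\ref{thm:complexity} directly with singleton atoms, $\eta_n=\log n$ and $C_S=s\log p$ (Kraft sum at most $e$, admissible by Remark~\ref{re:kraft}); condition (ii) is then \eqref{eq:suball}, condition (iii) follows from \eqref{eq:wrongall} via $D_S\le\log\binom{s^*}{\Delta^-(S)}\binom{p-s^*}{\Delta^+(S)}$, and overfitted models are handled by the same additive-code counting as in Corollary~\ref{cor:complexity}, where the offset cancels in $C_S-C_{S^*}$. The one slip is in checking (i): that condition involves $\lambda^{\alpha/2}(C_S-C_{S^*})=\lambda^{\alpha/2}j\log p$ literally, so no reinterpretation through $C^{2/\alpha}$ is needed, and the sum is at most $\binom{p-s^*}{j}p^{-c_1\lambda^{\alpha/2}j}\le 1$ once $c_1\lambda^{\alpha/2}\ge 1$, which the choice of $\lambda_{\rm sel}$ ensures.
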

\begin{remark}
In the sub-Gaussian regime with $s=s^*$,
\citet{guo2021} showed that the
identifiability margin
\[
\inf_{\substack{S\neq S^*,\,s=s^*}}
\frac{\mathrm{APP}(S)}{\Delta^+(S)}
\gtrsim \log p
\]
is sufficient and nearly necessary for BSS consistency.
Since $\Delta^+(S)=\Delta^-(S)$ when $s=s^*$, this verifies
\eqref{eq:wrongall}, whose lower bound is of order
$\Delta^+(S)\log p$. Hence, the setting studied by \citet{guo2021} is a fixed-size special case
of our APP-based separation framework.
\end{remark}

\begin{remark}
An alternative is $C_S=s\log(ep/s)+\log p$, which matches the EBIC
combinatorial penalty in order when $\alpha=2$, up to constants and
additive terms. Compared with $s\log p$, it penalizes large models less
heavily. If $s^*$ diverges, consistency typically requires
$\log(ep/s^*)\asymp\log p$ or a stronger dimension penalty through
$\eta_n$. When $s^*$ is fixed, the two choices are equivalent in
order, which is the regime considered by
\citet{chen2008extended}.
\end{remark}

\subsection{Consistency for group selection}\label{sub:group}

Let the $p$ predictors be partitioned into $m$ disjoint groups
$G_1,\ldots,G_m$, with $|G_k|=d$. Equivalently, $p = m \times d$ and $s^* = g^*d$.
A candidate model is indexed by its selected group set $G\subseteq[m]$, with
$g:=|G|$. Thus,
$
\mathcal{U}(S)=G, \mathcal{U}^*=G^*,
\Delta^+(S)=|G\setminus G^*|,
\Delta^-(S)=|G^*\setminus G|,
$
and
$N(\ell)=\binom{g^*}{\ell}$ for $\ell\in [g^*]$.
We take $\eta_n = \log n$ and  $C_S=g\log m$, which is Kraft-admissible by Remark \ref{re:kraft}.
\begin{corollary}\label{cor:group_subset}
For every $\lambda \geq \lambda_{\rm sel}$, if the following conditions hold:
\begin{itemize}
    \item[(\romannumeral 1)]
    There exists a sufficiently large constant $c_1>1$ such that, for every
    $S\in\mathcal M_{\mathrm{sub}}$,
    \begin{equation*}
    \mathrm{APP}(S)\ge
    c_1\Bigg(
    (s^*-s)\log n
    +\lambda\Bigl((g^*\log m)^{\frac{2}{\alpha}}-(g\log m)^{\frac{2}{\alpha}}\Bigr)
    +\log^{\frac{2}{\alpha}}\binom{g^*}{\Delta^-(S)}
    \Bigg).
    \end{equation*}

\item[(\romannumeral 2)]
There exists a constant \(\kappa>0\) such that
\(2B_S<u_S\) for all \(S\in\mathcal M_w\) with
\(g>(1+\kappa)g^*\).
Moreover, for every \(S\in\mathcal M_w\) with
\(g\le(1+\kappa)g^*\),
\begin{equation*}
\begin{aligned}
\mathrm{APP}(S)\ge{}&
\lambda\left\{
\log
\binom{g^*}{\Delta^-(S)}
\binom{m-g^*}{\Delta^+(S)}
\right\}^{\frac{2}{\alpha}}
+2(s^*-s)\log n
\\
&+2\lambda\left\{
(g^*\log m)^{\frac{2}{\alpha}}
-(g\log m)^{\frac{2}{\alpha}}
\right\}
+\log\log n
+8(r_S-\widetilde r_S).
\end{aligned}
\end{equation*}
\end{itemize}
Then
\[
\Pr\!\left(
\min_{S\in\mathcal M_n,\;S\neq S^*}\mathrm{DCIC}(S)>\mathrm{DCIC}(S^*)
\right)\to 1
\qquad \text{as } n\to\infty.
\]
\end{corollary}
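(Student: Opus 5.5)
Corollary~\ref{cor:group_subset} follows from Theorem~\ref{thm:complexity}, taking each group $G_k$ as an atom, so that $q=m$, $q^*=g^*$, $s=gd$ and $s^*=g^*d$. The plan has three steps: check Kraft admissibility, then verify assumptions (\romannumeral 1)--(\romannumeral 3) in turn.

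\emph{Kraft admissibility.} Write $C_S=g\log m$. Then
\[
\sum_{G\subseteq[m]} m^{-g}=\sum_{g=0}^m\binom{m}{g}m^{-g}\le \sum_{g\ge0}\frac{1}{g!}=e .
\]
This is bounded, so Remark~\ref{re:kraft} applies with $K_0=e$.

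\emph{Assumption (\romannumeral 1).} For $S\in\mathcal M_{\rm sup}$ we have $g>g^*$, hence $C_S\ge C_{S^*}$. Fix $\Delta^+(S)=j$. Then $C_S-C_{S^*}=j\log m$, and there are $\binom{m-g^*}{j}\le m^j$ such models. The left side of \eqref{eq:assumption1} is therefore at most
\[
m^{j}\,m^{-c_1\lambda^{\alpha/2} j}\le 1
\]
once $c_1\lambda_{\rm sel}^{\alpha/2}\ge1$. This gives (\romannumeral 1) with any $\tilde c_1<c_1$. Equivalently, $C_S$ is the additive atomic code $C_k=\log m$, up to the constant shift in Corollary~\ref{cor:complexity}, so overfitting is controlled automatically.

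\emph{Assumption (\romannumeral 2).} Here $\eta_n=\log n$, so $\eta_n(s^*-s)=(s^*-s)\log n$. For subsets $C_S\le C_{S^*}$, so the positive part is simply $(g^*\log m)^{2/\alpha}-(g\log m)^{2/\alpha}$. Also $N(\ell)=\binom{g^*}{\ell}$. With these substitutions, \eqref{eq:subAPP_thm} is exactly condition (\romannumeral 1) of the corollary.

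\emph{Assumption (\romannumeral 3), the main step.} Take $S\in\mathcal M_w$ with $2B_S>u_S$. The corollary's hypothesis that $2B_S<u_S$ whenever $g>(1+\kappa)g^*$ forces $g\le(1+\kappa)g^*$. We must show $\mathrm{APP}(S)\ge 4B_S-2u_S$, and expand
\[
4B_S-2u_S=\lambda D_S^{2/\alpha}+\log\log n+8(r_S-\tilde r_S)+2(s^*-s)\log n+2\lambda\bigl((g^*\log m)^{2/\alpha}-(g\log m)^{2/\alpha}\bigr).
\]
Since $D_S\le\log\binom{g^*}{\Delta^-}\binom{m-g^*}{\Delta^+}$, we get $\lambda D_S^{2/\alpha}\le\lambda\{\log\binom{g^*}{\Delta^-}\binom{m-g^*}{\Delta^+}\}^{2/\alpha}$. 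So the right side above is bounded by the right side of the assumed display in condition (\romannumeral 2), and (\romannumeral 3) holds.

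The only delicate points are bookkeeping. First, $u_S$ uses $\eta_n(s-s^*)$ with $s=gd$, which matches the $(s^*-s)\log n$ term in the corollary. Second, if some sign is negative the inequality only becomes easier, because the hypothesis is stated for all such $S$. Theorem~\ref{thm:complexity} then yields the conclusion.
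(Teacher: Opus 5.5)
Your proposal is correct and follows the route the paper intends. Taking groups as atoms, $C_S=g\log m$ is an additive atomic code whose Kraft sum is at most $e$, so Remark~\ref{re:kraft} covers it, and it automatically controls overfitted models as in Corollary~\ref{cor:complexity}. Conditions (i)--(ii) of the corollary are then exactly assumptions (ii)--(iii) of Theorem~\ref{thm:complexity} after substituting $\eta_n=\log n$ and $N(\ell)=\binom{g^*}{\ell}$, and using $D_S\le\log\binom{g^*}{\Delta^-(S)}\binom{m-g^*}{\Delta^+(S)}$.
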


%%%%%%%%%%%%%%%%%%%%%%%%%%%%%%%%%%%%%%%%%%%%%%%%%%%%%%%%%%%%%%%%%%%%%%%%%%%%
\subsection{Consistency for double-sparse structure}\label{sub:ds}

For a candidate model $S$, define
$
G:=\{k\in[m]:S\cap\mathcal G_k\neq\emptyset\},
\
g:=|G|
$ and $
t_k(S):=|S\cap\mathcal G_k|$
for $ k\in G.
$
Unlike group selection, only a sparse subset of variables may be active within each
selected group. Treating individual variables as the selection atoms while retaining
the group structure in the admissible model collection, we define
\[
C_S
:=
g\log m+s\log d
+2\log(g+1)
+2\sum_{k\in G}\log(t_k+1).
\]
The four terms encode, respectively, the selected groups, the selected variables
within active groups, the number of active groups, and the within-group subset sizes,
while ensuring Kraft admissibility.
Theorem~\ref{thm:complexity} then yields selection consistency under the
corresponding APP-separation conditions. A precise statement and proof are provided in Appendix~\ref{app:double_sparse}.

\subsection{Representative designs with strong dependence}\label{sub:example}
In this section, we verify the APP separation requirement in Corollary~\ref{cor:all_subset} and derive explicit beta-min conditions under three representative correlated designs.

\paragraph*{Scenario 1 (Equicorrelation)}
Assume that the predictors are centered and standardized, and that the sample Gram matrix
$
\widehat\Sigma := X^\top X/n = (1-\omega)\mathrm I_p + \omega \mathbf{1}_p \mathbf{1}_p^\top
$
for some \(\omega \in [0,1)\), where \(\mathbf{1}_p\) denotes the \(p\)-dimensional all-ones vector.

\paragraph*{Scenario 2 (Weak alignment with spurious predictors)}
Assume that
$
\lambda_{\min}\!\bigl(\widehat\Sigma_{S^*S^*}\bigr)\ \ge\ \lambda_{*}\ >\ 0.
$
Moreover, there exist constants $\kappa_0\geq 1$ and $\epsilon_0\in(0,1)$ such that
\[
\sup_{\substack{|S|\le (1+\kappa_0) s^*\\S^*\backslash S \neq \emptyset}}
\frac{\|P_{S} X_{S^*\setminus S}\beta^*_{S^*\setminus S}\|_2^2}{\|X_{S^*\setminus S}\beta^*_{S^*\setminus S}\|_2^2}
\ \le\ 1-\epsilon_0.
\]
This condition is closely related to quantities based on residualized signals considered by \citet{roy2025ejs} for BSS with known sparsity under correlated designs.

\paragraph*{Scenario 3 (Shadow predictors)}
Assume that \(\widehat\Sigma_{S^*S^*}=\mathrm I_{s^*}\). For each \(j\in S^*\), there exists a
shadow predictor \(\tilde j\in (S^*)^c\) such that
\(\widehat\Sigma_{j,\tilde j}=\omega\) for some \(\omega\in(0,1)\). Let
\(\widetilde S=\{\tilde j:j\in S^*\}\), with \(|\widetilde S|=s^*\). These
\(s^*\) signal--shadow pairs are mutually uncorrelated and are also uncorrelated
with all remaining predictors.

These scenarios permit strong predictor dependence under which RIP conditions may fail, while the $\mathrm{APP}(S)$-based separation requirement remains verifiable. Let
$\beta_{\min}:=\min_{j\in S^*}|\beta_j^*|$, and throughout this subsection take
$\alpha=2$ and $\eta_n=\log n$. 

\begin{theorem}\label{thm:example}
For every $\lambda \geq \lambda_{\rm sel}$, under Scenarios 1--3, there exists a constant \(c_\lambda>0\) such that the DCIC minimizer is selection consistent under the following  conditions:
$$
\textnormal{(\romannumeral 1)}\ \beta_{\min}^2 \ge c_\lambda\sigma^2 \frac{\log p+\log n}{(1-\omega)n},
\
\textnormal{(\romannumeral 2)}\ \beta_{\min}^2 \ge c_\lambda\sigma^2 \frac{\log p+\log n}{\epsilon_0\lambda_* n},
\
\textnormal{(\romannumeral 3)}\ \beta_{\min}^2 \ge c_\lambda\sigma^2 \frac{\log p+\log n}{(1-\omega^2)n}.
$$
\end{theorem}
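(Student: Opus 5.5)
The plan is to derive Theorem~\ref{thm:example} from Corollary~\ref{cor:all_subset} with $\alpha=2$, $\eta_n=\log n$ and $C_S=s\log p$. All three scenarios are handled by one reduction: a lower bound on $\mathrm{APP}(S)$ in terms of the omitted signal. Write $T:=S^*\setminus S$ and $k:=\Delta^-(S)=|T|$. Since $\mu_n=X\beta^*$ and $P_S$ annihilates the residual of $X_{S\cap S^*}\beta^*_{S\cap S^*}$, we have
\[
\mathrm{APP}(S)=\|(\mathrm I_n-P_S)X_T\beta^*_T\|_2^2/\sigma^2 .
\]
In each scenario I would show that, for $|S|\le(1+\kappa)s^*$, this quantity is at least $c\,\gamma\, n\, k\,\beta_{\min}^2/\sigma^2$. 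Here $\gamma$ is $1-\omega$, $\epsilon_0\lambda_*$, or $1-\omega^2$, respectively. For larger $S$ I would argue that the penalty gap alone gives $2B_S<u_S$.

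The scenario-specific bounds would go as follows.
\begin{itemize}
\item \textbf{Scenario 1.} The Gram matrix $\widehat\Sigma_{S\cup T}$ has smallest eigenvalue $1-\omega$. A Schur-complement argument gives $\|(\mathrm I-P_S)X_Tb\|^2\ge n(1-\omega)\|b\|^2$.
\item \textbf{Scenario 2.} The hypothesis gives $\|(\mathrm I-P_S)X_T\beta_T^*\|^2\ge\epsilon_0\|X_T\beta^*_T\|^2$. Then $\|X_T\beta^*_T\|^2\ge n\lambda_*k\beta_{\min}^2$, because $T\subseteq S^*$ and eigenvalue interlacing applies.
\item \textbf{Scenario 3.} The only way a true column $j$ can be explained by other columns in $S$ is through its shadow $\tilde j$. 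Since the signal--shadow pairs are mutually orthogonal and orthogonal to everything else, the residual of $X_j$ on $S$ has squared norm at least $n(1-\omega^2)$. The residual vectors for different $j\in T$ stay orthogonal, so the bound adds over $T$.
\end{itemize}

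Next I would check the right-hand sides of \eqref{eq:suball} and \eqref{eq:wrongall} against $kn\gamma\beta_{\min}^2$.
\begin{itemize}
\item \textbf{Underfitted $S$.} Here $s^*-s=k$. The term $\lambda\{(s^*\log p)^2-(s\log p)^2\}$ is at most $2\lambda s^*k\log^2p$, and $\log^2\binom{s^*}{k}\le k^2\log^2 s^*$. To absorb these I need per-omitted-variable costs of order $\log p+\log n$. The quadratic-in-$C_S$ terms are the worry. I expect the intended resolution is that $c_\lambda$ is allowed to absorb such factors, or that the differences are compared after normalization. I would follow the paper's convention and write the bound as $c_\lambda\, k(\log p+\log n)$, with $c_\lambda$ depending on $\lambda$, and verify carefully that the $(s\log p)^{2/\alpha}$ differences scale linearly in $k$ in the regime used.
\item \textbf{Wrong $S$ with $s\le(1+\kappa)s^*$.} The term $\log\binom{s^*}{k}\binom{p-s^*}{\Delta^+}$ is $\lesssim(k+\Delta^+)\log p$. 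The term $2(s^*-s)\log n$ equals $2(k-\Delta^+)\log n$, which is at most $2k\log n$. The rank term satisfies $8(r_S-\widetilde r_S)\le8\Delta^+$. The $\log\log n$ term is negligible once $k\ge1$; note that $k\ge1$ holds automatically for wrong models.
\end{itemize}

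The main obstacle is that $\Delta^+$ may be much larger than $k$, so the bound in terms of $k\beta_{\min}^2$ alone is insufficient. I would try to split into two cases.
\begin{itemize}
\item If $\Delta^+\le Ck$, the APP bound above suffices.
\item Otherwise, the extra selected nulls raise $s$, so $u_S$ grows by roughly $\eta_n(\Delta^+-k)+\lambda\cdot(\text{complexity gap})$. This growth dominates $2B_S=\tfrac12\lambda D_S^2+\ldots$, which makes $2B_S<u_S$ and removes the need for any APP bound.
\end{itemize}
The same penalty-dominance argument, applied when $s>(1+\kappa)s^*$ for a suitable $\kappa$, gives the first part of condition (ii). Calibrating $\kappa$ and $C$ jointly against $\lambda\ge\lambda_{\rm sel}$ is the delicate step. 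If it fails, I would fall back to proving the sufficient condition of Theorem~\ref{thm:complexity}(iii) directly, namely $\mathrm{APP}\ge2(2B_S-u_S)$, rather than the stronger bound of Corollary~\ref{cor:all_subset}.
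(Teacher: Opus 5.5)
Your skeleton matches the paper's proof. You lower-bound $\mathrm{APP}(S)=\|(\mathrm{I}_n-P_S)X_{S^*\setminus S}\beta^*_{S^*\setminus S}\|_2^2/\sigma^2$ by $\gamma n\,\Delta^-(S)\beta_{\min}^2/\sigma^2$ in each scenario, and your eigenvalue, interlacing and pairwise-orthogonality arguments are essentially the paper's. You then verify Corollary~\ref{cor:all_subset}. The verification step, however, rests on a misreading. With $\alpha=2$ the exponent $2/\alpha$ equals $1$, not $2$. So for underfitted $S$ the right-hand side of \eqref{eq:suball} is $c_1\{k\log n+\lambda k\log p+\log\binom{s^*}{k}\}\le c_\lambda k(\log p+\log n)$ at once. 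Your bounds $2\lambda s^*k\log^2p$ and $k^2\log^2 s^*$ treat the exponent as $2$. The remedy you suggest for them, letting $c_\lambda$ absorb the extra factors, would not be valid: those factors contain $s^*\log p$, which may diverge, whereas $c_\lambda$ must be a constant. As written that step fails; with the correct exponent it is trivial.

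The ``main obstacle'' for wrong models is likewise an artifact of your bookkeeping. You dropped the term $2\lambda\{(s^*\log p)-(s\log p)\}$ and replaced $2(s^*-s)\log n$ by $2k\log n$. This discards exactly the negative terms that pay for extra null variables. Suppose $s>s^*$ and set $d:=s-s^*=\Delta^+(S)-k$. Then $\lambda(k+\Delta^+)\log p-2\lambda d\log p=\lambda(2k-d)\log p$. Hence the right-hand side of \eqref{eq:wrongall} is at most $2\lambda k\log p+8k+\log\log n-d(\lambda\log p+2\log n-8)\le c_\lambda k(\log p+\log n)$, however large $\Delta^+$ is. If instead $s\le s^*$, then $\Delta^+\le k$ directly.

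So you need neither a split on $\Delta^+$ versus $Ck$ nor a retreat to Theorem~\ref{thm:complexity}(iii). The only penalty-dominance step is for $s>2s^*$, i.e.\ $\kappa=1$. Using $D_S\le C_S=s\log p$ and $r_S-\widetilde r_S\le s$, one gets $2B_S-u_S\le-(s-s^*)\log n+\lambda(s^*-s/2)\log p+4s+\tfrac12\log\log n<0$ for large $n$.

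Fixing $\kappa=1$ also matters for Scenario 2. Its APP bound is available only for $|S|\le(1+\kappa_0)s^*$ with $\kappa_0\ge1$. Your split ``$\Delta^+\le Ck$'' would call for APP bounds on models of size up to $Cs^*$. Scenario 2 does not supply these unless you first exclude $s>2s^*$ as above.
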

When $\omega$, $\lambda_*$, and $\epsilon_0$ are treated as constants,
the beta-min conditions in Theorem~\ref{thm:example} are of order
$\sigma^2(\log p+\log n)/n$. Whenever $\log n=O(\log p)$, this
reduces to the optimal beta-min scale $\sigma^2\log p/n$ for
selection consistency in subset selection
\citep{butucea2018variable}. When $\omega\asymp 1-1/s^*$, the RIP and
SRC conditions fail in Scenarios 1 and 3, so RIP/SRC-based consistency
analyses are no longer applicable in these strongly correlated regimes
\citep{zhang2010,wang2013,Zhu202014241,zhang2025}. By contrast, the
$\mathrm{APP}(S)$-based separation conditions remain verifiable. In
particular, Theorem~\ref{thm:example} yields
$\beta_{\min}^2\ge
c\sigma^2s^*(\log p+\log n)/n$ for Scenarios 1 and 3, which simplifies
to $c\sigma^2s^*\log p/n$ when $\log n=O(\log p)$. Therefore, DCIC
remains selection consistent in these strongly correlated regimes.

\section{Oracle risk bounds under model misspecification}\label{sec:risk}
In this section, we establish nonasymptotic risk bounds for DCIC under model misspecification. Unlike the consistency analysis in Section \ref{sec:consistency}, we do not require the existence of a distinguished true model $S^*$. This risk perspective is particularly relevant when the candidate collection is misspecified or when strong dependence makes exact model recovery too stringent.

Let $\widehat S$ denote the model selected by DCIC, and for each model $S$,
let $\widehat\mu_S=P_Sy$ denote the least-squares estimator of $\mu_n$.
Define its average squared error by
$
\mathrm{ASE}(S):=\frac{1}{n}\|\mu_n-\widehat\mu_S\|_2^2.
$ Denote
\begin{equation*}\label{eq:Rn_def}
R_n(\mu_n;S)
=\frac{1}{n}\|(\mathrm{I}_n-P_S)\mu_n\|_2^2
+\frac{(s\eta_n-r_S)\sigma^2}{n}
+\frac{\lambda\sigma^2 C_S^{2/\alpha}}{n},
\end{equation*}
and $
R_n^*(\mu_n;\mathcal M_n)=\min_{S\in\mathcal M_n} R_n(\mu_n;S).
$
The quantity $R_n^*(\mu_n;\mathcal M_n)$ characterizes the optimal trade-off between approximation error and model
complexity over the candidate set $\mathcal M_n$ and is referred to as the index of resolvability in the
literature \citep{Barron1991, barron1998, yang1999selection}.

\begin{theorem}\label{thm:risk}
Assume that
$
\sum_{S\in\mathcal M_n}e^{-C_S}\le 1
\ \text{and}\
\eta_n\ge 2.
$
Then there exist positive constants
$\lambda_{\mathrm{risk}}$, $c_{1,\alpha,\zeta}$, and
$c_{2,\alpha,\zeta}$, depending only on $\alpha$ and $\zeta$, such
that, for every $\lambda\ge\lambda_{\mathrm{risk}}$ and every
$\delta\in(0,1)$, with probability at least $1-\delta$,
\[
\mathrm{ASE}(\widehat S)
\le
2R_{n}^*(\mu_n;\mathcal M_n)
+
c_{1,\alpha,\zeta}
\frac{\sigma^2}{n}
\left\{\log\left(4/\delta\right)\right\}^{2/\alpha}.
\]
Consequently,
\[
\mathbb E\!\left\{
\mathrm{ASE}(\widehat S)
\right\}
\le
2R_{n}^*(\mu_n;\mathcal M_n)
+
c_{2,\alpha,\zeta}\frac{\sigma^2}{n}.
\]
\end{theorem}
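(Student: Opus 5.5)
The argument is the standard comparison between the selected model $\widehat S$ and an arbitrary fixed competitor $S$, followed by uniform control of the noise cross-terms through the Kraft inequality. Write $\widehat\mu_S=P_Sy$. Expanding $\mathrm{RSS}(S)=\|(\mathrm I_n-P_S)\mu_n\|_2^2+2\langle(\mathrm I_n-P_S)\mu_n,\varepsilon\rangle+\|(\mathrm I_n-P_S)\varepsilon\|_2^2$ and using $n\,\mathrm{ASE}(S)=\|(\mathrm I_n-P_S)\mu_n\|_2^2+\|P_S\varepsilon\|_2^2$, the inequality $\mathrm{DCIC}(\widehat S)\le\mathrm{DCIC}(S)$ becomes
\[
\frac{n\,\mathrm{ASE}(\widehat S)}{\sigma^2}\le \mathrm{APP}(S)+\eta_n s+\lambda C_S^{2/\alpha}-\frac{\|P_S\varepsilon\|_2^2}{\sigma^2}+\frac{2\|P_{\widehat S}\varepsilon\|_2^2}{\sigma^2}+\frac{2\langle(P_{\widehat S}-P_S)\mu_n,\varepsilon\rangle}{\sigma^2}... -\eta_n\widehat s-\lambda C_{\widehat S}^{2/\alpha}.
\]
The elided terms are an exact bookkeeping of the cross-terms.

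The random quantities to control are the quadratic forms $\|P_T\varepsilon\|_2^2/\sigma^2$ and the linear forms $\langle v_T,\varepsilon\rangle/\sigma$, with $v_T=(\mathrm I_n-P_T)\mu_n/\|(\mathrm I_n-P_T)\mu_n\|_2$ for $T=\widehat S$ and $T=S$. The linear form in $S$ is absorbed by $2ab\le a^2/2+2b^2$. The $\widehat S$ linear cross-term yields $\tfrac12\mathrm{APP}(\widehat S)$, which is absorbed by the left-hand side; this is the source of the factor $2$ in front of $R_n^*$. The term $-\|P_S\varepsilon\|^2$ contributes, in expectation, the $-r_S$ in $R_n$.

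For a fixed $T$, I would use a sub-Weibull concentration inequality for linear forms: $\Pr(|\langle v,\varepsilon\rangle|/\sigma>t)\le 2\exp\{-c(t/\zeta)^\alpha\}$ for unit $v$, valid for $t$ beyond a constant. I would pair it with a Hanson--Wright type bound for sub-Weibull vectors: $\|P_T\varepsilon\|_2^2/\sigma^2\le r_T+c\sqrt{r_T x}+c\,x^{2/\alpha}$ with probability $1-e^{-x}$. To make these uniform over $\mathcal M_n$, set $x_T=C_T+\log(4/\delta)$ and take a union bound; the Kraft inequality $\sum_T e^{-C_T}\le1$ makes the total failure probability at most $\delta$.

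On the resulting event, the deviation of the $\widehat S$ terms is at most $c\{C_{\widehat S}+\log(4/\delta)\}^{2/\alpha}\le c' C_{\widehat S}^{2/\alpha}+c'\log^{2/\alpha}(4/\delta)$, because $(a+b)^{2/\alpha}\le 2^{2/\alpha}(a^{2/\alpha}+b^{2/\alpha})$. The $\sqrt{r x}$ cross-term splits as $r+x$. Its $x^{2/\alpha}$ part is absorbed into $\lambda C_{\widehat S}^{2/\alpha}$ once $\lambda\ge\lambda_{\mathrm{risk}}$, with the constant depending only on $\alpha$ and $\zeta$. Its $r_{\widehat S}$ part, which carries a coefficient of about $2$ from $2\|P_{\widehat S}\varepsilon\|^2$, is absorbed by $\eta_n\widehat s\ge 2\widehat s\ge 2r_{\widehat S}$; this is exactly where the assumption $\eta_n\ge2$ is needed.

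The same uniform bounds at the fixed competitor $S$ produce the terms $\eta_n s$, $\lambda C_S^{2/\alpha}$ and $-r_S$ up to constant factors. Taking the minimum over $S$ then gives $\mathrm{ASE}(\widehat S)\le 2R_n^*+c\,\sigma^2 n^{-1}\log^{2/\alpha}(4/\delta)$.

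The expectation bound follows by integrating the tail: set $\delta=4e^{-u}$ and note that $\int_0^\infty \Pr(Z>c u^{2/\alpha})\,du$-type integrals are finite constants.

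\textbf{Main obstacle.} The delicate step is the quadratic form for the random $\widehat S$. A sharp Hanson--Wright inequality is needed for sub-Weibull entries when $\alpha<1$, where the entries are not sub-exponential. I would first try the moment-based results for sub-Weibull quadratic forms (in the spirit of \citet{comminges2021}), which give tails of the form $\exp\{-c\min(x^2/r,\,x^{\alpha/2})\}$. The deviation level must be of order $x^{2/\alpha}$, and this is exactly what dictates the exponent $2/\alpha$ on $C_S$. Getting the leading coefficient of $r_{\widehat S}$ to at most $\eta_n$ requires careful use of the constants in the ab-inequalities so that the factor $2$ survives.
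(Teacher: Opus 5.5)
The step you flag as delicate does not close as you describe it, and choosing the constants in the $ab$-inequalities more carefully cannot fix it. Your other ingredients match the paper's: the Kraft-weighted union bound with $x_T=C_T+\log(4/\delta)$, the sub-Weibull linear and Hanson--Wright bounds, $\lambda\ge\lambda_{\mathrm{risk}}$ to absorb the $C_T^{2/\alpha}$ deviations, and tail integration.

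The problem is as follows. On your event, $2\|P_{\widehat S}\varepsilon\|_2^2/\sigma^2\le 2r_{\widehat S}+2c\sqrt{r_{\widehat S}x}+2c\,x^{2/\alpha}$. Because $\sqrt{r x}$ is not bounded by any function of $x$ alone when $r_{\widehat S}\gg x$, every Young split leaves $(2+\epsilon)r_{\widehat S}$ for some $\epsilon>0$; your split $\sqrt{rx}\le r+x$ leaves $(2+2c)r_{\widehat S}$. Against this you only have $\eta_n\widehat s\ge 2r_{\widehat S}$, which holds with equality when $\eta_n=2$ and $X_{\widehat S}$ has full column rank. The excess also cannot be charged to $\lambda C_{\widehat S}^{2/\alpha}$, because the Kraft inequality gives no lower bound on $C_S$ in terms of $s$ (a one-model list with $C_S=0$ is admissible).

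The missing idea is that $\eta_n\ge 2$ can only be spent at the level of means. The fluctuation $\sqrt{r_{\widehat S}x}$ must be absorbed multiplicatively by something you keep on the left. Your own scheme allows this. After moving $\theta\,\mathrm{APP}(\widehat S)$ to the left, the left side equals $(1-\theta)n\,\mathrm{ASE}(\widehat S)/\sigma^2+\theta\|P_{\widehat S}\varepsilon\|_2^2/\sigma^2$. If you keep the second piece instead of discarding it, the right side carries only $(2-\theta)\|P_{\widehat S}\varepsilon\|_2^2/\sigma^2$. The resulting slack $\theta r_{\widehat S}$ then absorbs $(2-\theta)c\sqrt{r_{\widehat S}x}$.

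The paper organizes the same idea differently:
\begin{itemize}
\item It writes $\sigma^2\mathrm{DCIC}(S)=nR_n(\mu_n;S)+2\mathrm{rem}_1(S)+\mathrm{rem}_2(S)+\varepsilon^\top\varepsilon$, with centered remainders $\mathrm{rem}_1(S)=\varepsilon^\top(\mathrm I_n-P_S)\mu_n$ and $\mathrm{rem}_2(S)=\sigma^2r_S-\varepsilon^\top P_S\varepsilon$.
\item It bounds both remainders uniformly by $\tfrac1{12}nR_n(\mu_n;S)+c\,\sigma^2\log^{2/\alpha}(4/\delta)$. The $\sqrt{r_Sx}$ term goes into $\tfrac1{12}r_S\le\tfrac1{12}(\eta_n s-r_S)$.
\item From optimality it deduces $nR_n(\mu_n;\widehat S)\le\tfrac53 nR_n^*+\cdots$.
\item Only then does it use $\eta_n\widehat s-2r_{\widehat S}\ge 0$, deterministically, in $n\,\mathrm{ASE}(\widehat S)\le nR_n(\mu_n;\widehat S)+|\mathrm{rem}_2(\widehat S)|$.
\end{itemize}

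A second, smaller problem is the leading constant. You absorb $\tfrac12\mathrm{APP}(\widehat S)$ on the left and use $2ab\le a^2/2+2b^2$ at the competitor. That gives $\tfrac{3/2}{1/2}=3$ in front of $\mathrm{APP}(S)$, not $2$. For the same reason, saying the competitor terms appear ``up to constant factors'' is too loose for a bound whose leading constant is exactly $2$.

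To get $2$, each competitor-side term may be inflated by at most $1+\epsilon$. You get this by using $\epsilon r_S\le\epsilon(\eta_n s-r_S)$ and taking $\lambda_{\mathrm{risk}}$ large enough, depending on $\epsilon$, to absorb the $c_\epsilon C_S^{2/\alpha}$ deviation. You then need $(1+\epsilon)/(1-\theta)\le 2$, for example $\epsilon=\theta=\tfrac14$, which gives $\tfrac53$. The paper's $\tfrac1{12}$-splits give $\tfrac53\cdot\tfrac{13}{12}=\tfrac{65}{36}<2$.
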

Theorem~\ref{thm:risk} shows that the DCIC risk is controlled by the best
approximation--complexity trade-off over $\mathcal M_n$. We specialize this
bound to well-specified sparse models with $\eta_n=\log n$ and $\alpha=2$.
For all-subset selection, taking $C_S=s\log(ep/s)+\log p$ and
$\log n=O(\log(ep/s^*))$ gives
\begin{equation*}\label{eq:ase_subset_special}
\mathbb E\!\left\{\mathrm{ASE}(\widehat S)\right\}
\le
c\,\sigma^2\frac{s^*\log(ep/s^*)}{n},
\end{equation*}
which is minimax optimal over $s^*$-sparse linear models without design
assumptions \citep{raskutti2011minimax,wang2014adaptive}.
For group selection, $C_S=g\log(em/g)+\log m$ yields
\[
\mathbb E\!\left\{\mathrm{ASE}(\widehat S)\right\}
\le
c\,\sigma^2
\frac{g^*\log(em/g^*)+g^*d\log n}{n},
\]
matching the minimax rate up to a logarithmic factor in the within-group
estimation term \citep{tsy2011}.
For double-sparse selection, take
$C_S=g\log(em/g)+s\log(egd/s)+2\log(g+1)
+2\sum_{k\in G}\log(t_k+1)$.
If
$s^*\log n\lesssim g^*\log(em/g^*)+s^*\log(eg^*d/s^*)$, then
\[
\mathbb E\!\left\{\mathrm{ASE}(\widehat S)\right\}
\le
c\,\sigma^2
\frac{g^*\log(em/g^*)+s^*\log(eg^*d/s^*)}{n},
\]
which is minimax optimal for double-sparse structures
\citep{cai2019sparse,li2024}.

\section{Model-class uncertainty}
\label{sec:multiclass}
DCIC extends to heterogeneous candidate classes by adding a $\log M$
class-identification cost while preserving Kraft admissibility.

\subsection{Multi-class extension of DCIC}
\label{subsec:multiclass_dcic}

Let $\{\mathcal M_n^{(m)}\}_{m=1}^M$ be candidate model classes with
Kraft-admissible complexities $C_S^{(m)}$, that is,
\[
\sum_{S\in\mathcal M_n^{(m)}} \exp\{-C_S^{(m)}\}\le 1,
\qquad m=1,\ldots,M.
\]
We aggregate candidates across classes using class--model pairs \((m,S)\) and define
\begin{equation*}
\widetilde{\mathcal{M}}_n
:= \bigcup_{m=1}^M \big\{(m,S): S\in\mathcal{M}_n^{(m)}\big\}.
\label{eq:Mn_union}
\end{equation*}
For each pair $(m,S)$, define its descriptive complexity as
\begin{equation}
C_{(m,S)}
:=
C_S^{(m)}+\log M,
\qquad
(m,S)\in\widetilde{\mathcal{M}}_n.
\label{eq:Cms_def}
\end{equation}
By the Kraft inequalities,
\[
\sum_{(m,S)\in\widetilde{\mathcal{M}}_n}
\exp\{-C_{(m,S)}\}
=
\frac{1}{M}
\sum_{m=1}^{M}
\sum_{S\in\mathcal{M}_n^{(m)}}
\exp\{-C_S^{(m)}\}
\le 1.
\]
Thus, the aggregated complexity remains Kraft-admissible. The additional
$\log M$ term represents the cost of identifying one class among $M$
candidate classes.

For each $(m,S)\in\widetilde{\mathcal{M}}_n$, let
$\mathrm{RSS}(m,S)$ and $s_{(m,S)}$ denote its residual sum of squares and number of selected variables, respectively. The multi-class DCIC is
\begin{equation}
\mathrm{DCIC}(m,S)
:= \mathrm{RSS}(m,S)/\sigma^2 + \eta_n\, s_{(m,S)} + \lambda\, C_{(m,S)}^{\,2/\alpha},
\qquad (m,S)\in\widetilde{\mathcal{M}}_n,
\label{eq:dcic_multiclass}
\end{equation}
and we select
\begin{equation}
(\widehat m,\widehat S)\in \arg\min_{(m,S)\in\widetilde{\mathcal{M}}_n}\mathrm{DCIC}(m,S).
\label{eq:select_multiclass}
\end{equation}

Analogously to the single-class decomposition of \(\mathcal{M}_n\) in Section~\ref{sec:consistency}, define
\[
\begin{aligned}
\widetilde{\mathcal{M}}_{\mathrm{sup}}
&:=\bigl\{(m^*,S) \in \widetilde{\mathcal{M}}_n: S^*\subset S,\ S\neq S^*\bigr\},\\
\widetilde{\mathcal{M}}_{\mathrm{sub}}
&:=\bigl\{(m^*,S) \in \widetilde{\mathcal{M}}_n: S\subset S^*,\ S\neq S^*\bigr\}.
\end{aligned}
\]
These are the overfitted and underfitted pairs within the true class $m^*$, while
\[
\widetilde{\mathcal{M}}_{\mathrm{w}}
:=
\widetilde{\mathcal{M}}_n\setminus
\bigl(\widetilde{\mathcal{M}}_{\mathrm{sup}}\cup\widetilde{\mathcal{M}}_{\mathrm{sub}}\cup\{(m^*,S^*)\}\bigr)
\]
contains all remaining wrong pairs, including those with $m\neq m^*$.

Applying Theorem \ref{thm:complexity} to the aggregated list of pairs gives the
multi-class consistency result. A precise formulation is given
in Appendix~\ref{app:multiclass_consistency}.

\begin{corollary}[Multi-class selection consistency]
\label{cor:multiclass_consistency}
Suppose there exists a unique true pair $(m^*,S^*)\in\widetilde{\mathcal{M}}_n$, where $m^*$ denotes the true model class and $S^*$ the true model within class $m^*$. For every $\lambda \geq \lambda_{\rm sel}$, assume that the conditions in Theorem~\ref{thm:complexity} hold on the aggregated candidate list $\widetilde{\mathcal{M}}_n$, indexed by pairs \((m,S)\) with complexity
\(C_{(m,S)}\). Then
\[
\Pr\bigl\{(\widehat m,\widehat S)=(m^*,S^*)\bigr\}\to 1.
\]
\end{corollary}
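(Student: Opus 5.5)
The plan is to reduce Corollary~\ref{cor:multiclass_consistency} directly to Theorem~\ref{thm:complexity} by recasting the aggregated list $\widetilde{\mathcal M}_n$ as a single-class collection. I will build one enlarged dictionary. For each class $m$, take a disjoint copy $[p_m]^{(m)}$ of its index set, carrying its design $X^{(m)}$ and its atoms $\mathcal U^{(m)}$. The combined dictionary is $\bigsqcup_m [p_m]^{(m)}$, with design matrix $[X^{(1)},\ldots,X^{(M)}]$ and atom partition $\bigsqcup_m \mathcal U^{(m)}$. Each pair $(m,S)$ is identified with the atom union $S^{(m)}\subseteq[p_m]^{(m)}$. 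Under this identification, $\mathrm{RSS}(m,S)$, $s_{(m,S)}$ and the projection $P_{S^{(m)}}$ coincide with their single-class counterparts. Hence $\mathrm{DCIC}(m,S)$ in \eqref{eq:dcic_multiclass} is exactly the single-class DCIC of $S^{(m)}$ with complexity $C_{(m,S)}$.

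Next I check the hypotheses of Theorem~\ref{thm:complexity} for this single collection $\mathcal M_n':=\{S^{(m)}\}$. Kraft admissibility $\sum e^{-C_{(m,S)}}\le 1$ was verified right after \eqref{eq:Cms_def}. The true model is $S^{*(m^*)}$, and it lies inside block $m^*$. Any union containing it, or contained in it, must therefore lie in block $m^*$, because pairs with $m\ne m^*$ have disjoint index sets. It follows that the overfitted, underfitted and wrong classes of $\mathcal M_n'$ are precisely $\widetilde{\mathcal M}_{\mathrm{sup}}$, $\widetilde{\mathcal M}_{\mathrm{sub}}$ and $\widetilde{\mathcal M}_{\mathrm w}$. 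The quantities $\Delta^\pm$, $N(\ell)$, $u_S$, $D_S$, $B_S$ and $\widetilde r_S$ are computed in the combined atom system. For a cross-class pair this gives $\widetilde r_S=0$ and $\Delta^-=q^*$. Assumptions (i)--(iii) of the corollary are, by hypothesis, exactly the conditions of Theorem~\ref{thm:complexity} stated on this aggregated list. Uniqueness of the true pair ensures $S^*$ is well defined in $\mathcal M_n'$.

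Theorem~\ref{thm:complexity} then gives, with probability tending to one, $\min_{(m,S)\neq(m^*,S^*)}\mathrm{DCIC}(m,S)>\mathrm{DCIC}(m^*,S^*)$. This makes $(m^*,S^*)$ the unique minimizer in \eqref{eq:select_multiclass}, and therefore $\Pr\{(\widehat m,\widehat S)=(m^*,S^*)\}\to1$.

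The main obstacle is whether the proof of Theorem~\ref{thm:complexity} really only uses the abstract structure: Kraft summability, the atom counts, and the noise deviation bounds for the projections $P_S$ and $P_S-P_{S\cap S^*}$. If it tacitly used $\mathcal M_n\subseteq 2^{[p]}$ for one fixed design, that would matter for cross-class pairs. There the relevant noise term involves $P_{S^{(m)}}$ against $\mu_n=X^{(m^*)}\beta^*$, with no shared variables. I expect the deviation step, which controls $\varepsilon^\top(P_S-P_{S^*})\varepsilon$ and $\varepsilon^\top(\mathrm I_n-P_S)\mu_n$ uniformly via sub-Weibull tails weighted by $e^{-C_S}$, to go through verbatim. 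It depends only on ranks and complexities, and the disjoint-copy construction makes $\widetilde r_S$ and $r_S-\widetilde r_S$ well defined. I would recheck that step explicitly for pairs with $m\neq m^*$.
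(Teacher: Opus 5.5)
Your proposal is correct. It reaches the conclusion by a different mechanism than the paper's appendix proof. That proof never builds a combined dictionary. Instead it re-runs Lemma~\ref{thm:main} directly on the pair-indexed list:
\begin{itemize}
\item it takes (A1)--(A2) to be the pair-indexed forms of conditions (i)--(ii);
\item it re-verifies (A3a)--(A3b) with class-wise projections $P_S^{(m)}$, letting $\widetilde P_S^{(m)}$ project onto the intersection of column spaces $V_S^{(m)}\cap V_{S^*}^{(m^*)}$;
\item it counts atoms ($\widetilde\Delta^{\pm}$, $q=|\mathcal U^{(m^*)}|$) only within the true class;
\item it sets $D_{(m,S)}:=C_{(m,S)}$ for cross-class wrong pairs, whose total contribution is then bounded by the Kraft inequality alone.
\end{itemize}

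Your disjoint-union design $[X^{(1)},\ldots,X^{(M)}]$ turns the main text's informal ``apply Theorem~\ref{thm:complexity} to the aggregated list'' into a literal application, with nothing to recheck. It also removes the worry in your last paragraph. The proofs of Lemma~\ref{thm:main} and Theorem~\ref{thm:complexity} use only three facts: $\mu_n\in\operatorname{Col}(X_{S^*})$, $P_{S^*}-\widetilde P_S\succeq 0$, and that $P_S-\widetilde P_S$ is a projection of rank $r_S-\widetilde r_S$. All three hold for any fixed design, including the concatenated one.

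The price is that your hypotheses are cruder on cross-class pairs.
\begin{itemize}
\item Your embedding forces $\widetilde r_{(m,S)}=0$, so $B_{(m,S)}$ carries the full $2r_{(m,S)}$ even when two families share a common span.
\item Your combinatorial term in $D$ counts all atoms across all classes rather than those of class $m^*$.
\end{itemize}
Both effects enlarge $B$, so your version of condition (iii) is more demanding than the paper's. Since the main-text statement only says ``the conditions of Theorem~\ref{thm:complexity} on the aggregated list,'' your reading is still a legitimate proof of it.

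One small technical fix is needed. The map $(m,S)\mapsto S^{(m)}$ is not injective when several classes contain the empty model. Those copies collapse to a single set and are classified as underfitted rather than wrong. Index $\mathcal M_n'$ as a family (a multiset) rather than a set. This is harmless because the proof uses only union bounds, but your claim that the three classes coincide exactly with $\widetilde{\mathcal M}_{\mathrm{sup}}$, $\widetilde{\mathcal M}_{\mathrm{sub}}$, $\widetilde{\mathcal M}_{\mathrm{w}}$ should be qualified accordingly.
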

Corollary~\ref{cor:multiclass_consistency} concerns exact pair recovery only when the true class-model pair is identifiable. When several classes yield statistically indistinguishable representations, risk adaptation is the more natural target.

For each $(m,S)\in\widetilde{\mathcal{M}}_n$, let $\mathrm{ASE}(m,S)$ be the class-wise extension of $\mathrm{ASE}(S)$
in Section \ref{sec:risk}, interpreted within class $m$. 
Similarly, let $R_n(\mu_n;m,S)$ be the class-wise extension of $R_n(\mu_n;S)$ with $C_S$
replaced by $C_{(m,S)}$, and extend the index of resolvability to the multi-class setting by
\[
R_n^*(\mu_n;\widetilde{\mathcal{M}}_n):=\min_{(m,S)\in\widetilde{\mathcal{M}}_n} R_n(\mu_n;m,S).
\] 
Under the same aggregated formulation, the risk bound in Theorem~\ref{thm:risk} extends directly to the multi-class setting.
\begin{corollary}[Multi-class risk bound]
\label{cor:multiclass_risk}
Assume that
$
\sum_{(m,S)\in\widetilde{\mathcal M}_n}
e^{-C_{(m,S)}}\le 1
\ \text{and}\
\eta_n\ge 2.
$
Then, there exist positive constants
$\lambda_{\rm risk}$ and $c_{1,\alpha,\zeta}$ such that whenever
$
\lambda\ge \lambda_{\rm risk},
$
we have
\begin{equation}
\label{eq:multiclass_risk_oracle}
\mathbb E\!\left\{
\mathrm{ASE}(\widehat m,\widehat S)
\right\}
\le
2R_n^*(\mu_n;\widetilde{\mathcal M}_n)
+
c_{1,\alpha,\zeta}\frac{\sigma^2}{n}.
\end{equation}
\end{corollary}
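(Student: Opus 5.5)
The plan is to reduce Corollary~\ref{cor:multiclass_risk} to Theorem~\ref{thm:risk} applied to a single enlarged candidate list. The first step is to index the aggregated collection $\widetilde{\mathcal M}_n$ by pairs $(m,S)$. For each pair we set
\[
P_{(m,S)} := \text{the projection onto } \operatorname{Col}\bigl(X^{(m)}_S\bigr),
\]
where $X^{(m)}$ is the design (dictionary) of class $m$. We also write $r_{(m,S)}=\operatorname{rank}(X^{(m)}_S)$ and take $C_{(m,S)}$ from \eqref{eq:Cms_def}. With these definitions, $\mathrm{RSS}(m,S)=\|(\mathrm{I}_n-P_{(m,S)})y\|_2^2$, and the multi-class criterion \eqref{eq:dcic_multiclass} has exactly the form of the single-class DCIC, up to relabeling. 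The same holds for $\mathrm{ASE}(m,S)=n^{-1}\|\mu_n-P_{(m,S)}y\|_2^2$ and for $R_n(\mu_n;m,S)$.

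The next step is to check that the proof of Theorem~\ref{thm:risk} uses the structure of $\mathcal M_n$ only through three things: a finite (or countable) index set; an assignment of an orthogonal projection $P_S$ with rank $r_S$ and a size $s\ge r_S$ to each index; and the Kraft bound $\sum e^{-C_S}\le 1$. Nothing in a risk bound of this type, which is proved by a union bound over models weighted by $e^{-C_S}$, needs the projections to be nested or to come from a common matrix $X$. The argument runs as follows. One compares $\mathrm{DCIC}(\widehat S)\le \mathrm{DCIC}(S_0)$ for the oracle $S_0$. One then controls the cross term $\langle \varepsilon,(\mathrm{I}_n-P_S)\mu_n\rangle$ and the quadratic term $\varepsilon^\top P_S\varepsilon-r_S\sigma^2$ uniformly in $S$, using sub-Weibull tail bounds at level $\delta e^{-C_S}$. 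Absorbing the resulting $C_S^{2/\alpha}$ terms requires $\lambda\ge\lambda_{\rm risk}$, and absorbing the dimension terms requires $\eta_n\ge 2$. I would reread that proof to confirm that every step is indexed generically, with no use of $P_S\le P_{S'}$ for $S\subseteq S'$ and no reliance on a shared design. If so, the step carries over verbatim with $S$ replaced by $(m,S)$.

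It remains to verify the hypotheses of that argument. The Kraft inequality is assumed in the corollary, and it was also derived explicitly just before \eqref{eq:dcic_multiclass} from the class-wise Kraft inequalities and the $\log M$ cost. The condition $\eta_n\ge2$ is assumed. The constant $\lambda_{\rm risk}$ is the one from Theorem~\ref{thm:risk}, so it still depends only on $\alpha,\zeta$ and not on $M$, since the $\log M$ cost is already absorbed into $C_{(m,S)}$. Applying the in-expectation conclusion of Theorem~\ref{thm:risk} then gives \eqref{eq:multiclass_risk_oracle}, with $R_n^*$ taken over $\widetilde{\mathcal M}_n$. If one wants an explicit reduction, the expectation bound follows from the high-probability bound by integrating the tail $\{\log(4/\delta)\}^{2/\alpha}$ over $\delta$, which gives a constant times $\sigma^2/n$.

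I expect the main obstacle to be the genericity check. Theorem~\ref{thm:risk} is stated for $P_S$ built from a single $X$, and different classes may use different feature maps, for example different basis families. The point to verify is that the noise-deviation bounds depend only on $P_{(m,S)}$ being an orthogonal projection of rank $r_{(m,S)}$, and not on any joint structure across models. A second point to check is that ties in the $\arg\min$ of \eqref{eq:select_multiclass} are handled by measurable selection, exactly as in the single-class case.
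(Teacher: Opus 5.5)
Your proposal is correct and follows the paper's route. The paper gives no separate proof: it states that Theorem~\ref{thm:risk} extends directly to the pair-indexed list $\widetilde{\mathcal M}_n$. That proof uses only that each $P_{(m,S)}$ is an orthogonal projection with $r_{(m,S)}\le s_{(m,S)}$, the Hanson--Wright and linear-form tail bounds, $\eta_n\ge 2$, and the aggregated Kraft inequality, so $\lambda_{\rm risk}$ does not depend on $M$. Your genericity check is exactly the verification the paper leaves implicit.
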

Compared with the single-class setting in Section~\ref{sec:risk}, the multi-class
extension introduces an additional \(\log M\) term in the descriptive complexity,
leading to an extra term of order \((\log M)^{2/\alpha}/n\) up to constants in the risk bound.
This is the price of selecting among \(M\) candidate classes. 

\subsection{Basis-family selection in nonparametric sparse additive models}
\label{sec:basis-multiclass}

We illustrate the multi-class formulation through sparse additive
regression with basis-family uncertainty, where the class label
identifies a basis family and the within-class model specifies the
active components and their basis representations.

Let $(y_i,x_{i1},\ldots,x_{ip})_{i=1}^n$ be observations from
\[
y_i
=
\sum_{j\in G^*} f_j^*(x_{ij})+\varepsilon_i,
\]
where $G^*\subset[p]$ is the unknown active set with
$|G^*|=g^*\ll p$. Assume that each $X_j$ is supported on $[0,1]$
with a marginal density bounded away from zero and infinity,
$\mathbb E\{f_j^*(X_j)\}=0$ for every $j\in G^*$, and the errors
are sub-Gaussian with $\sigma=1$.

Different basis families, such as Fourier, spline, and wavelet
families, induce different approximation regimes and effective
complexities. We therefore select both the basis family and the
active components. Let $\mathcal F=\{1,\ldots,M\}$ index the
candidate families. For each $m\in\mathcal F$, let
$
b_m(\cdot)
=
\bigl(b_{m,1}(\cdot),b_{m,2}(\cdot),\ldots\bigr)
$
be a basis dictionary and let
$b_{m,K}(\cdot)
=
\bigl(b_{m,1}(\cdot),\ldots,b_{m,K}(\cdot)\bigr)
\in\mathbb R^K$
denote its $K$-term truncation.
Let $B_{j,K}^{(m)}\in\mathbb R^{n\times K}$ be the corresponding
design block for the $j$th covariate. The class-specific design is
formed by concatenating these blocks over $j\in[p]$.

A candidate model in $\mathcal M_n^{(m)}$ is indexed by
$S=(G,\Gamma)$, where $G\subset[p]$ is the active component set and
$\Gamma$ specifies its within-family representation. For ordered
families, $\Gamma=K\in\mathbb N$ is the truncation level and
$s_{(m,S)}=|G|K$. For families admitting sparse representations,
$\Gamma=\{I_j\}_{j\in G}$ records the selected basis terms and
$s_{(m,S)}=\sum_{j\in G}|I_j|$.
We assign each within-family list $\mathcal M_n^{(m)}$ a
Kraft-admissible complexity $C_S^{(m)}$ and use the augmented code
$C_{(m,S)}=C_S^{(m)}+\log M$.
Minimizing \eqref{eq:dcic_multiclass} over
$(m,S)\in\widetilde{\mathcal M}_n$ then jointly selects the basis
family, the active components, and their within-family representations.

For each $m\in\mathcal F$, let $r_m(n)$ denote the optimal
approximation--estimation--complexity trade-off for a univariate
component over $\mathcal M_n^{(m)}$, and let $f_{j,m}^{\circ}$ be
admissible componentwise approximants attaining this trade-off up
to constants. The examples below specialize the risk adaptation
result in Corollary~\ref{cor:multiclass_risk}.
For a function $h$, write
$\|h\|_n^2:=n^{-1}\sum_{i=1}^n h^2(x_i)$.

\begin{corollary}
\label{cor:additive_multiclass_oracle}
Fix a constant $\lambda\ge\lambda_{\rm risk}$ and assume that the remaining
conditions of Corollary~\ref{cor:multiclass_risk} hold. Let
$m^\circ\in\arg\min_{m\in\mathcal F} r_m(n)$, and assume that, for some
constant $c_0\ge1$ independent of $n$, $p$, and $g^*$,
\[
\left\|
\sum_{j\in G^*}
\bigl(f_j^*-f_{j,m^\circ}^{\circ}\bigr)
\right\|_n^2
\le
c_0
\sum_{j\in G^*}
\left\|
f_j^*-f_{j,m^\circ}^{\circ}
\right\|_n^2
\]
Let \((\widehat m,\widehat S)\) be the DCIC selector in
\eqref{eq:select_multiclass}. Then
\[
\mathbb E\bigl\{\mathrm{ASE}(\widehat m,\widehat S)\bigr\}
\le
c_\lambda
\left\{
\frac{g^*\log(ep/g^*)}{n}
+
g^*\min_{m\in\mathcal F}r_m(n)
+
\frac{\log M}{n}
\right\}.
\]
\end{corollary}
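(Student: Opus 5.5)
The plan is to apply Corollary~\ref{cor:multiclass_risk} and bound the multi-class index of resolvability $R_n^*(\mu_n;\widetilde{\mathcal M}_n)$ by evaluating $R_n(\mu_n;m,S)$ at one well-chosen pair. Since the bound is a minimum over all pairs, it is enough to exhibit a single pair; we take the family $m=m^\circ$ and the within-class model $S^\circ=(G^*,\Gamma^\circ)$. Here $G^*$ is the true active set, and $\Gamma^\circ$ collects the componentwise representations (truncation level $K$ or index sets $I_j$) that define the approximants $f^\circ_{j,m^\circ}$. Once $R_n(\mu_n;m^\circ,S^\circ)$ is bounded, \eqref{eq:multiclass_risk_oracle} gives the result with $\sigma=1$ and $\alpha=2$. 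The extra term $c_{1,\alpha,\zeta}/n$ is absorbed into $\log M/n + g^*\log(ep/g^*)/n$.

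The proof bounds the three terms of $R_n$ in turn.

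\emph{Approximation term.} $P_{S^\circ}$ projects onto a space containing the fitted vector of $\sum_{j\in G^*}f^\circ_{j,m^\circ}$. Hence
\[
n^{-1}\|(\mathrm I_n-P_{S^\circ})\mu_n\|_2^2\le \Bigl\|\sum_{j\in G^*}(f_j^*-f^\circ_{j,m^\circ})\Bigr\|_n^2,
\]
and the displayed assumption bounds the right-hand side by $c_0\sum_{j\in G^*}\|f_j^*-f^\circ_{j,m^\circ}\|_n^2$. If the approximants are centered differently, we add an intercept column or note that centering only reduces error.

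\emph{Dimension term.} This term is at most $\eta_n s_{(m^\circ,S^\circ)}/n=\eta_n\sum_{j\in G^*}|\Gamma^\circ_j|/n$.

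\emph{Complexity term.} We choose the within-family code additively:
\[
C^{(m)}_S=\log\binom{p}{|G|}+\log|G|+\log p+\sum_{j\in G}\text{(code for }\Gamma_j),
\]
or, for ordered families, a single $\log K$-type code for the common $K$. This choice is Kraft-admissible by the same argument as in Remark~\ref{re:kraft}. Since $\alpha=2$, we have $C^{2/\alpha}=C$, so $\lambda C_{(m^\circ,S^\circ)}/n\le c_\lambda\{g^*\log(ep/g^*)+\sum_{j}\mathrm{code}(\Gamma^\circ_j)+\log M\}/n$.

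It remains to group the per-component pieces. The definition of $r_{m^\circ}(n)$ is the optimal univariate approximation--estimation--complexity trade-off, and $f^\circ_{j,m^\circ}$ attains it up to constants. So for each $j\in G^*$,
\[
\|f_j^*-f^\circ_{j,m^\circ}\|_n^2+\bigl(\eta_n|\Gamma_j^\circ|+\lambda\,\mathrm{code}(\Gamma^\circ_j)\bigr)/n\le c\, r_{m^\circ}(n).
\]
Summing over the $g^*$ components and using $r_{m^\circ}(n)=\min_m r_m(n)$ yields the $g^*\min_m r_m(n)$ term.

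The main obstacle is the interface between the abstract definition of $r_m(n)$ and the per-component code lengths. For ordered families the truncation level $K$ is shared across components, so a common $K$ must attain the trade-off for all $j\in G^*$ simultaneously. I would handle this by interpreting $r_m(n)$ uniformly over the component function class, so that a single $K$ is near-optimal for every $f_j^*$. I would also check that the $\log K$ cost is counted once, so it is dominated by $g^*r_m(n)$. The remaining steps are direct substitutions into Corollary~\ref{cor:multiclass_risk}.
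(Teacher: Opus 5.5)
Your proposal is correct and follows essentially the same route as the paper. You evaluate $R_n$ at the single pair $(m^\circ,S^\circ)$ built from $G^*$ and the approximants $f^\circ_{j,m^\circ}$, bound the approximation term by the projection property plus the displayed $c_0$-condition, charge the componentwise approximation, estimation and within-family code costs to $c\,g^*r_{m^\circ}(n)$ and the support and class codes to $g^*\log(ep/g^*)/n$ and $\log M/n$, and absorb the $c/n$ remainder using $g^*\log(ep/g^*)\ge 1$. Your explicit treatment of the additive support code and of the shared truncation level $K$ for ordered families only spells out what the paper leaves implicit in the phrase ``by the definition of $r_{m^\circ}(n)$ and the construction of $S^\circ$''.
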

Suppose that the true components belong to a univariate Sobolev-type smoothness class of order $\beta$, and that there exists a candidate family $m\in\mathcal F$ attaining the optimal one-dimensional rate $r_{m}(n)\asymp n^{-\frac{2\beta}{2\beta+1}}$. Then, Corollary~\ref{cor:additive_multiclass_oracle} matches the minimax rate for sparse additive estimation \citep{raskutti2012} up to the class-selection cost $\log M/n$. In particular, when $M$ grows at most polynomially in $n$, this extra term is of order $(\log n)/n$ and hence does not affect the overall rate of convergence.

\begin{comment}
\begin{remark}
\citet{barron1999risk} derive oracle risk bounds for penalized selection over a countable model list using penalties of order $L_S D_S/n$, where
$\sum_{S\in\mathcal{M}_n}\exp(-L_S D_S)\leq c$.
Thus, candidates from multiple families may be pooled into a single list for risk adaptation. Our formulation instead keeps the family label explicit through \eqref{eq:Cms_def}, making the class-selection cost $\log M/n$ transparent and facilitating the exact-recovery analysis for the class--model pair in Corollary \ref{cor:basis_consistency}.
\end{remark}
\end{comment}

\subsection{Fourier, Spline or Wavelet}\label{sub:example_family}
Consider candidate basis families
\[
\mathcal{F}=\{\text{Fourier},\ \text{Spline},\ \text{Haar Wavelet}\}.
\]

As in Section \ref{sec:basis-multiclass}, ordered family candidates are indexed by \(S=(G,K)\), whereas
Haar candidates are indexed by \(S=(G,\{I_j\}_{j\in G})\). Let \(\{N_t\}_{t\ge1}\) be an
increasing sequence satisfying \(N_t/t\to\infty\). We assume that, for each \(j\in G\), the selected indices in \(I_j\) can be arranged increasingly as
$
1\le \ell_{j,1}<\cdots<\ell_{j,|I_j|}
$
with
$
\ell_{j,t}\le N_t,\ t=1,\ldots,|I_j|.
$

We encode integers using a universal code of length of order $\log^*(t)$,
where $\log^*(t):=\log t+2\log\log(t+2)$ for $t\ge1$
\citep{rissanen1983}. For the Fourier and spline families, we assign
\[
C_S^{(m)}
=
|G|\log\frac{ep}{|G|}
+\log p+\log^*(K)+c_0,
\qquad
m\in\{\mathrm{Fourier},\mathrm{Spline}\}.
\]
For the Haar family, we define
\[
C_S^{(\mathrm{Haar})}
=
|G|\log\frac{ep}{|G|}
+\log p
+\sum_{j\in G}
\left\{
\log^*(|I_j|)
+\sum_{t=1}^{|I_j|}\log N_t
+c_0
\right\}.
\]
For concreteness, we take $N_t=(t+1)^2$, $t\ge1$, and choose $c_0>0$
sufficiently large to ensure Kraft admissibility. With the class-label
augmentation in Section~\ref{subsec:multiclass_dcic}, the aggregated code
satisfies
$
\sum_{(m,S)\in\widetilde{\mathcal M}_n}
\exp\{-C_{(m,S)}\}\le1.
$

Assume that the true regression function has a unique finite representation in one candidate basis family, yielding a unique pair
$(m^*,S^*)\in\widetilde{\mathcal M}_n$.
Let $\mathrm{APP}(m,S)$ denote the corresponding within-class approximation error. The following corollary specializes Corollary~\ref{cor:multiclass_consistency} to exact family and within-family recovery.
\begin{corollary}\label{cor:basis_consistency}
Consider the multi-class selection setting in
Section~\ref{sub:example_family}
with \(\eta_n=\log n\).
Assume there exists a unique true pair \((m^*,S^*)\in\widetilde{\mathcal M}_n\), and $g^* \leq p^\gamma$ for $0<\gamma<1$.
Assume that the following conditions hold:

\noindent
(\romannumeral 1) There exists an absolute constant $K_{\max}$ such that
$K \le K_{\max}$ for every candidate in the ordered families and
$|I_j| \le K_{\max}$ for every $j \in G$ in each Haar candidate.

\noindent
(\romannumeral 2) There exists a sufficiently large constant \(c_1>0\) such that, for every
\((m,S)\in\widetilde{\mathcal M}_{\mathrm{sub}}\),
\[
\mathrm{APP}(m,S)\ge c_1\bigl(s^*-s_{(m,S)}\bigr)(\log n+\lambda\log p);
\]

\noindent
(\romannumeral 3) There exist sufficiently large constants $\kappa>0$ and
$c_2>1$ such that, for every
\((m,S)\in\widetilde{\mathcal M}_{\mathrm{w}}\) with
$
s_{(m,S)}\le (1+\kappa)s^*,
$
we have
\[
\mathrm{APP}(m,S)\ge
c_2\Bigl\{
\bigl(s^*-s_{(m,S)}\bigr)_+\log n+\lambda s^*\log p+\log\log n
\Bigr\}.
\]
Then there exists a constant \(\lambda_{\rm basis}\), depending on \(\zeta\) and \(\gamma\), such that, for every \(\lambda\ge\lambda_{\rm basis}\), 
\[
\Pr\bigl\{(\hat m,\hat S)=(m^*,S^*)\bigr\}\to 1.
\]
\end{corollary}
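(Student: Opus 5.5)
We deduce the corollary from Corollary~\ref{cor:multiclass_consistency}, which in turn rests on Theorem~\ref{thm:complexity} applied to the aggregated list $\widetilde{\mathcal M}_n$ with $\alpha=2$, $\eta_n=\log n$ and complexity $C_{(m,S)}=C_S^{(m)}+\log M$, where $M=3$. The class labels play the role of a refinement of the atom structure: pairs with $m\neq m^*$ lie in $\widetilde{\mathcal M}_{\mathrm w}$, while within class $m^*$ the atoms are the selected components (for ordered families, the group $j$ together with the common truncation $K$; for Haar, the individual coefficient indices). Kraft admissibility of the aggregated code was already arranged in Section~\ref{sub:example_family}. The plan is therefore to verify assumptions (i)--(iii) of Theorem~\ref{thm:complexity} one at a time, with $\lambda_{\rm basis}\ge\lambda_{\rm sel}$ chosen large enough, depending on $\zeta$ and $\gamma$, to absorb constants.

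\textbf{Step 1 (overfitted pairs).} First, check $C_S\ge C_{S^*}$ for supersets in class $m^*$. Here $|G|\log(ep/|G|)$ is increasing in $|G|$ for $|G|\le p$. For Haar, adding indices adds nonnegative terms $\log N_t$, and any loss in $\log^*(|I_j|)$ is bounded by constants and dominated by the $\log N_t$ terms since $N_t=(t+1)^2$. For ordered families, $K$ is fixed at $K^*$ when $G$ strictly grows, because a superset of $(G^*,K^*)$ in the design sense must contain all $K^*$ terms for each $j\in G^*$. This needs a short discussion of what containment means for $K>K^*$; the $\log^*$ increase is harmless there.

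Next, bound the sum in \eqref{eq:assumption1}. With $\Delta^+=j$ extra atoms, each one increases $C_S-C_{S^*}$ by at least $\log(ep/|G|)\gtrsim(1-\gamma)\log p$, using $g^*\le p^\gamma$ and $|G|\le g^*+j$ in the relevant range. Alternatively it increases by at least $\log N_t\ge 2\log 2$ per Haar coefficient, while Haar supports with $|I_j|\le K_{\max}$ contribute only $O(1)$ counting. Since $\lambda$ is large, $\exp\{-c_1\lambda(C_S-C_{S^*})\}$ decays geometrically in $j$. The number of such supersets is at most $\binom{p}{j}(K_{\max}+1)^{j}$, so a product-type bound gives a sum of at most $\exp\{j(\log p-c\lambda(1-\gamma)\log p)\}\le 1\le e^{\tilde c_1(\eta_n j)}$. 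This is where the dependence of $\lambda_{\rm basis}$ on $\gamma$ enters.

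\textbf{Step 2 (underfitted pairs).} Show that condition (ii) of the corollary implies \eqref{eq:subAPP_thm}. Since $K\le K_{\max}$, one has $q^*\le g^*K_{\max}\lesssim s^*$ and $\Delta^-(S)\le s^*-s$. Then $\log N(\Delta^-)\le \Delta^-\log q^*\lesssim(s^*-s)\log p$. The complexity gap $C_{S^*}-C_S$ is at most $c(s^*-s)\log p$, up to constants from $\log^*$, because each removed component saves at most $\log(ep)+O(\log K_{\max})$. The term $\eta_n(s^*-s)$ is matched directly. With $c_1$ large, this yields \eqref{eq:subAPP_thm}.

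\textbf{Step 3 (wrong pairs, the main obstacle).} Split according to size. For $s_{(m,S)}>(1+\kappa)s^*$, the goal is $2B_S<u_S$. We have $u_S\ge\eta_n\kappa s^*+\lambda(C_S-C_{S^*})$, while $B_S\le\tfrac14\lambda C_S+\tfrac14\log\log n+2 s$. The danger is the $\lambda C_S/2$ term. Here we use the lower bound $C_S\gtrsim s\log p/K_{\max}$ against $C_{S^*}\lesssim s^*\log p$. This gives $\lambda(C_S-C_{S^*})-\tfrac12\lambda C_S\ge\lambda(\tfrac12C_S-C_{S^*})>0$ once $\kappa$ is large relative to $K_{\max}$. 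The remaining $4s$ is dominated by $\eta_n(s-s^*)$ for large $n$; I expect the choice of $\kappa$ and the fact that $(1+\kappa)$ is only a constant to need care.

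For $s_{(m,S)}\le(1+\kappa)s^*$, including all cross-class pairs, bound $2(2B_S-u_S)$ by a constant multiple of $(s^*-s)_+\log n+\lambda s^*\log p+\log\log n$. This uses $D_S\le C_S\lesssim(1+\kappa)s^*\log p+\log M$, $r_S-\tilde r_S\le s\lesssim s^*$, and $-u_S\le\eta_n(s^*-s)_++\lambda C_{S^*}$. Condition (iii) with $c_2$ large then gives assumption (iii) of Theorem~\ref{thm:complexity}. Invoking Corollary~\ref{cor:multiclass_consistency} completes the argument.
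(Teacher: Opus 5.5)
Your overall route is the paper's. You reduce to Corollary~\ref{cor:multiclass_consistency}, note that $\log M$ cancels for overfitted pairs, and dominate every underfitted term by a multiple of $(s^*-s_{(m,S)})(\log n+\lambda\log p)$. You then split the wrong pairs at $(1+\kappa)s^*$, proving $2B_{(m,S)}<u_{(m,S)}$ above the threshold and bounding $2(2B_{(m,S)}-u_{(m,S)})$ below it. In the small-size case you keep $\lambda D_{(m,S)}\le\lambda C_{(m,S)}\lesssim\lambda(1+\kappa)s^*\log p$ rather than cancelling it against $-2\lambda C_{(m,S)}$ as the paper does; that is a harmless variant. Your Step~1 also attempts the overfitted summability, which the paper only asserts is inherited from the within-class codes.

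One step is false as written. In Step~3 you use $C_{(m,S)}\gtrsim s_{(m,S)}\log p/K_{\max}$, which fails when $|G|$ is comparable to $p$. For example, a cross-class ordered candidate with $G=[p]$ and $K=K_{\max}$ has $s_{(m,S)}=pK_{\max}$ but $C_{(m,S)}=p+O(\log p)$, a factor $\log p$ below $s_{(m,S)}\log p/K_{\max}$.

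You only need $C_{(m,S)}\ge 2C_{(m^*,S^*)}$. The paper gets it from monotonicity of $x\mapsto x\log(ep/x)$ on $(0,p]$, applied at the threshold rather than at $s_{(m,S)}$. Since $|G|\ge s_{(m,S)}/K_{\max}>(1+\kappa)s^*/K_{\max}$, for all sufficiently large $p$,
\[
C_{(m,S)}\ge\frac{(1+\kappa)s^*}{K_{\max}}\log\frac{epK_{\max}}{(1+\kappa)s^*}\ge\frac{(1+\kappa)(1-\gamma)}{2K_{\max}}\,s^*\log p .
\]
The last inequality uses $s^*\le K_{\max}g^*\le K_{\max}p^{\gamma}$. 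This is exactly where the hypothesis $g^*\le p^\gamma$ enters, and your Step~3 never invokes it. Taking $\kappa$ large then gives $C_{(m,S)}\ge 2c_5s^*\log p\ge 2C_{(m^*,S^*)}$.

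A similar caveat applies to Step~1. For ordered families, adding one component increases $|G|\log(ep/|G|)$ by only about $\log(p/|G|)$, which degenerates as $|G|\to p$. So ``in the relevant range'' leaves large $j$ uncovered. There you need an aggregate comparison rather than a per-atom bound: with $f(x)=x\log(ep/x)$, compare $f(g^*+t)-f(g^*)$ against $\log\binom{p-g^*}{t}\le t\log\{e(p-g^*)/t\}$.

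Both repairs are local, and with them your argument goes through.
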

\begin{remark}
Condition~(\romannumeral 3) admits a direct interpretation for cross-class
candidates. Let
$\mathcal G_{m,S}$ denote the finite-dimensional candidate class induced
by $(m,S)$, and define
\[
\delta_n(m,S)
:=
\frac{1}{n}
\inf_{g\in\mathcal G_{m,S}}
\sum_{i=1}^n
\bigl(f^*(x_i)-g(x_i)\bigr)^2.
\]
By the projection characterization of least squares,
$\delta_n(m,S)=\sigma^2\mathrm{APP}(m,S)/n$.
Hence, for $(m,S)\in\widetilde{\mathcal M}_{\mathrm w}$ with
$m\neq m^*$ and $s_{(m,S)}\le(1+\kappa)s^*$,
condition~(\romannumeral 3) requires
\[
\delta_n(m,S)
\ge
\frac{c_2\sigma^2}{n}
\left\{
\bigl(s^*-s_{(m,S)}\bigr)_+\log n
+\lambda s^*\log p
+\log\log n
\right\}.
\]
Thus, cross-class competitors are excluded when their empirical
approximation gaps dominate the corresponding complexity threshold.
\end{remark}
Outside the exactly specified setting, Corollary~\ref{cor:multiclass_risk} instead guarantees risk adaptation to the best approximation--complexity trade-off over the aggregated basis-family list.

\paragraph*{Periodic Sobolev class}
Under the periodic Sobolev assumption, \citet{tsybakov2009nonparametric} gives
$
r_{\mathrm{Haar}}(n)\asymp n^{-2/3},\
r_{\mathrm{spline}}(n)\asymp n^{-\frac{2\tilde\beta}{2\tilde\beta+1}},\
r_{\mathrm{Fourier}}(n)\asymp n^{-\frac{2\beta}{2\beta+1}},
$
where $\tilde\beta=\beta\wedge 4$ captures the saturation of fixed-order cubic splines when $\beta>4$
\citep{huangsu2021penalizedspline}. The Haar family has fixed regularity, so its approximation order over Sobolev classes saturates
and yields the exponent $2/3$ \citep{devore1998}. Hence, among the three candidate families, Fourier achieves the best one-dimensional rate when $\beta>4$.

\paragraph*{Non-periodic Sobolev class}
Assume that $f_j^*$ belongs to a non-periodic Sobolev class on $[0,1]$ with order $\beta\in(1,4]$. Then,
$
r_{\mathrm{spline}}(n)\asymp n^{-\frac{2\beta}{2\beta+1}},\
r_{\mathrm{Haar}}(n)\asymp n^{-2/3}.
$
In contrast, periodic Fourier truncation can be suboptimal without boundary matching and achieve only
$
r_{\mathrm{Fourier}}(n)\asymp n^{-1/2}
$
over this class \citep{tsybakov2009nonparametric}. Therefore, $\min_m r_m(n)=r_{\mathrm{spline}}(n)$.

\paragraph*{Besov class $B^1_{1,1}([0,1])$} Over this class, the minimax risk among linear estimators is of order $n^{-1/2}$,
whereas nonlinear wavelet methods can attain $n^{-2/3}$ \citep{donoho1998minimax}. Truncated Fourier and spline
least-squares fits are linear in the observations and hence are limited to the $n^{-1/2}$ benchmark. In contrast,
the Haar-wavelet family with within-group subset selection achieves
$
r_{\mathrm{Haar}}(n)\asymp n^{-2/3}\log n,
$
where the extra $\log n$ is the price of searching an enlarged subset list under Kraft coding \citep{yang1999selection}. Consequently,
$\min_m r_m(n)=r_{\mathrm{Haar}}(n)$, and the wavelet family is optimal among the candidates up to a logarithmic factor.

\section{A complexity-guided search strategy}\label{sec:strategy}
We develop a complexity-guided search strategy for all-subset selection under sub-Gaussian noise, with analogous constructions for other model classes. Although the strategy does not alter the worst-case NP-hardness of subset selection \citep{natarajan1995sparse}, complexity-guided pruning improves practical tractability and yields high-probability bounds on the retained search-region size together with finite-sample FP--FN control along the path.

\subsection{Complexity-guided search strategy}\label{subsec:strategy}
For each variable \(i\in[p]\), let \(r_i\in[p]\) denote its rank supplied by a
preliminary screening procedure, with smaller ranks corresponding to more
informative covariates. We assign the rank-based code length
\[
  C_i=\log^*(r_i),\qquad i\in[p].
\]
For every \(S\in\mathcal M_n\), let $ C_S := \sum_{i\in S} C_i . $ The resulting model complexity satisfies the generalized Kraft bound in
Remark~\ref{re:kraft}.
Along the path, the model-specific complexities induce a
$\lambda$-dependent budget that jointly controls the retained search
region and the oracle risk benchmark.
In practice, the ranking is obtained from a preliminary LASSO
\citep{T1996} or marginal correlation.
The induced Kraft code determines the search order and pruning priorities.

Let $\lambda_\zeta>0$ be a sufficiently large constant depending only on
$\zeta$, and let $\Lambda=\{\lambda_1,\ldots,\lambda_L\}
\subset[\lambda_\zeta,\infty)$ be a decreasing grid. The complexity of the corresponding DCIC minimizers
is nondecreasing as $\lambda$ decreases.
\begin{lemma}\label{lem:mono-complexity}
For each $\lambda>0$, let $\widehat S_\lambda$ be the minimizer of DCIC
over $S\in\mathcal M_n$. Then, for any $0<\lambda_1<\lambda_2$,
$C_{\widehat S_{\lambda_1}}\ge C_{\widehat S_{\lambda_2}}$.
\end{lemma}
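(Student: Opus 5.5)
This is the standard exchange (swapping) argument for penalized minimizers, which works here because the DCIC penalty is linear in $\lambda$. Write
\[
\mathrm{DCIC}_\lambda(S)=A(S)+\lambda\,\phi(S),
\qquad
A(S):=\mathrm{RSS}(S)/\sigma^2+\eta_n s,
\qquad
\phi(S):=C_S^{2/\alpha},
\]
where $A(S)$ does not depend on $\lambda$. Since $t\mapsto t^{2/\alpha}$ is strictly increasing on $[0,\infty)$ for $\alpha\in(0,2]$, and $C_S\ge 0$, it suffices to prove $\phi(\widehat S_{\lambda_1})\ge \phi(\widehat S_{\lambda_2})$.

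Set $S_1:=\widehat S_{\lambda_1}$ and $S_2:=\widehat S_{\lambda_2}$. Optimality of $S_1$ at $\lambda_1$, tested against $S_2$, gives
\[
A(S_1)+\lambda_1\phi(S_1)\le A(S_2)+\lambda_1\phi(S_2).
\]
Optimality of $S_2$ at $\lambda_2$, tested against $S_1$, gives
\[
A(S_2)+\lambda_2\phi(S_2)\le A(S_1)+\lambda_2\phi(S_1).
\]
Adding the two inequalities cancels $A(S_1)$ and $A(S_2)$ and leaves
\[
\lambda_1\phi(S_1)+\lambda_2\phi(S_2)\le \lambda_1\phi(S_2)+\lambda_2\phi(S_1),
\]
which rearranges to
\[
(\lambda_2-\lambda_1)\bigl(\phi(S_1)-\phi(S_2)\bigr)\ge 0.
\]
Because $\lambda_2>\lambda_1$, this yields $\phi(S_1)\ge\phi(S_2)$. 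Applying the inverse of the increasing map $t\mapsto t^{2/\alpha}$ then gives $C_{S_1}\ge C_{S_2}$.

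The only point needing care is that the minimizers must exist; when they are not unique, the statement must be read for any choice of minimizers. Existence holds because $\mathcal M_n$ is finite, and the argument above applies to an arbitrary pair of minimizers, since it uses only the two optimality inequalities. I expect no real obstacle. The lemma is deterministic and holds for every realization of $y$, and it requires neither the Kraft inequality nor any noise assumption.
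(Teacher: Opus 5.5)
Your proposal is correct and uses the same exchange argument as the paper: add the two optimality inequalities, cancel the $\lambda$-free terms, and obtain $(\lambda_2-\lambda_1)\bigl(C_{\widehat S_{\lambda_2}}-C_{\widehat S_{\lambda_1}}\bigr)\le 0$. The only difference is that the paper works directly with the linear penalty $\lambda C_S$, which is the sub-Gaussian case $\alpha=2$ of the search-strategy section. You instead treat general $\alpha$ through $\phi(S)=C_S^{2/\alpha}$ and the monotonicity of $t\mapsto t^{2/\alpha}$, which is a harmless extension.
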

Next, we introduce two pruning procedures that substantially reduce the candidate search space.

\subsubsection{First-stage pruning of model sizes}
At the current grid point $\lambda_k$, let
$\widetilde B_k:=\mathrm{DCIC}_{\lambda_k}(\widehat S_{\lambda_{k-1}})$
and $\widetilde C_k:=C_{\widehat S_{\lambda_{k-1}}}$.
For notational simplicity, when the current grid point is fixed, we suppress the subscript $k$ and write
$\widetilde B$, $\widetilde C$, and $\lambda$.
Since $\widehat S_{\lambda_{k-1}}$ remains feasible at the current $\lambda$,
any global minimizer $S$ at the current $\lambda$ must satisfy
$
\mathrm{RSS}(S)/\sigma^2
+\eta_n s+\lambda C_S
\leq \widetilde B .
$
Together with $\mathrm{RSS}(S)\geq0$ and Lemma~\ref{lem:mono-complexity}, this yields
\begin{align}
\label{eq:Ts-def}
\widetilde C\leq C_S\leq T_s,
\qquad
T_s:=\frac{\widetilde B-\eta_n s}{\lambda}.
\end{align}

Define
$\bar s:=\max\{s\in[p]:T_s\geq\widetilde C\}$
and
$\underline s:=\min\{s\in[p]:\max_{|S|=s}C_S\geq\widetilde C\}$.
No model of size $s>\bar s$ or $s<\underline s$ can satisfy
\eqref{eq:Ts-def}. Hence, the first-stage pruning restricts the search to model sizes
$\underline s\leq s\leq\bar s$.
\subsubsection{Second-stage pruning of candidate variables}
After the first-stage pruning identifies the admissible range of model sizes, the second-stage pruning further narrows the search space by screening variables for each admissible size \(s\).  

Let \(C_{(1)}\le \cdots \le C_{(p)}\) denote the order statistics of \(\{C_i\}_{i=1}^p\). For  model \(S\), let \(t_1<\cdots<t_s\) be the ordered indices of its selected variables in this ordering. If \(C_S\le T_s\), then for every \(r=1,\ldots,s\), we have
$
  \sum_{\ell=1}^{r-1} C_{(\ell)} + (s-r+1) C_{(t_r)} \le T_s .
$
Accordingly, define
\begin{align*}
  i_{s,r}
  :=
  \max\Bigl\{
  i\in\{r,\ldots,p\}:
  \sum_{\ell=1}^{r-1} C_{(\ell)} + (s-r+1) C_{(\romannumeral 1)} \le T_s
  \Bigr\},
  \qquad r=1,\ldots,s.
\end{align*}
Then every model with \(C_S\le T_s\) must satisfy
$
t_r \le i_{s,r},\ r=1,\ldots,s.
$
Hence, for each admissible size \(s\), it suffices to retain only those size-\(s\) supports satisfying these  bounds.

After these two pruning stages, the remaining optimization is carried out over the reduced family by a depth-first search, with further pruning based on the remaining complexity budget. 

\vspace{-0.2cm}

\begin{theorem}
\label{thm:coarse_poly_regime}
Assume that the model complexity satisfies
\(\sum_{S\in\mathcal M_n} e^{-C_S}\le 1\),
and
\(\|\mu_n\|_2^2\le c_1 n\sigma^2\)
for some constant \(c_1>0\).
Let \(N_\lambda\) denote the number of candidate models retained by the budget constraint \eqref{eq:Ts-def}.
Then, there exist constants \(c_2,c_3>0\) such that
\[
\Pr\!\Bigl(
N_\lambda \le \exp\!\left(c_2 n/\lambda\right)
\Bigr)\ge 1-\exp(-c_3 n).
\]
In particular, if \(\lambda \ge c_4\,n/\log p\) for some constant \(c_4>0\), then
\[
\Pr\!\left(
N_\lambda \le p^{\,c_2/c_4}
\right)\ge 1-\exp(-c_3 n).
\]
\end{theorem}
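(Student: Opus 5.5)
The plan is to bound the number of retained models by a Kraft counting argument applied to the budget constraint \eqref{eq:Ts-def}, after controlling the budget $\widetilde B$ with high probability. Every retained model satisfies $C_S\le T_s\le \widetilde B/\lambda$. By the Kraft inequality,
\[
N_\lambda\,e^{-\widetilde B/\lambda}
\le \sum_{S:\,C_S\le \widetilde B/\lambda} e^{-C_S}
\le \sum_{S\in\mathcal M_n} e^{-C_S}\le 1 .
\]
Hence $N_\lambda\le \exp(\widetilde B/\lambda)$, and it suffices to show that $\widetilde B\le c_2 n$ with probability at least $1-\exp(-c_3 n)$.

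To bound $\widetilde B$, recall that $\widetilde B=\mathrm{DCIC}_{\lambda}(\widehat S_{\lambda_{k-1}})$ and that $\widehat S_{\lambda_{k-1}}$ minimizes $\mathrm{DCIC}_{\lambda_{k-1}}$. I would not try to control $\widehat S_{\lambda_{k-1}}$ directly. Instead I would compare with the empty model, or more generally with the model of smallest code length, which I denote $S_0$. Since $\lambda<\lambda_{k-1}$, the penalty is monotone in $\lambda$, so
\[
\widetilde B\le \mathrm{DCIC}_{\lambda_{k-1}}(\widehat S_{\lambda_{k-1}})\le \mathrm{DCIC}_{\lambda_{k-1}}(S_0)\le \|y\|_2^2/\sigma^2+\lambda_{k-1}C_{S_0}.
\]
Here $C_{S_0}$ is bounded by an absolute constant: for $S_0=\emptyset$ it is $0$ under the additive rank code, and in general $C_{S_0}$ can be taken to be at most $\log|\mathcal M_n|$-free, since Kraft implies $\min_S C_S\ge 0$ and one may choose an encoding with $C_\emptyset$ bounded. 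This leaves a $\lambda_{k-1}C_{S_0}$ term, which becomes $O(1)$ after dividing by $\lambda$ if consecutive grid points have bounded ratio. If that assumption is unwanted, I would instead define the retained family directly through $\widetilde B\le \mathrm{RSS}(\emptyset)/\sigma^2=\|y\|_2^2/\sigma^2$, using $\emptyset$ as the incumbent. Note also that $\mathrm{RSS}(S)\le\|y\|_2^2$ for every $S$.

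It remains to bound $\|y\|_2^2\le 2\|\mu_n\|_2^2+2\|\varepsilon\|_2^2\le 2c_1 n\sigma^2+2\|\varepsilon\|_2^2$. Under sub-Gaussian noise ($\alpha=2$, the setting of this section), $\|\varepsilon\|_2^2/\sigma^2$ is a sum of $n$ independent sub-exponential variables with mean $1$. Bernstein's inequality then gives $\Pr(\|\varepsilon\|_2^2>c\,n\sigma^2)\le \exp(-c_3 n)$ for a constant $c$ depending on $\zeta$. Combining these bounds, $\widetilde B\le c_2 n$ on this event, and therefore $N_\lambda\le \exp(c_2 n/\lambda)$. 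The second claim is then immediate: if $\lambda\ge c_4 n/\log p$, then $c_2 n/\lambda\le (c_2/c_4)\log p$.

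The main obstacle is the middle step, namely making $\widetilde B$ depend only on $\|y\|_2^2$ without extra terms from the previous path point's complexity. I would handle it by the monotonicity in $\lambda$ together with comparison against the empty or minimal-code model, absorbing any bounded constant into $c_2$.
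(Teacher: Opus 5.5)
Your proposal is correct and is essentially the paper's proof: every retained model has $C_S\le \widetilde B/\lambda$, the Kraft inequality then gives $N_\lambda\le \exp(\widetilde B/\lambda)$, and $\widetilde B\le \|y\|_2^2/\sigma^2\le c_2 n$ holds with probability at least $1-e^{-c_3 n}$ by sub-exponential concentration of $\|\varepsilon\|_2^2$. Your chain $\widetilde B\le \mathrm{DCIC}_{\lambda_{k-1}}(\widehat S_{\lambda_{k-1}})\le \mathrm{DCIC}_{\lambda_{k-1}}(\emptyset)$ justifies the step the paper only asserts ``by construction of the search''; since $C_\emptyset=0$ under the additive rank code, your fallback discussion of a general $C_{S_0}$ and bounded grid ratios is unnecessary, and the observation $\min_S C_S\ge 0$ is only a lower bound, so it would not have bounded $C_{S_0}$ anyway.
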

Theorem~\ref{thm:coarse_poly_regime} shows that larger $\lambda$ yields
more aggressive pruning, with $\lambda\gtrsim n/\log p$ giving a
polynomial-size retained region with high probability, while smaller
$\lambda$ sharpens the oracle benchmark in Section~\ref{sec:risk}.

Let $\tau_n:=\max_{j\in S^*}r_j$ be the rank envelope of the true support.
Since $C_{S^*}\le s^*\log^*(\tau_n)$, a sufficient condition for $S^*$ to
satisfy \eqref{eq:Ts-def} in the polynomial regime is
$s^*\log^*(\tau_n)\le c_1\log p$ for some $c_1>0$.
Under the uniform code $C_i=\log p$, this reduces to $s^*=O(1)$,
restricting the analogous guarantee to the fixed-sparsity regime.

\subsection{FP and FN guarantees along a \texorpdfstring{$\lambda$}{lambda}-path}\label{subsec:fpfn}
Fix $\delta\in(0,1)$.
Define
$
C_{\min}:=\min_{i\in[p]\setminus S^*}C_i,
C_{\max}^*:=\max_{i\in S^*}C_i.
$
Given $\lambda \in \Lambda$, define
\begin{align*}
    \mathcal{L}_{\lambda} := \{S \in \mathcal{M}_{\rm sub}\cup\mathcal{M}_{\rm w}:&(\eta_n+(\lambda-\lambda_{\zeta})C_{\min})\Delta^+(S)-\\
    &(\eta_n+\lambda C^*_{\rm max}+\lambda_{\zeta}\log s^*)\Delta^-(S) \leq \lambda_{\zeta}\log(L/\delta)\},
\end{align*}
and
\begin{align*}
\omega_{\lambda}
:=
\inf_{S \in \mathcal{L}_{\lambda}}
\lambda_{\min}\!\left(\frac{1}{n}
X_{S^*\setminus S}^\top (\mathrm{I}_n-P_S)X_{S^*\setminus S}\right).
\end{align*}
Let the uniform lower bound \(\underline{\omega}:=\min_{\lambda\in\Lambda}\omega_{\lambda}\).
\(\mathcal L_\lambda\) contains the underfitted and wrong models that cannot be excluded by the penalty term alone.
 Over the remaining localized model list, $\underline{\omega}$ provides a uniform lower bound along the path. The following theorem establishes a simultaneous FP--FN coupling over the $\lambda$-path.

\begin{theorem}\label{thm:control}
Assume $\underline{\omega}>0$. Then, with probability at least $1-c\delta$,
the following relationship holds for all $\lambda\in\Lambda$:
\begin{align*}
&\Bigl(\eta_n+(\lambda-\lambda_{\zeta}) C_{\min}\Bigr)
  \Delta^+(\widehat S_{\lambda})\\
&\quad+\Bigl(\frac{n}{2\sigma^2}\underline{\omega}\beta^2_{\min}
  -\eta_n-\lambda C^*_{\max}-\lambda_{\zeta}\log s^*\Bigr)
  \Delta^-(\widehat S_{\lambda})\\
&\qquad\le \lambda_{\zeta}\log\!\Bigl(\frac{L}{\delta}\Bigr).
\end{align*}
\end{theorem}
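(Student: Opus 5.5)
Fix $\lambda\in\Lambda$ and write $\widehat S=\widehat S_\lambda$. The plan is to compare $\widehat S$ with $S^*$ through the basic inequality $\mathrm{DCIC}_\lambda(\widehat S)\le \mathrm{DCIC}_\lambda(S^*)$ (with $\alpha=2$, so the penalty is $\lambda C_S$). Then control the noise uniformly over models and over the $L$ grid points with a union bound at level $\delta/L$, which is where $\lambda_\zeta\log(L/\delta)$ comes from.

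Writing $\mu_n=X\beta^*$ and expanding $\mathrm{RSS}(\widehat S)-\mathrm{RSS}(S^*)$ gives
\[
\mathrm{RSS}(\widehat S)-\mathrm{RSS}(S^*)=\|(\mathrm I_n-P_{\widehat S})\mu_n\|_2^2+2\varepsilon^\top(\mathrm I_n-P_{\widehat S})\mu_n-\varepsilon^\top(P_{\widehat S}-P_{S^*})\varepsilon .
\]
Additivity of $C_S=\sum_{i\in S}C_i$ yields $C_{\widehat S}-C_{S^*}\ge C_{\min}\Delta^+(\widehat S)-C^*_{\max}\Delta^-(\widehat S)$. Since $s-s^*=\Delta^+-\Delta^-$, the penalty difference is at least
\[
(\eta_n+\lambda C_{\min})\Delta^+-(\eta_n+\lambda C^*_{\max})\Delta^-.
\]

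The noise terms are handled by sub-Gaussian concentration uniformly over $S$, which is the main step.

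For the quadratic term, $\varepsilon^\top(P_S-P_{S^*})\varepsilon\le \varepsilon^\top(P_{S\cup S^*}-P_{S^*})\varepsilon$, a projection of rank at most $\Delta^+(S)$. Hanson--Wright plus a union bound over the at most $\binom{p-s^*}{\Delta^+}\binom{s^*}{\Delta^-}$ models with given $(\Delta^+,\Delta^-)$ gives, with probability $1-\delta/(2L)$ simultaneously, a bound of order $\lambda_\zeta\bigl(C_S\text{-type coding cost}+\log(L/\delta)\bigr)$. The key trick is to pay the union bound over spurious variables with the Kraft weights $e^{-C_i}$ rather than $\log p$, since $\sum_i e^{-C_i}\lesssim 1$ for $C_i=\log^*(r_i)$. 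This yields a cost $\lambda_\zeta C_{\min}\Delta^+$, producing the $(\lambda-\lambda_\zeta)C_{\min}$ coefficient. The choice among missing true variables costs $\Delta^-\log s^*$, producing the $\lambda_\zeta\log s^*$ term.

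For the linear term, $2|\varepsilon^\top(\mathrm I_n-P_S)\mu_n|\le \tfrac12\mathrm{APP}(S)\sigma^2+c\,\sigma^2(\text{same coding cost}+\log(L/\delta))$, by a Gaussian tail bound for the normalized direction, the same weighted union bound, and $2ab\le a^2/2+2b^2$.

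I expect the main obstacle to be making these constants line up exactly with $\lambda_\zeta$ on both the $\Delta^+$ and $\Delta^-$ sides. I would absorb all absolute constants into the choice of $\lambda_\zeta$ "sufficiently large."

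For the approximation term, if $\Delta^-(\widehat S)=0$ there is nothing to do. Otherwise $(\mathrm I_n-P_S)\mu_n=(\mathrm I_n-P_S)X_{S^*\setminus S}\beta^*_{S^*\setminus S}$, so
\[
\mathrm{APP}(S)\ge n\,\lambda_{\min}\bigl(n^{-1}X_{S^*\setminus S}^\top(\mathrm I_n-P_S)X_{S^*\setminus S}\bigr)\beta_{\min}^2\Delta^-(S)/\sigma^2 .
\]
This holds only when $S\in\mathcal L_\lambda$, and the argument needs a localization step to ensure it. On the noise event, the inequalities above already give $\widehat S\in\mathcal L_\lambda$: drop the nonnegative $\tfrac12\mathrm{APP}$ term, and the defining inequality of $\mathcal L_\lambda$ follows. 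Hence $\mathrm{APP}(\widehat S)\ge n\underline\omega\beta_{\min}^2\Delta^-/\sigma^2$.

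Plugging in and keeping half of $\mathrm{APP}$ gives the stated coupling for this $\lambda$. Intersecting the $L$ events, each of probability at least $1-c\delta/L$, makes the coupling hold for all $\lambda\in\Lambda$ with probability at least $1-c\delta$.
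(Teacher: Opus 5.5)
Your proposal is correct and follows essentially the paper's argument: compare $\widehat S_\lambda$ with $S^*$, make the Hanson--Wright and linear-form tail bounds uniform by a union bound weighted by $e^{-C_{S\setminus S^*}}$ over spurious variables and by $\binom{s^*}{\ell}$ over missing ones, localize to $\mathcal L_\lambda$ by discarding $\tfrac12\mathrm{APP}$, and union over the grid. Your bound via $P_{S\cup S^*}-P_{S^*}$ in place of the paper's $P_S-\widetilde P_S$ is a harmless variant that removes the $\Delta^-$ cost from the quadratic term. There is one bookkeeping slip: the Kraft-weighted union bound costs $\lambda_\zeta C_{\widehat S\setminus S^*}$, not $\lambda_\zeta C_{\min}\Delta^+$, so you must subtract it from the penalty $\lambda C_{\widehat S\setminus S^*}$ before using $C_{\widehat S\setminus S^*}\ge C_{\min}\Delta^+$ and $\lambda\ge\lambda_\zeta$, i.e.\ postpone your preliminary lower bound on the penalty difference, exactly as the paper does with $(\lambda-\lambda_\zeta)(C_S-C_{S^*})$.
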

Under the $\beta_{\min}$ condition, Theorem \ref{thm:control} yields upper bounds for $\Delta^+(\widehat S_{\lambda})$ and $\Delta^-(\widehat S_{\lambda})$.
\begin{corollary}\label{cor:clean}
Assume the conditions of Theorem~\ref{thm:control} and
\begin{align*}
\frac{n}{2\sigma^2}\,\underline{\omega}\,\beta_{\min}^2>\eta_n+\lambda C_{\max}^*+\lambda_{\zeta}\log s^*
\qquad \text{for all }\lambda\in\Lambda.
\end{align*} 
Then, with probability at least $1-c\delta$, the following bounds hold for all
$\lambda\in\Lambda$:
\begin{align*}
\Delta^+(\widehat S_{\lambda})\ \le\frac{\lambda_{\zeta}\log(L/\delta)}{\eta_n+(\lambda-\lambda_{\zeta}) C_{\min}},
\qquad
\Delta^-(\widehat S_{\lambda})\ \le \frac{\lambda_{\zeta}\log(L/\delta)}
{\frac{n}{2\sigma^2}\underline{\omega}\beta_{\min}^2-\eta_n-\lambda C_{\max}^*-\lambda_{\zeta}\log s^*}.
\end{align*}
\end{corollary}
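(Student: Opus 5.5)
Corollary~\ref{cor:clean} follows from Theorem~\ref{thm:control} by splitting the left-hand side into its two summands and showing each is nonnegative. Work on the event of probability at least $1-c\delta$ from Theorem~\ref{thm:control}, on which the displayed inequality holds simultaneously for all $\lambda\in\Lambda$. Fix any $\lambda\in\Lambda$ and write
\[
a_\lambda:=\eta_n+(\lambda-\lambda_\zeta)C_{\min},\qquad
b_\lambda:=\frac{n}{2\sigma^2}\underline{\omega}\beta_{\min}^2-\eta_n-\lambda C_{\max}^*-\lambda_\zeta\log s^*.
\]
The inequality then reads $a_\lambda\Delta^+(\widehat S_\lambda)+b_\lambda\Delta^-(\widehat S_\lambda)\le\lambda_\zeta\log(L/\delta)$.

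\textbf{Positivity of the coefficients.} The assumed beta-min condition says exactly that $b_\lambda>0$ for every $\lambda\in\Lambda$. Since $\Lambda\subset[\lambda_\zeta,\infty)$, we have $\lambda-\lambda_\zeta\ge0$. Code lengths $C_i=\log^*(r_i)\ge0$, so $C_{\min}\ge0$. Because $\eta_n>0$, it follows that $a_\lambda\ge\eta_n>0$.

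\textbf{Dropping one term at a time.} Both $\Delta^+(\widehat S_\lambda)$ and $\Delta^-(\widehat S_\lambda)$ are nonnegative integers, so both summands on the left are nonnegative. Discarding the second summand gives $a_\lambda\Delta^+(\widehat S_\lambda)\le\lambda_\zeta\log(L/\delta)$, and dividing by $a_\lambda>0$ yields the FP bound. Discarding the first summand gives $b_\lambda\Delta^-(\widehat S_\lambda)\le\lambda_\zeta\log(L/\delta)$, and dividing by $b_\lambda>0$ yields the FN bound.

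Both bounds hold on the same event for every $\lambda\in\Lambda$, so they hold simultaneously with probability at least $1-c\delta$.

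The only point needing care is the sign of $a_\lambda$, since $C_{\min}$ could be small. This is handled by $\lambda\ge\lambda_\zeta$, nonnegativity of the code lengths, and $\eta_n>0$.
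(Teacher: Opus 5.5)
Your proposal is correct and matches the paper's intended argument. The paper gives no separate proof; it only notes that under the beta-min condition Theorem~\ref{thm:control} yields bounds on $\Delta^+(\widehat S_\lambda)$ and $\Delta^-(\widehat S_\lambda)$, and your derivation fills this in: both coefficients are positive (the first via $\lambda\ge\lambda_{\zeta}$, $C_{\min}\ge 0$ and $\eta_n>0$, the second by assumption), so you may drop one nonnegative summand at a time on the common high-probability event and divide.
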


\begin{corollary}\label{cor:exact}
Assume that the conditions in Theorem~\ref{thm:control} hold.
Define
\[
\lambda_-:=\lambda_{\zeta}+\max\Big\{0,\ \frac{\lambda_{\zeta}\log(L/\delta)-\eta_n}{C_{\min}}\Big\},
\qquad
\lambda_+:=\frac{\frac{n}{2\sigma^2}\underline \omega\,\beta_{\min}^2-\eta_n-\lambda_{\zeta}\log(Ls^*/\delta)}{C_{\max}^*}.
\]
If $\lambda_-<\lambda_+$, with probability at least $1-c\delta$, we have
$
\widehat S_{\lambda}=S^*  \text{for all }\lambda\in\Lambda\cap(\lambda_-,\lambda_+).
$
\end{corollary}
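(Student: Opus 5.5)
The plan is to derive Corollary~\ref{cor:exact} directly from Corollary~\ref{cor:clean}, or more precisely from the simultaneous inequality of Theorem~\ref{thm:control}. We work on the single event $\mathcal E$ of probability at least $1-c\delta$ on which that inequality holds for every $\lambda\in\Lambda$. The claim is then deterministic on $\mathcal E$. For each $\lambda\in\Lambda\cap(\lambda_-,\lambda_+)$ we will show that both $\Delta^+(\widehat S_\lambda)$ and $\Delta^-(\widehat S_\lambda)$ must vanish, because $\Delta^\pm$ are nonnegative integers and any nonzero value would force the left-hand side to exceed $\lambda_\zeta\log(L/\delta)$.

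\textbf{Step 1 (no false positives).} For $\lambda>\lambda_-$ we have $\lambda-\lambda_\zeta>0$ and $\lambda-\lambda_\zeta>(\lambda_\zeta\log(L/\delta)-\eta_n)/C_{\min}$, assuming $C_{\min}>0$, which holds since $\log^*(r)>0$ for $r\ge2$ and $C_{\min}$ is taken over at least one index. Hence
\[
A_\lambda:=\eta_n+(\lambda-\lambda_\zeta)C_{\min}>\lambda_\zeta\log(L/\delta).
\]
If $\Delta^+(\widehat S_\lambda)\ge1$, the first term of Theorem~\ref{thm:control} is at least $A_\lambda$.

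\textbf{Step 2 (no false negatives).} For $\lambda<\lambda_+$ we rearrange the definition of $\lambda_+$ using $\log(Ls^*/\delta)=\log(L/\delta)+\log s^*$. This gives
\[
B_\lambda:=\tfrac{n}{2\sigma^2}\underline\omega\beta_{\min}^2-\eta_n-\lambda C^*_{\max}-\lambda_\zeta\log s^*>\lambda_\zeta\log(L/\delta)>0.
\]
In particular the coefficient of $\Delta^-$ is positive, which is exactly the hypothesis of Corollary~\ref{cor:clean}. It is needed only on the subinterval, and this is the one point to check: Theorem~\ref{thm:control} holds for all $\lambda\in\Lambda$ without that hypothesis, so we use the inequality directly rather than the corollary.

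\textbf{Step 3 (combine).} Both coefficients $A_\lambda$ and $B_\lambda$ are positive, so both terms on the left of Theorem~\ref{thm:control} are nonnegative. The inequality therefore gives
\[
A_\lambda\Delta^+(\widehat S_\lambda)\le\lambda_\zeta\log(L/\delta)\quad\text{and}\quad B_\lambda\Delta^-(\widehat S_\lambda)\le\lambda_\zeta\log(L/\delta).
\]
By Steps 1 and 2, each bound forces the integer to be strictly less than $1$, hence $0$. This is equivalent to applying Corollary~\ref{cor:clean} and noting that both displayed upper bounds are $<1$.

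It remains to pass from $\Delta^+=\Delta^-=0$ to $\widehat S_\lambda=S^*$. In all-subset selection the atoms are single covariates, so $\Delta^-(S)=|S^*\setminus S|$ and $\Delta^+(S)=|S\setminus S^*|$, and both vanishing means $S=S^*$. Since $\mathcal E$ does not depend on $\lambda$, the conclusion holds simultaneously for all such $\lambda$.

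The main, if mild, obstacle is bookkeeping: the positivity of $B_\lambda$ is needed to drop the $\Delta^-$ term in Step 1, and the definition of $\lambda_+$ (with $\log(Ls^*/\delta)$) is exactly what supplies it.
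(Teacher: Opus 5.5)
Your argument is correct and is essentially the intended one: the paper gives no separate proof, treating the corollary as an immediate consequence of Theorem~\ref{thm:control} (via Corollary~\ref{cor:clean}). That deduction is exactly yours: on the single event of Theorem~\ref{thm:control}, $\lambda>\lambda_-$ and $\lambda<\lambda_+$ force both coefficients to exceed $\lambda_\zeta\log(L/\delta)$, so the integers $\Delta^\pm(\widehat S_\lambda)$ must vanish. One small fix is that $C_{\min}>0$ holds for every rank including $r_i=1$, since $\log^*(1)=2\log\log 3>0$ (as the paper notes in the proof of Theorem~\ref{thm:control}), so you need not restrict to $r\ge 2$.
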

Suppose \(\eta_n=\log n\), \(\delta=n^{-c}\) for a fixed \(c>0\), and \(L\) and
\(s^*\) grow at most polynomially in \(n\), so that \(\log(Ls^*/\delta)\asymp
\log n\). If \(\lambda_{\zeta}\log(L/\delta)\le \eta_n\), then \(\lambda_-=\lambda_{\zeta}\). Hence the exact
recovery region in Corollary~\ref{cor:exact} is nonempty under
$
\beta_{\min}^2
\gtrsim
\frac{\sigma^2}{n\underline\omega}\{C_{\max}^*+\log n\}.
$
Moreover, if \(S^*\) is contained in the first \(\tau_n\)
ranked variables, then under the optimal beta-min scale
\(\beta_{\min}^2\asymp \sigma^2\log p/(n\underline\omega)\), the upper endpoint satisfies
$
\lambda_+\asymp \log p/\log^*(\tau_n).
$

\subsection{A concrete \texorpdfstring{$\lambda$}{lambda}-path and the computation--statistics trade-off}\label{sub:tradeoff}
We initialize at
$\lambda_{\max}:=\|y\|_2^2/
(\sigma^2\min_{j\in[p]}C_j)$, for which the null model is globally
optimal.
Given \(\rho\in(0,1)\), we use the geometric grid
$
\lambda_k=\lambda_{\max}\rho^{k-1},\ k\in[L].
$
Larger $\rho$ yields finer resolution, whereas smaller $\rho$ reduces
the number of evaluations but may lead to larger changes in the
budget-constrained region.

We next give a concrete pathwise interpretation with \(\eta_n=\log n\). Suppose \(\|\mu_n\|_2^2\le c n\sigma^2\) and
consider \(\lambda_{\max}\asymp n\).
Theorem~\ref{thm:coarse_poly_regime} yields
$
N_{\lambda_k}
\le
\exp\left\{Cn/\lambda_k\right\}
\le
\exp\{C\rho^{-(k-1)}\}
$
with high probability.
Hence the search region remains polynomial in \(p\) up to the
computational boundary
$
\lambda_k\asymp n/\log p,
$
corresponding to
$
k_{\mathrm{poly}}\asymp \log_{1/\rho}\log p.
$

The same path also has a statistical interpretation. Suppose the true support is
contained in the first \(\tau_n\) ranked variables, and the risk bound in
Section~\ref{sec:risk} gives
\[
\mathbb E\{\operatorname{ASE}(\widehat S_{\lambda_k})\}
\le
C\sigma^2
\left\{
\frac{s^*\log n}{n}
+
\rho^{k-1}s^*\log^*(\tau_n)
\right\}.
\]

If the path is continued beyond the computational boundary to smaller
\(\lambda\), the search may no longer have a polynomial-size certificate, but the
oracle benchmark continues to improve. For simplicity, assume that
\(\log(ep/s^*)\asymp\log p\). The oracle benchmark matches the minimax rate when
$
\lambda_k \lesssim \log p/\log^*(\tau_n) .
$
Equivalently,
$
k
\gtrsim
k_{\mathrm{stat}}
:=
\log_{1/\rho}
\left\{
n\log^*(\tau_n)/{\log p}
\right\}.
$
For all $k\ge k_{\mathrm{stat}}$, the risk bound becomes
\[
\mathbb E\{\operatorname{ASE}(\widehat S_{\lambda_k})\}
\le
C\sigma^2\frac{s^*\log p}{n}.
\]
By
Corollary~\ref{cor:exact}, if
\(\beta_{\min}^2\asymp \sigma^2\log p/(n\underline\omega)\), the solution path enters
the exact-recovery region at the same order as \(k_{\mathrm{stat}}\). Thus, the statistical entry marks both minimax risk adaptation without a beta-min condition and entry into the exact-recovery region when the condition holds. Moreover, if \(\lambda_-=\lambda_{\zeta}\) and
\(\lambda_+\asymp \log p/\log^*(\tau_n)\), then the region contains about
$
\log_{1/\rho}
\left\{
\log p/\log^*(\tau_n)
\right\}
$
grid points. Hence, when \(\log^*(\tau_n)\ll\log p\), the selected support remains
equal to \(S^*\) over a growing segment of the \(\lambda\)-path.

Tables~\ref{tab:dimensional-regimes}
and~\ref{tab:ranking-sensitive-entry}
summarize the dimension-dependent and ranking-dependent regimes.
The first reports the certified polynomial range and statistical
benchmarks under representative growth rates of $p$, while the second
shows how the rank envelope $\tau_n$ affects the statistical entry and
the beta-min scale for exact recovery.
Appendix~\ref{app:pathwise-sim} provides supporting simulations.
Figure~\ref{fig:pathwise} gives a schematic summary.
A smaller $\tau_n$ shifts the statistical entry earlier along the path,
which lies within the certified polynomial region when
$k_{\mathrm{stat}}\lesssim k_{\mathrm{poly}}$, or equivalently
$\log^*(\tau_n)\lesssim(\log p)^2/n$.
Thus, the path may be stopped early for a conservative risk guarantee
or continued to smaller $\lambda$ to sharpen the oracle benchmark and
potentially attain the minimax-optimal risk scale.
Algorithm~\ref{alg:dcic} summarizes the search strategy.

\begin{table}[H]
\centering
\caption{Dimension-dependent scales along the DCIC path. Here \(k_{\rm poly}\) is the endpoint of the certified polynomial range. The last two columns report the target statistical benchmarks, whose ranking-dependent entry points are given in Table~\ref{tab:ranking-sensitive-entry}.}
\label{tab:dimensional-regimes}
\small
\resizebox{\textwidth}{!}{
\begin{tabular}{cccc}
\toprule
\multirow{2}{*}{Growth of \(p\)}
& \multicolumn{1}{c}{Computational certificate}
& \multicolumn{2}{c}{Statistical benchmarks} \\
\cmidrule(lr){2-2}\cmidrule(lr){3-4}
& $k\leq k_{\rm poly}$ 
& ASE benchmark
& Minimax beta-min scale \\
\midrule

\(n^\gamma, \gamma>0\)
& \(k\le \big\lceil \log_{1/\rho}(\gamma\log n)\big\rceil\)
& \(\sigma^2 s^*\log n/n\)
& \(\beta_{\min}^2\gtrsim \sigma^2\log n/(n\underline\omega)\)
\\[1.2ex]

\(\exp(n^\xi),\ 0<\xi<1\)
& \(k\le \big\lceil \log_{1/\rho}(n^\xi)\big\rceil\)
& \(\sigma^2 s^* n^{\xi-1}\)
& \(\beta_{\min}^2\gtrsim \sigma^2 n^{\xi-1}/\underline\omega\)
\\[1.2ex]

\(\exp(n)\)
& \(k\le \big\lceil \log_{1/\rho}(n)\big\rceil\)
& \(\sigma^2 s^*\)
& \(\beta_{\min}^2\gtrsim \sigma^2/\underline\omega\)
\\
\bottomrule
\end{tabular}
}
\end{table}

\begin{table}[H]
\centering
\caption{Effect of ranking quality on the statistical entry and certified beta-min scale along the DCIC path. Here \(k_{\rm stat}\) denotes the entry index at which the oracle
benchmark reaches the minimax ASE scale, and the certified beta-min scale refers to
the minimum signal strength for exact recovery within the certified range $k \leq k_{\rm poly}$.
}
\label{tab:ranking-sensitive-entry}

\begingroup
\setlength{\tabcolsep}{4.0pt}
\renewcommand{\arraystretch}{1.28}
\footnotesize

\begin{tabular}{@{}p{4.5cm}p{4.3cm}p{4.8cm}@{}}
\toprule
\multicolumn{1}{c}{Ranking regime}
& \multicolumn{1}{c}{$k\ge k_{\rm stat}$}
& \multicolumn{1}{c}{Certified beta-min scale}
\\
\midrule

Near-oracle: \(\tau_n\asymp s^*\)
& \(\displaystyle
k\ge \left\lceil
\log_{1/\rho}
\left\{
\frac{n\log^*(s^*)}{\log p}
\right\}
\right\rceil
\)
& \(\displaystyle
\beta_{\min}^2
\gtrsim
\frac{\sigma^2}{\underline\omega}
\left\{
\frac{\log^*(s^*)}{\log p}
+
\frac{\log n}{n}
\right\}
\)
\\
\midrule

Favorable: \(\log^*(\tau_n)=o(\log p)\)
& \(\displaystyle
k\ge \left\lceil
\log_{1/\rho}
\left\{
\frac{n\log^*(\tau_n)}{\log p}
\right\}
\right\rceil
\)
& \(\displaystyle
\beta_{\min}^2
\gtrsim
\frac{\sigma^2}{\underline\omega}
\left\{
\frac{\log^*(\tau_n)}{\log p}
+
\frac{\log n}{n}
\right\}
\)
\\
\midrule

Uninformative: \(\log^*(\tau_n)\asymp \log p\)
& \(\displaystyle
k\ge \left\lceil
\log_{1/\rho}(n)
\right\rceil
\)
& \(\displaystyle
\beta_{\min}^2
\gtrsim
\sigma^2/\underline\omega
\)
\\

\bottomrule
\end{tabular}
\endgroup
\end{table}

\begin{figure}[htbp]
\centering

\scalebox{1}{%
\begin{tikzpicture}

% =====================================================
% Global parameters
% =====================================================
\pgfmathsetmacro{\xminv}{2.5}
\pgfmathsetmacro{\xmaxv}{50000}

% Common computational boundary
\pgfmathsetmacro{\Npoly}{200}

% Ranking-specific statistical entry points
\pgfmathsetmacro{\NstatBlue}{60}
\pgfmathsetmacro{\NstatRed}{1800}
\pgfmathsetmacro{\NstatGreen}{25000}

% ASE-panel levels
\pgfmathsetmacro{\ystart}{2.45}
\pgfmathsetmacro{\yfloor}{1.28}

% Shape parameters
\pgfmathsetmacro{\kBlue}{5.2}
\pgfmathsetmacro{\kRed}{2.3}
\pgfmathsetmacro{\kGreen}{1.10}

% Ranking-specific entry notation
\def\NentryNO{N_{\lambda_{k_{\mathrm{stat}}^{\mathrm{NO}}}}}
\def\NentryF{N_{\lambda_{k_{\mathrm{stat}}^{\mathrm{F}}}}}
\def\NentryU{N_{\lambda_{k_{\mathrm{stat}}^{\mathrm{U}}}}}

\begin{groupplot}[
    group style={
        group size=2 by 1,
        horizontal sep=1.65cm
    },
    width=6.9cm,
    height=5.5cm,
    xmode=log,
    xmin=\xminv,
    xmax=\xmaxv,
    axis lines=left,
    axis line style={thick},
    tick align=outside,
    xtick=\empty,
    grid=none,
    tick label style={font=\small},
    label style={font=\small},
    clip=false
]

% =====================================================
% Panel (a): ASE trade-off
% =====================================================
\nextgroupplot[
    ymin=1.15,
    ymax=2.70,
    ylabel={ASE bound},
    title={(a) ASE trade-off},
    title style={font=\small, yshift=-2pt},
    ytick=\empty,
    legend to name=sharedlegend,
    legend columns=3,
    legend style={
        draw=none,
        fill=none,
        font=\scriptsize,
        /tikz/every even column/.append style={column sep=0.8em}
    },
    legend cell align={left}
]

% Minimax-scale benchmark
\addplot[
    black,
    dashed,
    thick,
    forget plot
] coordinates {
    (\xminv,\yfloor)
    (\xmaxv,\yfloor)
};

% Computational boundary
\addplot[
    black,
    dashed,
    thick,
    forget plot
] coordinates {
    (\Npoly,1.15)
    (\Npoly,2.60)
};

% Region labels
\node[
    font=\scriptsize,
    anchor=south,
    align=center
]
at (axis cs:20,2.16)
{Certified\\ polynomial\\ region};

\node[
    font=\scriptsize,
    anchor=south,
    align=center
]
at (axis cs:2600,2.16)
{Beyond\\ certified\\ region};

% Minimax-scale label
\node[
    font=\scriptsize,
    anchor=east,
    align=center
]
at (axis cs:\xminv,\yfloor)
{Minimax\\ scale};

% -----------------------------------------------------
% Near-oracle
% -----------------------------------------------------
\addplot[
    blue,
    thick,
    smooth,
    mark=none,
    domain=\xminv:\NstatBlue,
    samples=300
]
{
\yfloor + (\ystart-\yfloor) *
(
    (
        exp(
            -\kBlue *
            (
                (ln(x)-ln(\xminv))
                /
                (ln(\NstatBlue)-ln(\xminv))
            )
        )
        -
        exp(-\kBlue)
    )
    /
    (1-exp(-\kBlue))
)
};
\addlegendentry{Near-oracle}

\addplot[
    blue,
    thick,
    mark=none,
    forget plot,
    domain=\NstatBlue:\xmaxv,
    samples=2
]
{\yfloor};

\addplot[
    blue,
    only marks,
    mark=*,
    mark size=3pt,
    forget plot
] coordinates {
    (\NstatBlue,\yfloor)
};

% -----------------------------------------------------
% Favorable
% -----------------------------------------------------
\addplot[
    red!80!black,
    thick,
    smooth,
    mark=none,
    domain=\xminv:\NstatRed,
    samples=300
]
{
\yfloor + (\ystart-\yfloor) *
(
    (
        exp(
            -\kRed *
            (
                (ln(x)-ln(\xminv))
                /
                (ln(\NstatRed)-ln(\xminv))
            )
        )
        -
        exp(-\kRed)
    )
    /
    (1-exp(-\kRed))
)
};
\addlegendentry{Favorable}

\addplot[
    red!80!black,
    thick,
    mark=none,
    forget plot,
    domain=\NstatRed:\xmaxv,
    samples=2
]
{\yfloor};

\addplot[
    red!80!black,
    only marks,
    mark=*,
    mark size=3pt,
    forget plot
] coordinates {
    (\NstatRed,\yfloor)
};

% -----------------------------------------------------
% Uninformative
% -----------------------------------------------------
\addplot[
    green!55!black,
    thick,
    smooth,
    mark=none,
    domain=\xminv:\NstatGreen,
    samples=300
]
{
\yfloor + (\ystart-\yfloor) *
(
    (
        exp(
            -\kGreen *
            (
                (ln(x)-ln(\xminv))
                /
                (ln(\NstatGreen)-ln(\xminv))
            )
        )
        -
        exp(-\kGreen)
    )
    /
    (1-exp(-\kGreen))
)
};
\addlegendentry{Uninformative}

\addplot[
    green!55!black,
    thick,
    mark=none,
    forget plot,
    domain=\NstatGreen:\xmaxv,
    samples=2
]
{\yfloor};

\addplot[
    green!55!black,
    only marks,
    mark=*,
    mark size=3pt,
    forget plot
] coordinates {
    (\NstatGreen,\yfloor)
};

% -----------------------------------------------------
% Colored entry marks and labels below the x-axis
% -----------------------------------------------------
\draw[
    blue,
    very thick
]
(axis cs:\NstatBlue,1.15)
--
(axis cs:\NstatBlue,1.12);

\node[
    font=\scriptsize,
    text=blue,
    anchor=north,
    align=center
]
at (axis cs:\NstatBlue,1.12)
{$\NentryNO$};

\draw[
    red!80!black,
    very thick
]
(axis cs:\NstatRed,1.15)
--
(axis cs:\NstatRed,1.12);

\node[
    font=\scriptsize,
    text=red!80!black,
    anchor=north,
    align=center
]
at (axis cs:\NstatRed,1.12)
{$\NentryF$};

\draw[
    green!55!black,
    very thick
]
(axis cs:\NstatGreen,1.15)
--
(axis cs:\NstatGreen,1.12);

\node[
    font=\scriptsize,
    text=green!55!black,
    anchor=north,
    align=center
]
at (axis cs:\NstatGreen,1.12)
{$\NentryU$};

% =====================================================
% Panel (b): Exact recovery
% =====================================================
\nextgroupplot[
    ymin=-0.10,
    ymax=1.15,
    ylabel={Exact recovery},
    title={(b) Exact recovery},
    title style={font=\small, yshift=-2pt},
    ytick={0,0.7},
    yticklabels={No,Yes}
]

% Computational boundary
\addplot[
    black,
    dashed,
    thick,
    forget plot
] coordinates {
    (\Npoly,-0.10)
    (\Npoly,1.08)
};

% Region labels
\node[
    font=\scriptsize,
    anchor=south,
    align=center
]
at (axis cs:18,0.72)
{Certified\\ polynomial\\ region};

\node[
    font=\scriptsize,
    anchor=south,
    align=center
]
at (axis cs:2600,0.72)
{Beyond\\ certified\\ region};

% -----------------------------------------------------
% Near-oracle recovery step
% -----------------------------------------------------
\addplot[
    blue,
    thick,
    mark=none,
    forget plot
] coordinates {
    (\xminv,0)
    (\NstatBlue,0)
    (\NstatBlue,0.7)
    (\xmaxv,0.7)
};

\addplot[
    blue,
    only marks,
    mark=*,
    mark size=3pt,
    forget plot
] coordinates {
    (\NstatBlue,0.7)
};

% -----------------------------------------------------
% Favorable recovery step
% -----------------------------------------------------
\addplot[
    red!80!black,
    thick,
    mark=none,
    forget plot
] coordinates {
    (\xminv,0)
    (\NstatRed,0)
    (\NstatRed,0.7)
    (\xmaxv,0.7)
};

\addplot[
    red!80!black,
    only marks,
    mark=*,
    mark size=3pt,
    forget plot
] coordinates {
    (\NstatRed,0.7)
};

% -----------------------------------------------------
% Uninformative recovery step
% -----------------------------------------------------
\addplot[
    green!55!black,
    thick,
    mark=none,
    forget plot
] coordinates {
    (\xminv,0)
    (\NstatGreen,0)
    (\NstatGreen,0.7)
    (\xmaxv,0.7)
};

\addplot[
    green!55!black,
    only marks,
    mark=*,
    mark size=3pt,
    forget plot
] coordinates {
    (\NstatGreen,0.7)
};

% -----------------------------------------------------
% Colored entry marks and labels below the x-axis
% -----------------------------------------------------
\draw[
    blue,
    very thick
]
(axis cs:\NstatBlue,-0.10)
--
(axis cs:\NstatBlue,-0.13);

\node[
    font=\scriptsize,
    text=blue,
    anchor=north,
    align=center
]
at (axis cs:\NstatBlue,-0.13)
{$\NentryNO$};

\draw[
    red!80!black,
    very thick
]
(axis cs:\NstatRed,-0.10)
--
(axis cs:\NstatRed,-0.13);

\node[
    font=\scriptsize,
    text=red!80!black,
    anchor=north,
    align=center
]
at (axis cs:\NstatRed,-0.13)
{$\NentryF$};

\draw[
    green!55!black,
    very thick
]
(axis cs:\NstatGreen,-0.10)
--
(axis cs:\NstatGreen,-0.13);

\node[
    font=\scriptsize,
    text=green!55!black,
    anchor=north,
    align=center
]
at (axis cs:\NstatGreen,-0.13)
{$\NentryU$};

\end{groupplot}

% =====================================================
% Common x-axis labels
% =====================================================
\node[
    font=\small
]
at ($(group c1r1.south)+(0,-0.95cm)$)
{Search size $N_\lambda$};

\node[
    font=\small
]
at ($(group c2r1.south)+(0,-0.95cm)$)
{Search size $N_\lambda$};

% =====================================================
% Shared legend
% =====================================================
\node[
    anchor=north
]
at ($(group c1r1.south)!0.5!(group c2r1.south)+(0,-1.03cm)$)
{\pgfplotslegendfromname{sharedlegend}};

% =====================================================
% Centered dashed formula box
% =====================================================
\coordinate (boxcenter) at
($(group c1r1.south west)!0.5!(group c2r1.south east)$);

\node[
    draw,
    dashed,
    rounded corners=2pt,
    inner xsep=7pt,
    inner ysep=6pt,
    anchor=north,
    align=center
]
at ($(boxcenter)+(0,-1.8cm)$)
{
\scriptsize
\setlength{\tabcolsep}{0pt}
\begin{tabular}{
@{}
c
@{\hspace{0.25cm}}
c
@{\hspace{0.25cm}}
c
@{}
}
\textcolor{blue}{
\(\displaystyle
\NentryNO
\le
\exp\!\left\{
O\!\left(
\frac{n\log^*(s^*)}{\log p}
\right)
\right\}
\)},
&
\textcolor{red!80!black}{
\(\displaystyle
\NentryF
\le
\exp\!\left\{
o\!\left(
n
\right)
\right\}
\)},
&
\textcolor{green!55!black}{
\(\displaystyle
\NentryU
\le
\exp\{O(n)\}
\)}
\end{tabular}
};

\end{tikzpicture}
}%

\caption{
A schematic illustration of the pathwise computation--statistics trade-off.
The dashed line marks the boundary of the polynomial-in-$p$ certified region.
Panel~(a) shows the entry to the minimax ASE benchmark,
while panel~(b) shows the corresponding exact-recovery entry under the beta-min scale. The colored expressions give high-probability
search-size bounds at these entries. An entry lies within the certified region when \(k_{\rm stat}\lesssim k_{\rm poly}\).
}
\label{fig:pathwise}
\end{figure}

\begin{algorithm}[H]
\caption{Complexity-guided search strategy}
\label{alg:dcic}
\begin{algorithmic}[1]
\Require $\{C_i\}_{i=1}^p$;
$\Lambda=\{\lambda_1>\cdots>\lambda_L\}$;
$\sigma$; $T_{\rm stop}$.
\State $\widehat S_{\lambda_1}\gets\emptyset$,
$(\widetilde B,\widetilde C)
\gets\bigl(\mathrm{DCIC}_{\lambda_1}(\emptyset),0\bigr)$,
$\mathrm{unchanged}\gets0$, and
$\widehat\lambda\gets\lambda_L$.
\For{$k=2,\ldots,L$}
  \State Construct $\mathcal M_k$ from
  $(\widetilde B,\widetilde C,\lambda_k)$ using the two pruning stages.
  \State $\widehat S_{\lambda_k}
  \in\arg\min_{S\in\mathcal M_k}
  \mathrm{DCIC}_{\lambda_k}(S)$.
  \If{$\widehat S_{\lambda_k}
  =\widehat S_{\lambda_{k-1}}$}
    \State $\mathrm{unchanged}
    \gets\mathrm{unchanged}+1$
  \Else
    \State $\mathrm{unchanged}\gets0$
  \EndIf
  \If{$\mathrm{unchanged}\ge T_{\rm stop}$}
    \State $\widehat\lambda\gets\lambda_k$;
    \textbf{break}
  \EndIf
  \State $(\widetilde B,\widetilde C)
  \gets
  \bigl(
  \mathrm{DCIC}_{\lambda_k}(\widehat S_{\lambda_k}),
  C_{\widehat S_{\lambda_k}}
  \bigr)$.
\EndFor
\Ensure $\widehat S_{\widehat\lambda}$
\end{algorithmic}
\end{algorithm}

\begin{remark}\label{rem:accelerate}
In practice, we use a LASSO-based preliminary ranking and two acceleration
heuristics: restricting the depth-first search to the first $\lfloor p/5\rfloor$
ranked variables and replacing $T_s$ by a heuristic upper bound that retains $0.8$
of the RSS term. We set $T_{\mathrm{stop}}=8$ in all experiments. Sensitivity analyses for these implementation choices are reported in
Appendix~\ref{sub:sensitivity}.
\end{remark}

\section{Numerical Experiments}\label{sec:experiments}
We evaluate the performance of DCIC under strongly correlated designs and sparse additive models with basis-family uncertainty.

\subsection{All-subset selection under strongly correlated designs}\label{sub:sim_highly}
We compare DCIC with SCAD \citep{fan2001variable} and MCP
\citep{zhang2010}, tuned by either 10-fold cross-validation or HBIC
\citep{wang2013} under known $\sigma$, and with HTP
\citep{foucart2011hard}, using the adaptive procedure proposed by \citet{zhang2025},
and ABESS \citep{Zhu202014241}, both under known $\sigma$.
For DCIC, we construct the $\lambda$-path $\Lambda$ using $\rho=0.9$ with grid length $L=30$.
Given $(\hat\beta,\widehat S)$, we assess performance using
$\mathrm{ASE}=\|X\hat\beta-X\beta^*\|_2^2/n$, false positives (FP),
false negatives (FN), and the Matthews correlation coefficient
\[
\mathrm{MCC}
=
\frac{\mathrm{TP}\,\mathrm{TN}-\mathrm{FP}\,\mathrm{FN}}
{\sqrt{(\mathrm{TP}+\mathrm{FP})(\mathrm{TP}+\mathrm{FN})
(\mathrm{TN}+\mathrm{FP})(\mathrm{TN}+\mathrm{FN})}},
\]
where $\mathrm{TP}=|\widehat S\cap S^*|$,
$\mathrm{TN}=|\widehat S^c\cap(S^*)^c|$,
$\mathrm{FP}=|\widehat S\cap(S^*)^c|$, and
$\mathrm{FN}=|\widehat S^c\cap S^*|$.
We set \(s^* = 10\), \(\sigma = 2\) and  \(p = 500\). The sample size \(n\) increases from 90 to 300 in steps of 30. The nonzero entries of $\beta^*$ are drawn uniformly from $[1,2]$ with random signs. 
We generate the design matrix $X\in\mathbb{R}^{n\times p}$ with independent rows
$X_i \stackrel{\mathrm{i.i.d.}}{\sim} \mathcal{N}(0,\Sigma(w))$ and
\begin{equation*}
\Sigma(w) \;=\; (1-w)\Sigma_{\mathrm{AR}} + w\,\Sigma_{\mathrm{block}},\qquad w\in[0,1].
\end{equation*}
The first component $\Sigma_{\mathrm{AR}}$ is an AR(1) covariance with parameter $\rho_{\mathrm{out}}\in(0,1)$,
\begin{equation*}
(\Sigma_{\mathrm{AR}})_{ij} \;=\; \rho_{\mathrm{out}}^{|i-j|},\qquad 1\le i,j\le p.
\end{equation*}
The second component $\Sigma_{\mathrm{block}}$ is a block-diagonal equi-correlation structure, analogous to Scenario 3 in Section~\ref{sub:example}. Specifically, we partition the coordinates $\{1,\ldots,p\}$ into $10$ consecutive blocks of equal size $50$ and set
\begin{equation*}
(\Sigma_{\mathrm{block}})_{ij} \;=\;
\begin{cases}
1, & i=j,\\[2pt]
\rho_{\mathrm{in}}, & i\neq j \ \text{and $i,j$ belong to the same block},\\[2pt]
0, & \text{otherwise},
\end{cases}
\end{equation*}
where $\rho_{\mathrm{in}}\in(0,1)$ controls the strength of within-block correlation. By construction, $\Sigma(w)$ is positive definite for all $w\in[0,1]$, with $w=0$ corresponding to the AR(1) design and $w=1$ to the block design.   In the experiments, we consider three signal settings:
\begin{itemize}
\item \textbf{Setting 1 (one-per-block signals).} We select one index uniformly at random from each block, and let $S^*$ be the collection of these indices.

\item \textbf{Setting 2 (moderately clustered signals).} We randomly select $5$ blocks, then randomly select $2$ indices from each selected block.

\item \textbf{Setting 3 (highly clustered signals).} We randomly select $2$ blocks, then randomly select $5$ indices from each selected block. 
\end{itemize}
The first two settings are reported in Appendix~\ref{app:highly}. Since Setting~3 is the most
challenging case, with the strongest competition among correlated predictors, we focus on it in the
main text.
For each setting, we set $\rho_{\mathrm{in}}=\rho_{\mathrm{out}}=0.9$ and consider $w\in\{0.8,0.5,0.2\}$. 
Each experiment is repeated 100 times.

\begin{figure}[H]
  \centering
  \includegraphics[scale = 0.55]{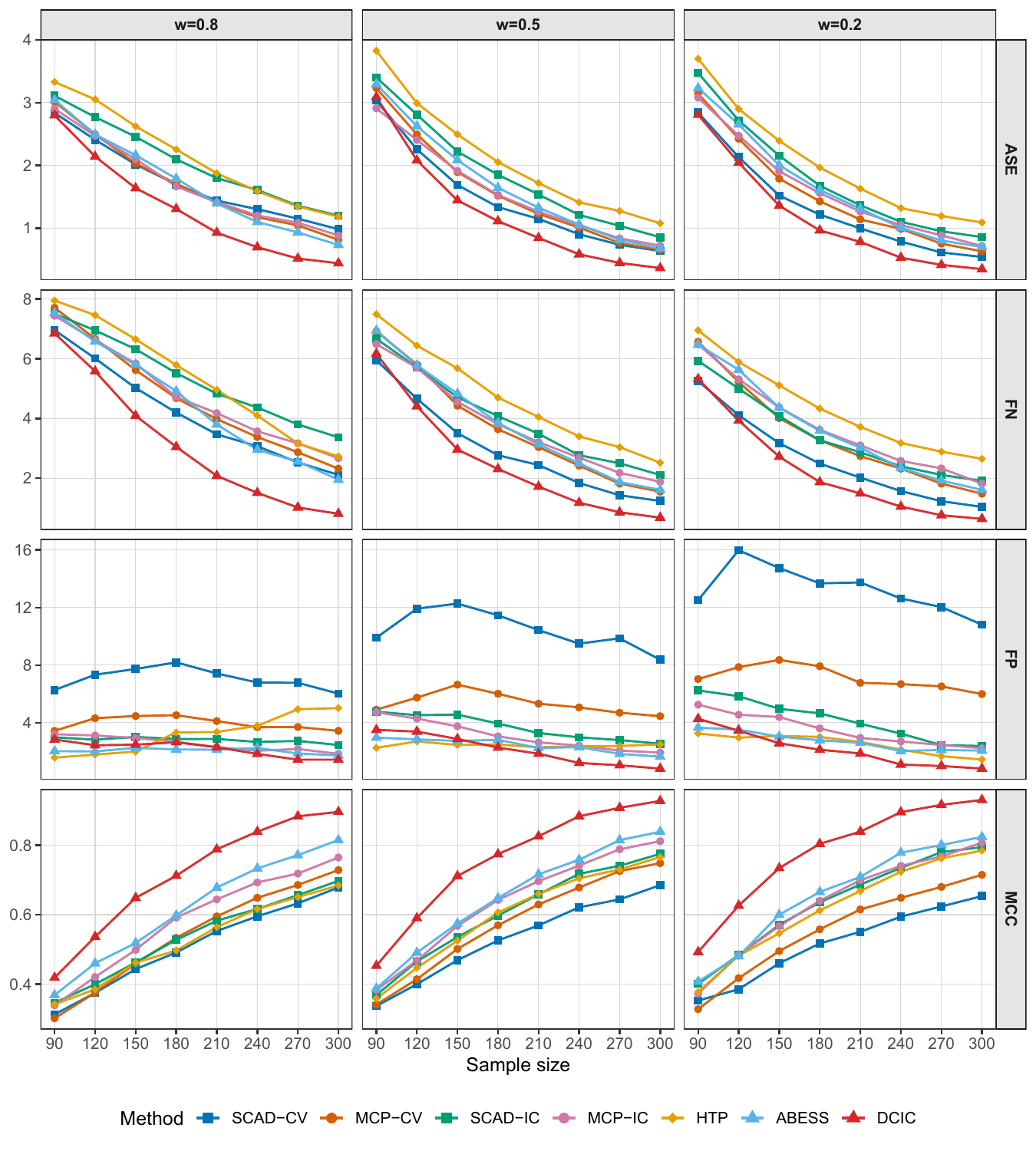}
  \caption{\label{fig:fig3}   Performance metrics under Setting 3 with an increasing sample size.
  The suffix ``-CV'' indicates tuning by 10-fold cross-validation. The suffix ``-IC'' indicates tuning by information criterion.
  }
\end{figure}
Under the most challenging Setting~3 (Figure~\ref{fig:fig3}), severe within-block collinearity substantially degrades the model selection accuracy of all methods. Although no method achieves exact support recovery under this setting, DCIC still exhibits robust performance in both parameter estimation and model selection. Even at small sample sizes, it sustains a high MCC by tightly controlling both FP and FN.
In contrast, HTP and ABESS tend to underselect under severe collinearity, whereas the penalized competitors generally exhibit either higher FP or weaker overall recovery.
\subsection{Empirical evaluation of multi-class basis selection}\label{sub:sim_sam}
We let $n$ range from $90$ to $360$ in steps of $30$, with $p=200$
and $g^*=5$. The variables are partitioned into five blocks of size $40$,
and $G^*$ contains two clusters of sizes $2$ and $3$.
Using the correlation structure in Section~\ref{sub:sim_highly} with
$\rho_{\mathrm{in}}=\rho_{\mathrm{out}}=0.8$ and $w=0.5$, we generate
$Z_i\sim N(0,\Sigma(w))$ and set $x_{ij}=\Phi(Z_{ij})$.
Thus, the covariates are supported on $[0,1]$ with dependence induced by
the corresponding Gaussian copula.

We consider 
$
\mathcal F=\{\text{Fourier},\ \text{Spline},\ \text{Haar}\}.
$
For Fourier, each component is expanded over an ordered trigonometric dictionary
with candidate orders $K\in\{2,4,6\}$.
For spline, each component is expanded over a cubic B-spline dictionary with candidate
dimensions $K\in\{3,4,5\}$.
For Haar, each component is expanded over a three-level Haar wavelet dictionary without
the scaling function. The three levels contain $1$, $2$, and $4$ atoms, respectively.
Accordingly, Fourier and spline are treated as ordered families, whereas Haar is treated
as a subset selection family.

In the Fourier and spline settings, the true order is $K^*=4$.
In the Haar setting, each active component contains exactly two nonzero wavelet atoms.
The active components are generated in a family-specific manner:
\begin{itemize}
  \item \textbf{Fourier truth ($m^*=\text{Fourier}$).}
  For each $j\in G^*$, set
  \[
  \beta_j
  = A_{\mathrm{F}}
  \Big(
  \gamma_{\mathrm{F}}^{3} u(\phi_4)^\top,\ 
  \gamma_{\mathrm{F}}^{2} u(\phi_3)^\top,\ 
  \gamma_{\mathrm{F}} u(\phi_2)^\top,\ 
  u(\phi_1)^\top
  \Big)^\top,
  \qquad
  u(\phi):=(\cos\phi,\sin\phi)^\top,
  \]
  where $A_{\mathrm{F}}=0.6$, $\gamma_{\mathrm{F}}=0.4$, and
  $\phi_\ell\stackrel{i.i.d.}{\sim}\mathrm{Unif}[0,2\pi]$ for $\ell=1,\ldots,4$.

  \item \textbf{Spline truth ($m^*=\text{Spline}$).}
  Define
  \[
  g(x)=\exp\!\left(-\frac{(x-0.1)^2}{2h^2}\right)
  -\alpha_0\exp\!\left(-\frac{(x-0.9)^2}{2h^2}\right),
  \]
  where $h=0.05$ and $\alpha_0=0.8$.
  For each $j\in G^*$, let $\beta_j$ be the least-squares projection of
  $\big(g(x_{1j}),\ldots,g(x_{nj})\big)^\top$ onto the cubic B-spline basis.

  \item \textbf{Haar truth ($m^*=\text{Haar}$).}
For each \(j\in G^*\), one atom is selected uniformly from each of levels 2 and 3, and the corresponding coefficients have independent random signs and magnitudes drawn from \(\mathrm{Unif}[0.5,1]\).
\end{itemize}

To construct comparable multi-class baselines, we combine group selection methods for ordered families with double-sparse
selection methods for subset-selection families. Specifically, we consider group LASSO
\citep{Y2006} combined with sparse group LASSO \citep{simon2013sparse}, group MCP
combined with composite MCP \citep{huang2012selective}, and $L_0$ methods for group
selection \citep{zhang2022} and double-sparse selection \citep{zhang2024}, representing
LASSO-type, MCP-type, and $L_0$-type procedures, respectively. Here we use EBIC \citep{chen2008extended} to select both the
family and the associated within-family model. We also report an oracle benchmark,
namely the ordinary least squares estimator on the true support.

We report the ASE, FP, FN, and MCC, where FP, FN, and MCC
are defined at the group level. Let $R$ denote
the number of simulation replications. For replication $r$, let $\hat m_r$ and $\hat K_r$
denote the estimated family and, for ordered families, the estimated order, respectively.
To assess family and within-family selection accuracy, we also record three family-level metrics: the family identification accuracy (FIA), defined by \(\mathrm{FIA}=R^{-1}\sum_{r=1}^R \mathbf{1}\{\hat m_r=m^*\}\); for ordered families, the order recovery accuracy (ORA), defined by \(\mathrm{ORA}=R^{-1}\sum_{r=1}^R \mathbf{1}\{\hat m_r = m^*, \hat K_r=K^*\}\); and, for the Haar family, the family-aware MCC (FaMCC), defined by \(\mathrm{FaMCC}=R^{-1}\sum_{r=1}^R \mathbf{1}\{\hat m_r=m^*\}\mathrm{MCC}(\hat S_r,S_r^*)\), where \(S_r^*\) and \(\hat S_r\) denote the true and estimated variable-level support sets in replication \(r\). Each experiment is repeated 100 times.

\begin{figure}[H]
  \centering
  \includegraphics[scale = 0.52]{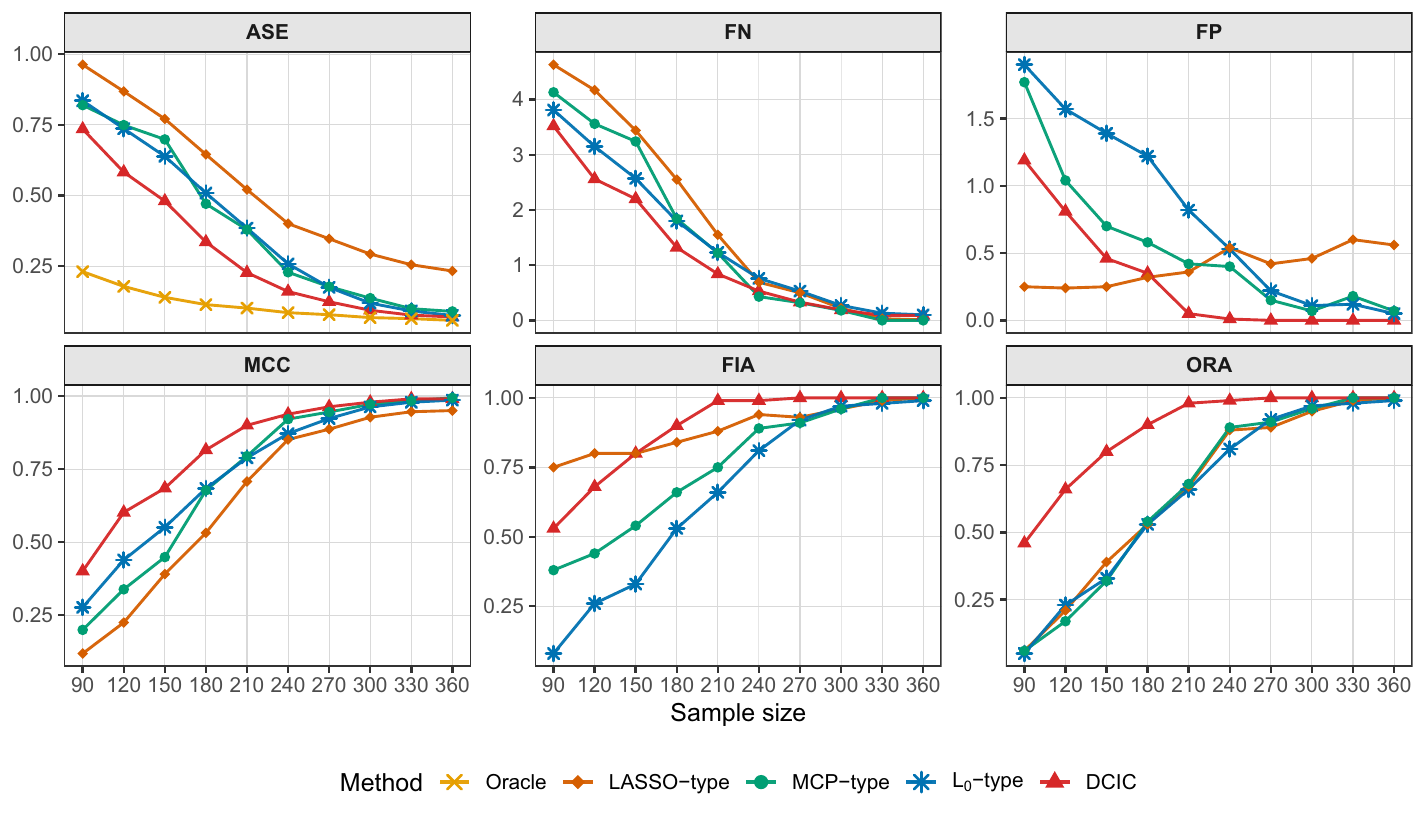}
  \caption{\label{fig:fig4}   Performance metrics under Fourier truth with an increasing sample size.
  }
\end{figure}
Under Fourier and spline truth (Figures~\ref{fig:fig4}--\ref{fig:fig5}), DCIC attains the lowest ASE and highest MCC among the non-oracle competitors, together with rapidly improving FIA and ORA. The advantage is also pronounced in family identification and order recovery: both FIA and ORA increase
rapidly and approach one at moderate sample sizes. In contrast, the LASSO-type
baseline under spline truth reduces FN at the cost of substantially inflated FP, leading to weaker overall recovery.

\begin{figure}[H]
  \centering
  \includegraphics[scale = 0.52]{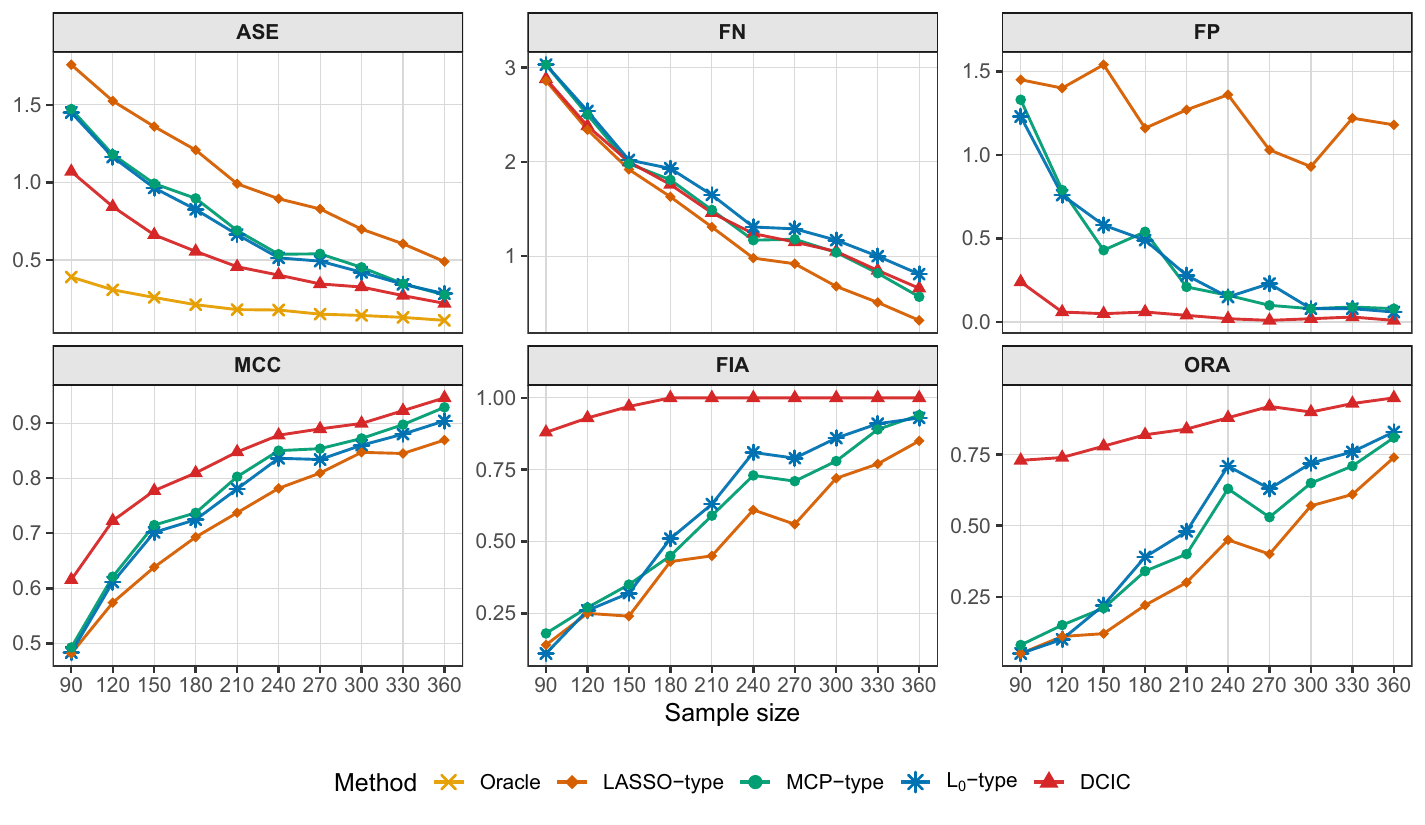}
  \caption{\label{fig:fig5}   Performance metrics under spline truth with an increasing sample size.
  }
\end{figure}

\begin{figure}[H]
  \centering
  \includegraphics[scale = 0.52]{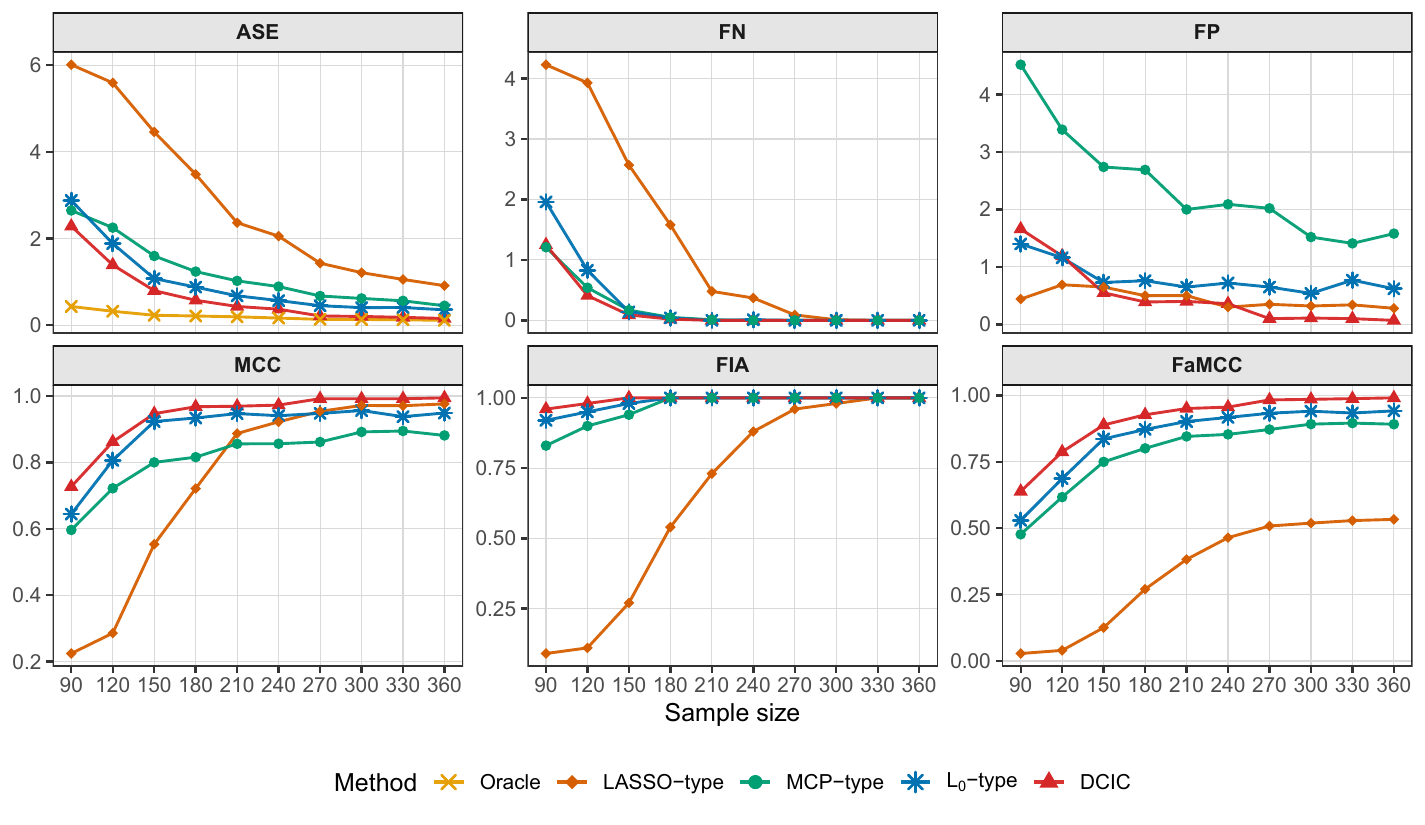}
  \caption{\label{fig:fig6}   Performance metrics under Haar truth with an increasing sample size.
  }
\end{figure}

Under Haar truth (Figure~\ref{fig:fig6}), DCIC attains the highest FaMCC over most sample sizes, indicating accurate family identification and recovery of the within-family sparse structure. Overall, Figures~\ref{fig:fig4}--\ref{fig:fig6} demonstrate that DCIC adapts well across ordered and subset-selection families, consistent with the theory in Section~\ref{sub:example_family}.

\section{Conclusion and Discussion}\label{sec:conclusion}
We introduced the Descriptive-Complexity Information Criterion
(DCIC), which uses Kraft-admissible code lengths to compare models
over large candidate collections under strong predictor dependence and
model-class uncertainty. Under sub-Weibull noise, DCIC achieves
selection consistency through approximation-error separation without
RIP/SRC-type near-orthogonality conditions and yields nonasymptotic
oracle risk bounds under model misspecification. Encoding the class label places heterogeneous model classes on a common
complexity scale, yielding guarantees for exact pair recovery under
identifiability and for risk adaptation across classes. The same complexity induces a
$\lambda$-dependent search budget. Larger values of $\lambda$ yield
polynomial-size retained regions with high probability, whereas
smaller values sharpen the oracle risk benchmark. Along this path, we
also obtain finite-sample control of false positives and false
negatives. The simulations illustrate these properties under strongly
correlated designs and basis-family uncertainty.

Robust criteria may extend DCIC to weaker tail conditions and contaminated observations
\citep{Oliveira2025}, and this direction merits further investigation.

\appendix
% Appendix body extracted from aos-supp.tex.
% Package loading, front matter, and bibliography are handled by DCIC_arxiv.tex.

\section{Proofs of main results}\label{app:proofs}
We begin with a reduction lemma that separates the three classes of competing models.
Let $\widetilde P_S$ be the orthogonal projection onto
$\operatorname{Col}(X_{S\cap S^*})$.
The assumptions of the lemma are as follows:
\begin{itemize}
  \item[(A1)] Overfitted models in $\mathcal M_{\mathrm{sup}}$.
  For every \(S\in\mathcal M_{\mathrm{sup}}\), \(C_S\ge C_{S^*}\). In addition, there exist constants $0<\tilde c_{1}<c_{1}$ such that for all $j\ge1$,
  \begin{equation*}
  \sum_{\substack{S\in\mathcal M_{\mathrm{sup}}\\ \Delta^+(S)=j}}
  \exp\!\left\{
  -c_1
    \lambda^\frac{\alpha}{2}\big(C_{S}-C_{S^*}\big)
  \right\}
  \ \le\
  \exp\!\left\{\tilde c_{1}(\eta_n j)^\frac{\alpha}{2}\right\}.
  \tag{A1}
  \end{equation*}

  \item[(A2)] Underfitted models in $\mathcal M_{\mathrm{sub}}$.
  There exists a sufficiently large constant $c_{2}>1$ such that
  \begin{equation*}
  \mathrm{APP}(S)
  \ \ge\
    c_{2}\Bigg(\eta_n(s^*-s)+\lambda(C_{S^*}^\frac{2}{\alpha}-C_{S}^{\frac{2}{\alpha}})_+
    +\log^\frac{2}{\alpha} N\Bigl(\Delta^-(S)\Bigr)\Bigg).
  \tag{A2}
  \end{equation*}

  \item[(A3)] Wrong models in $\mathcal M_{\mathrm{w}}$.
  For $S\in\mathcal M_{\mathrm{w}}$,
  choose truncation constants $B_{S}$ such that
  \begin{equation*}
  \sum_{S\in\mathcal M_w}
  \Pr\!\left(
  \frac{\varepsilon^\top(P_{S}-\widetilde P_{S})\varepsilon}{\sigma^2}
  > B_{S}\right)
  \longrightarrow 0 .
  \tag{A3a}
  \end{equation*}
  Define the quantity $\Gamma_{S}$ by
  \begin{equation*}
  \Gamma_{S}
  := \mathrm{APP}(S)
     +\eta_n\big(s-s^*\big)+\lambda\big(C^{\frac{2}{\alpha}}_{S}-C^{\frac{2}{\alpha}}_{S^*}\big)
     -B_{S}.
  \end{equation*}
  Assume
  \begin{equation*}
  \sum_{S\in\mathcal M_w}
  \exp\!\left\{
-c_3\
\min\Bigg(\Big(\frac{\Gamma_{S}}{ \sqrt{\mathrm{APP}(S)}}\Big)^{2}, \Big(\frac{\Gamma_{S}}{ \sqrt{\mathrm{APP}(S)}}\Big)^{\alpha}\Bigg)
\right\}
  \longrightarrow 0 .
  \tag{A3b}
  \end{equation*}
\end{itemize}

\begin{lemma}\label{thm:main}
    Under conditions (A1)--(A3), we have
    \begin{equation*}
        \Pr\left(\min_{S \in \mathcal{M}_n,\, S \neq S^*} \mathrm{DCIC}(S)>\mathrm{DCIC}(S^*)\right) \rightarrow 1
        \qquad \text{as } n \rightarrow \infty.
    \end{equation*}
\end{lemma}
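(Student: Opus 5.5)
We propose to prove Lemma~\ref{thm:main} by a union bound over the three competitor classes. Write $\mu_n=X\beta^*$, so that $(\mathrm I_n-P_{S^*})\mu_n=0$ and $\mathrm{RSS}(S^*)=\varepsilon^\top(\mathrm I_n-P_{S^*})\varepsilon$. For any $S$ we have the exact decomposition
\[
\frac{\mathrm{DCIC}(S)-\mathrm{DCIC}(S^*)}{1}
=\mathrm{APP}(S)+\frac{2\mu_n^\top(\mathrm I_n-P_S)\varepsilon}{\sigma^2}
-\frac{\varepsilon^\top(P_S-P_{S^*})\varepsilon}{\sigma^2}+u_S .
\]
It therefore suffices to show that $\sum_{S\neq S^*}\Pr(\text{this difference}\le 0)\to0$, treating $\mathcal M_{\mathrm{sup}}$, $\mathcal M_{\mathrm{sub}}$ and $\mathcal M_{\mathrm w}$ separately. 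The tools are Hanson--Wright-type bounds for quadratic forms of projections and Bernstein-type bounds for linear forms under sub-Weibull noise. Projections are idempotent, so a quadratic form $\varepsilon^\top P\varepsilon/\sigma^2$ with $P$ of rank $r$ exceeds $r+t$ with probability at most $\exp\{-c\min(t^2/r,t)^{\alpha/2}\}$ up to constants. A linear form $a^\top\varepsilon/\sigma$ exceeds $t\|a\|$ with probability at most $\exp\{-c\min(t^2,t^\alpha)\}$.

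\emph{Overfitted models.} Here $\mathrm{APP}(S)=0$ and $P_S-P_{S^*}$ is a projection of rank at most $s-s^*$, so the event reduces to $\varepsilon^\top(P_S-P_{S^*})\varepsilon/\sigma^2\ge u_S$. Because $C_S\ge C_{S^*}$, we have $u_S\ge\eta_n(s-s^*)+\lambda(C_S^{2/\alpha}-C_{S^*}^{2/\alpha})$. We then invoke the inequality $C_S^{2/\alpha}-C_{S^*}^{2/\alpha}\ge (C_S-C_{S^*})^{2/\alpha}$ when $2/\alpha\ge1$; we intend to use this direction, and to handle $\alpha=2$ linearly. With it, the tail becomes $\exp\{-c(\eta_n j)^{\alpha/2}-c\lambda^{\alpha/2}(C_S-C_{S^*})\}$ with $j=\Delta^+(S)\le s-s^*$. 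We choose $\lambda_{\mathrm{sel}}$ large enough that $c\lambda^{\alpha/2}\ge c_1$ in the normalization of (A1). Summing over $S$ with $\Delta^+(S)=j$ and applying (A1) then gives $\exp\{-(c-\tilde c_1)(\eta_n j)^{\alpha/2}\}$, and summing over $j$ gives a quantity that tends to $0$ since $\eta_n\to\infty$. The only delicate point is calibrating constants so that $c>\tilde c_1$, which is where $\tilde c_1<c_1$ is used.

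\emph{Underfitted models.} Here $P_S-P_{S^*}$ is negative semidefinite, so the quadratic term only helps. The event requires $2|\mu_n^\top(\mathrm I_n-P_S)\varepsilon|/\sigma^2\ge \mathrm{APP}(S)+u_S$. By (A2), $u_S\ge -\mathrm{APP}(S)/c_2$, so the right side is at least $(1-1/c_2)\mathrm{APP}(S)$. The linear form has scale $\sigma\sqrt{\mathrm{APP}(S)}$, which gives the tail $\exp\{-c\min(\mathrm{APP}(S),\mathrm{APP}(S)^{\alpha/2})\}$. We group the models by $\ell=\Delta^-(S)$, of which there are $N(\ell)$. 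Assumption (A2) gives $\mathrm{APP}(S)^{\alpha/2}\ge c_2^{\alpha/2}(\log N(\ell)+\eta_n^{\alpha/2}\ell^{\alpha/2}\cdots)$, so each group contributes at most $\exp\{-(c c_2^{\alpha/2}-1)\log N(\ell)-c'\eta_n^{\alpha/2}\}$. Summing over $\ell\le q^*$ yields $\sum_\ell e^{-c''(\eta_n\ell)^{\alpha/2}}\to0$. We use $s^*-s\ge\ell$ and the largeness of $c_2$.

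\emph{Wrong models.} We split $P_S-P_{S^*}=(P_S-\widetilde P_S)-(P_{S^*}-\widetilde P_S)$. The second piece is positive semidefinite because $\mathrm{Col}(X_{S\cap S^*})\subseteq\mathrm{Col}(X_{S^*})$, so dropping it only helps. On the complement of the event in (A3a), which has vanishing probability after a union bound, the quadratic term is at most $B_S$. The event then forces $-2\mu_n^\top(\mathrm I_n-P_S)\varepsilon/\sigma^2\ge\Gamma_S$. If $\Gamma_S>0$, the linear-form tail gives $\exp\{-c_3\min((\Gamma_S/\sqrt{\mathrm{APP}(S)})^2,(\Gamma_S/\sqrt{\mathrm{APP}(S)})^\alpha)\}$, whose sum vanishes by (A3b). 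I expect the main obstacle to be the case $\Gamma_S\le0$, together with making the truncation and linear tails consistent. I would either build into (A3b) the convention that such terms contribute $1$, which forces $\Gamma_S>0$ eventually, or note that (A3b) implicitly requires $\Gamma_S\to\infty$ uniformly. Combining the three classes with (A3a) completes the argument.
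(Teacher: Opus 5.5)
Your proposal is correct and follows the paper's proof. It uses the same three-way union bound: Hanson--Wright plus (A1) for overfitted models, the (A2) lower bound on $\mathrm{APP}(S)$ with grouping by $\Delta^-(S)$ for underfitted models, and, for wrong models, truncation at $B_S$ after discarding $P_{S^*}-\widetilde P_S\succeq 0$, followed by the linear-form tail.

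The deviations are harmless:
\begin{itemize}
\item In the underfitted case you drop $\varepsilon^\top(P_{S^*}-P_S)\varepsilon\ge 0$ by positivity instead of concentrating it as the paper does, which is a simplification.
\item You make explicit the $\Gamma_S\le 0$ convention that the paper leaves implicit.
\item One wording slip: matching the Hanson--Wright constant to $c_1$ in (A1) is a comparison of $\lambda$-free constants, since both exponents carry $\lambda^{\alpha/2}$. A larger $\lambda_{\mathrm{sel}}$ cannot arrange it, so, as in the paper, $c_1$ must be read as (at most) that concentration constant.
\end{itemize}
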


\begin{proof}
We divide the proof of Lemma \ref{thm:main} into three cases.
For simplicity of notation, define
\[
u_{S}=\eta_n(s-s^*)+\lambda(C^{\frac{2}{\alpha}}_{S}-C^{\frac{2}{\alpha}}_{{S^*}}),
\qquad \alpha\in(0,2].
\]

{\bf Case 1:}\ $S \in \mathcal{M}_{\mathrm{sup}}$. In this case, note that
\begin{align}\label{eq:sup}
    \mathrm{DCIC}(S)-\mathrm{DCIC}(S^*) = \frac{\varepsilon^\top (P_{S^*}-P_S)\varepsilon}{\sigma^2}+u_{S}.
\end{align}
Since the entries of $\varepsilon = (\varepsilon_1,\ldots,\varepsilon_n)$ are mean-zero sub-Weibull random variables with variance $\sigma^2$, we have
\begin{equation*}
    \mathbb{E}\{\varepsilon^\top (P_{S}-P_{S^*})\varepsilon\} = \sigma^2 (r_{S}-r_{{S^*}}).
\end{equation*}
Moreover, since $S^*\subset S$, the matrix
$P_S-P_{S^*}$ is an orthogonal projection. Hence,
\begin{align*}
    \|P_S-P_{S^*}\|^2_\mathrm{F} = r_S-r_{S^*},\qquad \|P_S-P_{S^*}\|_\mathrm{op} \leq 1.
\end{align*}
Since $r_S-r_{S^*} \le s-s^*$ and $C_S \ge C_{S^*}$ for $S \in \mathcal M_{\mathrm{sup}}$, we have
\begin{align*}
u_S-(r_S-r_{S^*})
&= \eta_n(s-s^*)+\lambda(C_S^{\frac{2}{\alpha}}-C_{S^*}^{\frac{2}{\alpha}})-(r_S-r_{S^*}) \\
&\ge \frac{\eta_n}{2}(s-s^*)+\lambda(C_S^{\frac{2}{\alpha}}-C_{S^*}^{\frac{2}{\alpha}})\\
&>0.
\end{align*}
Then, for all sufficiently large $n$, we have
\begin{align*}
&\Pr(\mathrm{DCIC}(S)-\mathrm{DCIC}(S^*) \leq 0)\\
\leq& \Pr\!\left(\frac{\varepsilon^\top (P_S-P_{S^*})\varepsilon}{\sigma^2} \geq u_S\right) \\
\leq& \Pr\!\left(\frac{\varepsilon^\top (P_S-P_{S^*})\varepsilon}{\sigma^2}-(r_S-r_{S^*}) \geq u_S-(r_S-r_{S^*})\right) \\
\leq& 2\exp\left\{-c_{1,\alpha, \zeta}\min\left\{
\frac{(u_S-(r_S-r_{S^*}))^2}{r_S-r_{S^*}},
\,
\Bigl(u_S-(r_S-r_{S^*})\Bigr)^{\frac{\alpha}{2}}
\right\}\right\} \\
\leq& 2\exp\left\{-c_{1,\alpha, \zeta}\Bigl(u_S-(r_S-r_{S^*})\Bigr)^{\frac{\alpha}{2}}\right\} \\
\leq& 2\exp\left\{-c_{2,\alpha, \zeta}\Bigl((\eta_n-1)(s-s^*)+\lambda(C_S^{\frac{2}{\alpha}}-C_{S^*}^{\frac{2}{\alpha}})\Bigr)^{\frac{\alpha}{2}}\right\} \\
\leq& 2\exp\left\{-c_{2,\alpha, \zeta}\Bigl((\eta_n-1)(s-s^*)\Bigr)^{\frac{\alpha}{2}}\right\} \\
\to& 0,
\end{align*}
where the third inequality holds for all sufficiently large $n$, and the last two inequalities follow from
$r_S-r_{S^*}\le s-s^*$ and $C_S\ge C_{S^*}$.

The preceding bound gives
\begin{align}
\begin{split}\label{ineq:sup}
    &\sum_{S \in \mathcal{M}_{\mathrm{sup}}}\Pr\left(\mathrm{DCIC}(S)-\mathrm{DCIC}(S^*) \leq 0\right)\\
    \leq&\sum_j \sum_{S \in \mathcal{M}_{\mathrm{sup}}(j)}
    2\exp\left\{-c_{2,\alpha, \zeta}\Bigl((\eta_n-1)j+\lambda(C^{\frac{2}{\alpha}}_S-C^{\frac{2}{\alpha}}_{S^*})\Bigr)^{\frac{\alpha}{2}}\right\}\\
    \leq&2\sum_j\exp\{-c_1(\eta_nj)^{\frac{\alpha}{2}}\}
    \sum_{\substack{S\in\mathcal M_{\mathrm{sup}}\\ \Delta^+(S)=j}}\exp\{-c_1\lambda^{\frac{\alpha}{2}}(C_S-C_{S^*})\},
\end{split}
\end{align}
where the first inequality follows from $s-s^*\geq j$, and the second inequality follows from
$
(C^{\frac{2}{\alpha}}_S-C^{\frac{2}{\alpha}}_{S^*})^{\frac{\alpha}{2}} \geq C_S-C_{S^*}
$
and
$
(a+b)^q\geq 2^{q-1}(a^q+b^q)$ for $0<q\le 1,\ a,b>0.
$
By (A1), inequality \eqref{ineq:sup} yields 
\begin{align*}
    \sum_{S \in \mathcal{M}_{\mathrm{sup}}}\Pr\left(\mathrm{DCIC}(S)-\mathrm{DCIC}(S^*) \leq 0\right)
    &\leq\sum_j2\exp\{-(c_1-\tilde c_{1})(\eta_nj)^{\frac{\alpha}{2}}\}\\
    &\leq 2\int_{0}^{\infty}e^{-(c_1-\tilde c_{1})\eta_n^{\frac{\alpha}{2}}x^{\frac{\alpha}{2}}}dx\\
    &= O(\eta_n^{-1}) \to 0.
\end{align*}

{\bf Case 2: $S \in \mathcal{M}_{\mathrm{sub}}$.} In this case, note that
\begin{align}
\begin{split}\label{eq:sub}
    &\mathrm{DCIC}(S)-\mathrm{DCIC}(S^*)\\
    =& \frac{\varepsilon^\top (P_{S^*}-P_S)\varepsilon}{\sigma^2}+\frac{2\mu_n^\top(P_{S^*}-P_S)\varepsilon}{\sigma^2}+\mathrm{APP}(S)+u_{S}\\
    =& \underbrace{\frac{\varepsilon^\top (P_{S^*}-P_S)\varepsilon}{\sigma^2}-(r_{S^*}-r_S)}_{:= Q_{S}}
    +\underbrace{\frac{2\mu_n^\top(P_{S^*}-P_S)\varepsilon}{\sigma^2}}_{:=L_{S}}+\\
    &\underbrace{\mathrm{APP}(S)+u_{S}+(r_{S^*}-r_S)}_{:=T_{S}}.
\end{split}
\end{align}
Under assumption (A2), we have
\begin{align*}
T_S
&= \mathrm{APP}(S)-\eta_n(s^*-s)-\lambda(C_{S^*}^{\frac{2}{\alpha}}-C_S^{\frac{2}{\alpha}})+(r_{S^*}-r_S)\\
&\ge \mathrm{APP}(S)-\eta_n(s^*-s)-\lambda(C_{S^*}^{\frac{2}{\alpha}}-C_S^{\frac{2}{\alpha}})_+\\
&\ge \Bigl(1-c_2^{-1}\Bigr)\mathrm{APP}(S)>0.
\end{align*}
Set $Z_{S} = Q_S+L_{S}$. Note that
\begin{align}\label{event_relation}
    \{Z_S \leq -T_S\} \subset \{|Z_S| \geq T_S\} \subset \{|Q_S|\geq \tfrac{T_S}{2}\}\cup\{|L_S|\geq\tfrac{T_S}{2}\}.
\end{align}
By Lemma \ref{lem:hanson}, we have
\begin{align}\label{eq:qs}
\begin{split}
    \Pr\{|Q_S| > \tfrac{T_S}{2}\}
    &\leq 2\exp\left\{-c_{1,\alpha, \zeta}\min\left\{\frac{T^2_S}{r_{S^*}-r_S},\, T_{S}^{\frac{\alpha}{2}}\right\}\right\}\\
    &\leq 2\exp\{-c_{1,\alpha, \zeta}T_S^{\frac{\alpha}{2}}\}\\
    &\leq 2\exp\!\left\{-c_{2,\alpha, \zeta}\,\mathrm{APP}(S)^{\frac{\alpha}{2}}\right\},
\end{split}
\end{align}
where the second inequality holds for all sufficiently large $n$, and the third inequality follows from
$
T_S\ge \Bigl(1-c_2^{-1}\Bigr)\mathrm{APP}(S).
$

By Lemma \ref{lem:sum}, we have
\begin{align}\label{eq:ls}
\begin{split}
    \Pr\{|L_S| \geq \tfrac{T_S}{2}\}
    &\leq 2\exp\!\left\{
-c_{1,\alpha, \zeta}\
\min\Bigg(\Big(\frac{T_S}{\sqrt{\mathrm{APP}(S)}}\Big)^2, \Big(\frac{T_S}{\sqrt{\mathrm{APP}(S)}}\Big)^{\alpha}\Bigg)
\right\}\\
&\leq 2\exp\!\left\{
-c_{2,\alpha, \zeta}\,\mathrm{APP}(S)^{\frac{\alpha}{2}}
\right\},
\end{split}
\end{align}
where the last inequality follows from $T_S\ge (1-c_2^{-1})\mathrm{APP}(S)$ and $\mathrm{APP}(S) \to \infty$ as $n \to \infty$.

Consequently, combining \eqref{eq:sub}--\eqref{eq:ls}, we have
\begin{align*}
    \Pr\left(\mathrm{DCIC}(S)-\mathrm{DCIC}(S^*)\leq 0\right)
    \leq 4\exp\!\left\{
-c_{3,\alpha, \zeta}\,\mathrm{APP}(S)^{\frac{\alpha}{2}}
\right\}.
\end{align*}
Using assumption (A2), we further have
\[
\mathrm{APP}(S)^{\frac{\alpha}{2}}
\ge c_{4,\alpha, \zeta}\Bigl((\eta_n(s^*-s))^{\frac{\alpha}{2}}+\log N(\ell)\Bigr),
\]
where $c_{4,\alpha, \zeta}>0$ is a constant depending on $c_2$ and $\alpha$. Hence,
\begin{align*}
    \sum_{\substack{S \in \mathcal{M}_{\mathrm{sub}}\\ \Delta^-(S)=\ell}}
    \Pr\left(\mathrm{DCIC}(S)-\mathrm{DCIC}(S^*)\leq 0\right)
    &\leq 4N(\ell)\exp\left\{-c_{5,\alpha, \zeta}\Bigl((\eta_n(s^*-s))^{\frac{\alpha}{2}}+\log N(\ell)\Bigr)\right\}\\
    &\leq4\exp\left\{-c_{5,\alpha, \zeta}(\eta_n\ell)^{\frac{\alpha}{2}}\right\}N(\ell)^{1-c_{5,\alpha, \zeta}},
\end{align*}
where $c_{5,\alpha, \zeta}>1$ is another constant depending on $c_2$ and $\alpha$. When $c_2$ is sufficiently large, we have $c_{5,\alpha, \zeta}>1$, and thus
\begin{align*}
    \sum_{S \in \mathcal{M}_{\mathrm{sub}}}\Pr\left(\mathrm{DCIC}(S)-\mathrm{DCIC}(S^*)\leq 0\right)
    &\leq 4\sum_{\ell=1}^{|\mathcal{U}^*|}\exp\left\{-c_{5,\alpha, \zeta}(\eta_n\ell)^{\frac{\alpha}{2}}\right\}\\
    &\leq 4\int_0^\infty e^{-c_{5,\alpha, \zeta}\eta_n^{\frac{\alpha}{2}} x^{\frac{\alpha}{2}}}\,dx\\
    &= O(\eta_n^{-1})\longrightarrow 0.
\end{align*}

{\bf Case 3: $S \in \mathcal{M}_{\mathrm{w}}$.} In this case, we have
\begin{align}
\begin{split}\label{eq:wrong}
    &\mathrm{DCIC}(S)-\mathrm{DCIC}(S^*)\\
    = &\frac{\varepsilon^\top (P_{S^*}-\widetilde P_S)\varepsilon}{\sigma^2}
    -\frac{\varepsilon^\top ( P_{S}-\widetilde P_S)\varepsilon}{\sigma^2}
    +\frac{2\mu_n^\top(P_{S^*}-P_S)\varepsilon}{\sigma^2}
    +\mathrm{APP}(S)+u_{S}.
\end{split}
\end{align}
Since $P_{S^*}-\widetilde P_S$ is positive semidefinite, we have
\[
\frac{\varepsilon^\top (P_{S^*}-\widetilde P_S)\varepsilon}{\sigma^2}\ge 0.
\]
Hence, on the event
\[
\left\{
\frac{\varepsilon^\top (P_S-\widetilde P_S)\varepsilon}{\sigma^2}\le B_S
\right\},
\]
the inequality $\mathrm{DCIC}(S)-\mathrm{DCIC}(S^*)\leq0$ implies
\[
\frac{2\mu_n^\top(P_{S^*}-P_S)\varepsilon}{\sigma^2}+\Gamma_S\leq0.
\]
Therefore,
\[
\begin{aligned}
\Pr\bigl(\mathrm{DCIC}(S)-\mathrm{DCIC}(S^*)\leq0\bigr)
&\le
\Pr\left(
\frac{\varepsilon^\top (P_S-\widetilde P_S)\varepsilon}{\sigma^2}>B_S
\right)\\
&\quad+
\Pr\left(
\frac{2\mu_n^\top(P_{S^*}-P_S)\varepsilon}{\sigma^2}+\Gamma_S\leq 0
\right).
\end{aligned}
\]
From Lemma \ref{lem:sum}, we have
\begin{align*}
    \Pr\!\Big(
\frac{2\mu_n^\top (P_{S^*}-P_S) \varepsilon}{\sigma^2}+ \Gamma_S
\le 0
\Big)
\ \le&\ 2\exp\!\left\{
-c_3\
\min\Bigg(\Big(\frac{\Gamma_{S}}{ \sqrt{\mathrm{APP}(S)}}\Big)^{2}, \Big(\frac{\Gamma_{S}}{ \sqrt{\mathrm{APP}(S)}}\Big)^{\alpha}\Bigg)
\right\}.
\end{align*}
Thus, under assumption (A3), we have
\begin{align*}
&\sum_{S\in \mathcal{M}_{\mathrm{w}}}\Pr\left(\mathrm{DCIC}(S)-\mathrm{DCIC}(S^*)\leq 0\right)\\
\leq&
\sum_{S\in \mathcal{M}_{\mathrm{w}}}
\Pr\left(
\frac{\varepsilon^\top ( P_{S}-\widetilde P_S)\varepsilon}{\sigma^2}> B_{S}
\right)+
\sum_{S\in \mathcal{M}_{\mathrm{w}}}
\Pr\!\Big(
\frac{2\mu_n^\top (P_{S^*}-P_S) \varepsilon}{\sigma^2}+ \Gamma_S
\le 0
\Big)\\
\longrightarrow&\ 0.
\end{align*}

Combining {\bf Cases 1--3}, we have
\begin{align*}
    &\sum_{S\in \mathcal{M}_{n},\, S\neq S^*}\Pr\left(\mathrm{DCIC}(S)-\mathrm{DCIC}(S^*)\leq 0\right)\\
    \leq& \sum_{S\in \mathcal{M}_{\mathrm{sup}}}\Pr\left(\mathrm{DCIC}(S)-\mathrm{DCIC}(S^*)\leq0\right)
    +\sum_{S\in \mathcal{M}_{\mathrm{sub}}}\Pr\left(\mathrm{DCIC}(S)-\mathrm{DCIC}(S^*)\leq0\right)+\\
    &\sum_{S\in \mathcal{M}_{\mathrm{w}}}\Pr\left(\mathrm{DCIC}(S)-\mathrm{DCIC}(S^*)\leq0\right)\\
    \longrightarrow&\ 0.
\end{align*}
Therefore,
\[
\Pr\left(\min_{S \in \mathcal{M}_n,\, S\neq S^*}\mathrm{DCIC}(S)>\mathrm{DCIC}(S^*)\right)\to 1.
\]
This completes the proof.
\end{proof}

\subsection{Proof of Theorem~\ref{thm:complexity}}
In the proof of Theorem~\ref{thm:complexity}, we verify that all the assumptions of Lemma \ref{thm:main} hold.
\begin{proof}
(A1) and (A2) in Lemma \ref{thm:main} can be verified directly from the assumptions of Theorem~\ref{thm:complexity}.
For simplicity of notation, denote $\ell = \Delta^-(S)$, $j = \Delta^+(S)$ and 
$
D'_S = \log \binom{q^*}{\ell}\binom{q-q^*}{j}.
$

{\bf (Verification of (A3) in Lemma \ref{thm:main})}
Recall that $
B_{S} = \frac{1}{4}\lambda D^\frac{2}{\alpha}_{S}+\frac{1}{4}\log\log n+2(r_S-\tilde r_S)
$
and $\mathbb{E}(\varepsilon^\top(P_S-\tilde{P}_S)\varepsilon/\sigma^2) = r_S-\tilde r_S$.
We have
\begin{align*}
\quad\ 
\Pr\Bigl(
\frac{\varepsilon^\top(P_S-\tilde P_S)\varepsilon}{\sigma^2}
> B_S
\Bigr)
&= 
\Pr\Bigl(
\frac{\varepsilon^\top(P_S-\tilde P_S)\varepsilon}{\sigma^2}-\mathbb{E}(\frac{\varepsilon^\top(P_S-\tilde P_S)\varepsilon}{\sigma^2})
> B_S-(r_S-\tilde r_S)
\Bigr)\\
&\leq 2\exp\Bigl\{-c_{\alpha,\zeta}\min\bigl\{\frac{(B_S-(r_S-\tilde r_S))^2}{r_S-\tilde r_S}, (B_S-(r_S-\tilde r_S))^\frac{\alpha}{2}\bigr\}\Bigr\}
\end{align*}
Since $$\frac{(B_S-(r_S-\tilde r_S))^2}{r_S-\tilde r_S}\geq B_S-(r_S-\tilde r_S)$$ and $B_S-(r_S-\tilde r_S) \geq 1$ for sufficiently large $n$, we have
\begin{align*}
\quad\ 
\Pr\Bigl(
\frac{\varepsilon^\top(P_S-\tilde P_S)\varepsilon}{\sigma^2}
> B_S
\Bigr)
&\leq 2\exp\Bigl\{-c_{\alpha,\zeta} (B_S-(r_S-\tilde r_S))^\frac{\alpha}{2}\Bigr\}
\end{align*}
On the other hand, we have $\left|\left\{S \in \mathcal{M}_{\mathrm{w}}: \Delta^{-}(S)=\ell, \Delta^{+}(S)=j\right\}\right| \leq\binom{ q^*}{\ell}\binom{q-q^*}{j}$ and
\begin{align*}
    \sum_{S\in \mathcal{M}_{\mathrm{w}}} e^{-2D'_S}&\leq\sum_{\ell=1}^{q^*}\sum_{j=1}^{q-q^*}\Biggl\{\binom{q^*}{\ell}\binom{q-q^*}{j}\Biggr\}^{-1}\\
    &=\Biggl\{\sum_{\ell=1}^{q^*}\binom{q^*}{\ell}^{-1}\Biggr\}\times\Biggl\{\sum_{j=1}^{q-q^*}\binom{q-q^*}{j}^{-1}\Biggr\}\\
    &\leq \Biggl\{1+\sum_{\ell=1}^{q^*-1}\binom{q^*}{\ell}^{-1}\Biggr\}\times\Biggl\{1+\sum_{j=1}^{q-q^*-1}\binom{q-q^*}{j}^{-1}\Biggr\}\\
    &\leq 4,
\end{align*}
where the first inequality uses 
$\binom{b}{a}\geq b$ for 
$1\leq a \leq b-1.$

Consequently, we have
\begin{align}
\begin{split}\label{eq:s5}
&\quad\ \sum_{S\in\mathcal M_w}
\Pr\Bigl(
\frac{\varepsilon^\top(P_S-\tilde P_S)\varepsilon}{\sigma^2}
> B_S
\Bigr)\\
&\le \sum_{S\in\mathcal M_w}2\exp\Bigl\{-c_{\alpha,\zeta}\Bigl(\lambda^\frac{\alpha}{2} D_{S}+(\log\log n)^\frac{\alpha}{2}\Bigr)\Bigr\}\\
&\le \sum_{S\in\mathcal M_w}
2\exp\Bigl\{-c_{\alpha,\zeta}\lambda^\frac{\alpha}{2} D_S\Bigr\}
\exp\bigl\{-c_{\alpha,\zeta}((\log\log n)^\frac{\alpha}{2})\bigr\}\\
&\le \sum_{S\in\mathcal M_w}2
\exp\Bigl\{-2\min\{D'_S, C_S\}\Bigr\}
\exp\bigl\{-c_{\alpha,\zeta}((\log\log n)^\frac{\alpha}{2})\bigr\}\\
&\le \Biggl(\sum_{S\in\mathcal M_w}2
\exp\Bigl\{-2D'_S\Bigr\}+\sum_{S\in\mathcal M_w}2
\exp\Bigl\{-2C_S\Bigr\}\Biggr)
\exp\bigl\{-c_{\alpha,\zeta}((\log\log n)^\frac{\alpha}{2})\bigr\}\\
&\le 10\exp\bigl\{-c_{\alpha,\zeta}((\log\log n)^\frac{\alpha}{2})\bigr\}\\
&\longrightarrow\ 0,
\end{split}
\end{align}
where the first inequality uses Lemma \ref{lem:hanson} together with
$
(a+b)^q \geq 2^{q-1}(a^q+b^q), \ q\in(0,1],
$
and the third inequality follows by the choice of $\lambda_{\rm sel}$,
which ensures that $c_{\alpha,\zeta}\lambda_{\rm sel}^{\alpha/2}>2$. This proves (A3a).

Next, we verify assumption (A3b). We first show that
\begin{equation}\label{eq:gammas}
\frac{\Gamma_S}{\sqrt{\mathrm{APP}(S)}} \ge c\,\sqrt{B_S}
\end{equation}
for some universal constant $c>0$.
On one hand, when $u_S \geq 2 B_S$, we have
\[
\Gamma_S = \mathrm{APP}(S)+u_S-B_S \geq \mathrm{APP}(S)+B_S.
\]
Hence,
\[
\frac{\Gamma_S}{\sqrt{\mathrm{APP}(S)}}
\ge \sqrt{\mathrm{APP}(S)}+\frac{B_S}{\sqrt{\mathrm{APP}(S)}}
\ge 2\sqrt{B_S},
\]
where the last inequality follows from the arithmetic--geometric mean inequality.

On the other hand, when $u_S < 2B_S$, the assumption in Theorem~\ref{thm:complexity} implies
$
\mathrm{APP}(S)\ge 2(2B_S-u_S).
$
Therefore,
\[
\Gamma_S = \mathrm{APP}(S)+u_S-B_S
= \frac{1}{2}\mathrm{APP}(S)+\Bigl(\frac{1}{2}\mathrm{APP}(S)+u_S-B_S\Bigr)
\ge \frac{1}{2}\mathrm{APP}(S)+B_S.
\]
Consequently,
\[
\frac{\Gamma_S}{\sqrt{\mathrm{APP}(S)}}
\ge
\frac{1}{2}\sqrt{\mathrm{APP}(S)}+\frac{B_S}{\sqrt{\mathrm{APP}(S)}}
\ge \sqrt{2}\sqrt{B_S},
\]
where the last inequality is obtained by minimizing the right-hand side over $\mathrm{APP}(S)>0$.

Thus, \eqref{eq:gammas} holds with $c=\sqrt{2}$. Using the same argument as in \eqref{eq:s5}, we obtain
\begin{align*}
\sum_{S\in \mathcal{M}_{\mathrm{w}}}
\exp\Bigl\{-c_3\min\Bigg(\Big(\frac{\Gamma_{S}}{ \sqrt{\mathrm{APP}(S)}}\Big)^{2}, \Big(\frac{\Gamma_{S}}{ \sqrt{\mathrm{APP}(S)}}\Big)^{\alpha}\Bigg)\Bigr\}
&\leq
\sum_{S\in \mathcal{M}_{\mathrm{w}}}
\exp\bigl\{-c_3 c^{\alpha} B_S^{\alpha/2}\bigr\}\\
&\longrightarrow\;0.
\end{align*}
This proves (A3b). Consequently, assumption (A3) in Lemma \ref{thm:main} is verified under the assumptions of Theorem~\ref{thm:complexity}, which completes its proof.
\end{proof}

\subsection{Proof of Corollary~\ref{cor:complexity}}
\begin{proof}
We first verify that the induced model complexities satisfy the Kraft inequality required in Theorem~\ref{thm:complexity}. By the definition
$
C_S=1+\sum_{k:U_k\subseteq S} C_k,
$
we have
\begin{align*}
\sum_{S\in\mathcal M_n} e^{-C_S}
&= e^{-1}
\sum_{S\in\mathcal M_n}\exp\!\left(-\sum_{k:U_k \subseteq S} C_k\right)\\
&\le e^{-1}
\prod_{k=1}^q(1+e^{-C_k})\\
&\le e^{-1}
\exp\!\left(\sum_{k=1}^q e^{-C_k}\right)\\
&\le 1,
\end{align*}
where the last inequality follows from 
\(
\sum_{k=1^q} e^{-C_k}\le 1.
\)

Next, we verify assumption (A1) in Theorem~\ref{thm:complexity}.
For $S\in\mathcal M_{\mathrm{sup}}$, since $\mathcal{U}(S^*)\subseteq \mathcal{U}(S)$,
\[
C_S-C_{S^*}
=
\sum_{k:\,U_k\subseteq S,\;U_k\nsubseteq S^*} C_k .
\]
Hence, for each $j\ge 1$,
\begin{align*}
\sum_{\substack{S \in \mathcal{M}_{\mathrm{sup}}\\ \Delta^+(S)=j}}
\exp\{-c\lambda^{\alpha/2}(C_S-C_{S^*})\}
&\le
\sum_{S\in \mathcal{M}_{\mathrm{sup}}}
\exp\left\{
-c\lambda^{\alpha/2}
\sum_{k:\,U_k\subseteq S,\;U_k\nsubseteq S^*} C_k
\right\} \\
&\le
\prod_{k:\,U_k\nsubseteq S^*}
\left(1+e^{-c\lambda^{\alpha/2}C_k}\right) \\
&\le
\exp\left(
\sum_{k:\,U_k\nsubseteq S^*}
e^{-c\lambda^{\alpha/2}C_k}
\right) \\
&\le
\exp\left(
\sum_{k=1}^q e^{-C_k}
\right) \\
&\le e,
\end{align*}
where the third inequality follows by the choice of $\lambda_{\rm sel}$,
which ensures that $c_{\alpha,\zeta}\lambda_{\rm sel}^{\alpha/2}>2$, and the last inequality follows from \(
\sum_{k=1}^q e^{-C_k}\le 1.
\)
Therefore the summation term appearing in (A1) is uniformly bounded in $j$, and since
\(
\eta_n\to\infty
\),
(A1) holds.

The remaining assumptions are exactly (\romannumeral 2)--(\romannumeral 3) of Theorem~\ref{thm:complexity}. Therefore, the conclusion follows directly from that theorem.
\end{proof}

\subsection{Proof of the double-sparse selection result}\label{app:double_sparse}
\label{app:double-sparse}

We first state the precise selection-consistency result deferred from
Section~\ref{sub:ds}. Recall that
\[
N(\ell)=\binom{s^*}{\ell},
\qquad \ell=1,\ldots,s^*.
\]

\begin{corollary}\label{cor:DS}
Let $\eta_n=\log n$ and
\[
C_S
=
g\log m+s\log d+2\log(g+1)
+2\sum_{k\in G}\log(t_k+1).
\]
Assume that $g^*\le d^\gamma$ for some constant $\gamma\ge0$.
For every
$
\lambda\ge
\lambda_{\mathrm{sel}}\vee
c_{\alpha,\zeta}(1+\gamma)^{2/\alpha},
$
with $c_{\alpha,\zeta}>0$ sufficiently large, if the following
conditions hold:
\begin{itemize}
    \item[(\romannumeral 1)]
    There exists a sufficiently large constant $c_1>1$ such that,
    for every $S\in\mathcal M_{\mathrm{sub}}$,
    \[
    \mathrm{APP}(S)\ge
    c_1\left\{
    (s^*-s)\log n
    +\lambda\left(C_{S^*}^{2/\alpha}-C_S^{2/\alpha}\right)
    +\log^{2/\alpha}N\bigl(\Delta^-(S)\bigr)
    \right\}.
    \]

    \item[(\romannumeral 2)]
    For every $S\in\mathcal M_w$ satisfying $2B_S>u_S$,
    \[
    \mathrm{APP}(S)\ge2(2B_S-u_S).
    \]
\end{itemize}
Then
\[
\Pr\left\{
\min_{\substack{S\in\mathcal M_n\\S\neq S^*}}
\mathrm{DCIC}(S)>\mathrm{DCIC}(S^*)
\right\}
\to1
\qquad\text{as }n\to\infty.
\]
\end{corollary}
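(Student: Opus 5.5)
The plan is to deduce Corollary~\ref{cor:DS} from Theorem~\ref{thm:complexity}, taking the individual variables as atoms, so that $q=p=md$, $q^*=s^*$ and $N(\ell)=\binom{s^*}{\ell}$. Conditions (\romannumeral 1) and (\romannumeral 2) of the corollary are literally assumptions (\romannumeral 2) and (\romannumeral 3) of the theorem. Two things therefore remain to be checked: Kraft admissibility of $C_S$ (up to a constant $K_0$, which Remark~\ref{re:kraft} permits), and assumption (\romannumeral 1) of the theorem for overfitted models.

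\textbf{Kraft.} I will organize the sum $\sum_S e^{-C_S}$ by $g$, then by the group set $G$, then by the within-group sizes $(t_k)_{k\in G}$, and finally by the within-group subsets.
\begin{itemize}
\item For fixed $G$ and $(t_k)$, the number of supports is $\prod_{k\in G}\binom{d}{t_k}$.
\item Since $\binom{d}{t}\le d^t$, this count is at most $d^{s}$, which is cancelled by the factor $e^{-s\log d}$.
\item Each factor $e^{-2\log(t_k+1)}=(t_k+1)^{-2}$, summed over $t_k\ge1$, gives at most $\pi^2/6-1<1$.
\item Summing over the $\binom{m}{g}\le m^g$ choices of $G$ is cancelled by $e^{-g\log m}$.
\item The remaining factor $(g+1)^{-2}$ sums over $g\ge0$ to at most $\pi^2/6$.
\end{itemize}
Hence $\sum_S e^{-C_S}\le K_0$ for an absolute constant $K_0$, and Remark~\ref{re:kraft} applies.

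\textbf{Assumption (\romannumeral 1) of the theorem.} Fix $S\supsetneq S^*$ with $\Delta^+(S)=j$. First I show monotonicity, $C_S\ge C_{S^*}$. Adding variables raises $s$, and it can only raise each $t_k$ and the set $G$, hence $g$. Every term of $C_S$ is nondecreasing in these quantities, so $C_S\ge C_{S^*}$.

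Next I bound $C_S-C_{S^*}$ from below. The term $s\log d$ contributes exactly $j\log d$. The term $g\log m+2\log(g+1)$ contributes at least $(g-g^*)\log m$. The terms $2\log(t_k+1)$ are nonnegative increments. So
\[C_S-C_{S^*}\ge j\log d+(g-g^*)\log m.\]

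Then I count the overfitted models. Such an $S$ is determined by $g-g^*$ new groups, at most $\binom{m}{g-g^*}\le m^{g-g^*}$ choices, and by the $j$ extra variables, chosen among at most $gd$ candidates. This gives at most $(gd)^j\le \bigl((g^*+j)d\bigr)^j$ choices.

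Putting these together,
\[
\sum_{\Delta^+(S)=j}\exp\{-c_1\lambda^{\alpha/2}(C_S-C_{S^*})\}
\le \sum_{h\ge0}m^{h}e^{-c_1\lambda^{\alpha/2}h\log m}\,\bigl((g^*+j)d\bigr)^{j}d^{-c_1\lambda^{\alpha/2}j}.
\]
The sum over $h$ is $O(1)$ once $c_1\lambda^{\alpha/2}\ge2$. For the remaining factor, the hypothesis $g^*\le d^\gamma$ gives $(g^*+j)d\le d^{1+\gamma}(1+j)$. Choosing $\lambda\ge c_{\alpha,\zeta}(1+\gamma)^{2/\alpha}$ makes $c_1\lambda^{\alpha/2}\ge 2(1+\gamma)+1$, so the factor is at most $(1+j)^j d^{-j}$.

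\textbf{Main obstacle.} The stray factor $(1+j)^j=e^{j\log(1+j)}$ must be absorbed. If $d\ge 1+j$ it is harmless. Otherwise I would instead bound the number of extra variables by $\binom{gd}{j}\le (egd/j)^j$, which removes the $j^j$ growth. The result is a bound $\exp\{O(j)\}$, and I need this to be at most $\exp\{\tilde c_1(\eta_n j)^{\alpha/2}\}$. For $\alpha=2$ this is immediate. For $\alpha<2$ it holds as long as $j\le(\eta_n j)^{\alpha/2}$, so large $j$ needs care. In that regime I would let the complexity gap $j\log d$ grow with $\lambda$ and use it to make the left side bounded outright, following the route of Corollary~\ref{cor:complexity}, where summability gave a bound uniform in $j$. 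With (\romannumeral 1) verified, Theorem~\ref{thm:complexity} yields the conclusion.
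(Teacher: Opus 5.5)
Up to your last paragraph the proposal is sound and tracks the paper. The atoms are single variables. Conditions (i)--(ii) of the corollary are the theorem's (ii)--(iii); the positive part is harmless because your monotonicity gives $C_S\le C_{S^*}$ on $\mathcal M_{\mathrm{sub}}$. Your Kraft bound $\pi^2/6$ is legitimate under Remark~\ref{re:kraft}; the paper instead gets $e^{A_d}-1<1$ over nonempty supports. Your gap bound $C_S-C_{S^*}\ge j\log d+(g-g^*)\log m$ is exactly the paper's.

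\textbf{The gap.} You stop at a per-$j$ bound $\exp\{Kj\}$ with $K>0$, and for $\alpha<2$ this does not give \eqref{eq:assumption1}. The right-hand side $\exp\{\tilde c_1(\eta_n j)^{\alpha/2}\}$ is sublinear in $j$, so the inequality fails once $j\gg(\log n)^{\alpha/(2-\alpha)}$, while $j$ ranges up to $p-s^*$. Equivalently, the union bound in Case 1 of the proof of Lemma~\ref{thm:main}, $\sum_j\exp\{Kj-c_1(\eta_nj)^{\alpha/2}\}$, would blow up. Your remedy for that regime is only a pointer, not an argument. As written, the proof covers only $\alpha=2$.

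\textbf{The missing idea.} The left-hand side of \eqref{eq:assumption1} is bounded by a constant independent of both $j$ and $n$. The paper sets $a=c_0\lambda^{\alpha/2}$ and $t=g-g^*$, and merges the counts of extra variables by Vandermonde into $\binom{(g^*+t)d}{j}$. It then bounds one term of a binomial expansion by the whole sum, $d^{-aj}\binom{N}{j}\le(1+d^{-a})^N$. Summing over $t$ gives
\[
(1+d^{-a})^{g^*d}\bigl\{1+m^{-a}(1+d^{-a})^d\bigr\}^{m-g^*}\le\exp\{d^{1+\gamma-a}\}\exp\{e\,m^{1-a}\}\le e^{e+1}
\]
as soon as $a>1+\gamma$; this is where $g^*\le d^\gamma$ enters. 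It is the product-over-atoms argument of Corollary~\ref{cor:complexity} that you allude to. A constant lies below $\exp\{\tilde c_1(\eta_nj)^{\alpha/2}\}$ for every $j\ge1$ once $\eta_n$ is large.

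\textbf{Closing your own route.} Your estimate also works if you use the freedom in $c_{\alpha,\zeta}$ instead of settling for $\exp\{O(j)\}$. Since $t\le j$ and $g^*\le d^\gamma$, we have $(g^*+t)/j\le 2d^\gamma$, hence
\[
\binom{(g^*+t)d}{j}d^{-aj}\le(2e\,d^{1+\gamma-a})^j\le 1
\]
for $d\ge2$ once $a\ge 1+\gamma+\log_2(2e)$. This holds when $c_0c_{\alpha,\zeta}^{\alpha/2}\ge 1+\log_2(2e)$. In the degenerate case $d=1$ one has $t=j$, so you must keep the factor $m^{-at}$ coupled to the $j$-count rather than summing it out separately. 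Either way the bound is $O(1)$, and the case split on $\alpha$ and on the size of $j$ disappears.
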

\begin{remark}
The growth condition $g^*\le d^\gamma$ is used only to verify the overfitted-model summability under general sub-Weibull noise. In the sub-Gaussian case $\alpha=2$, this condition and the corresponding $\gamma$-dependent lower bound on $\lambda$ can be removed by taking $\eta_n=A_\eta\log n$ with a sufficiently large fixed constant $A_\eta$, provided that $g^*\le n$. This modification does not change the order of the separation conditions.
\end{remark}
\begin{proof}
We first verify the Kraft inequality. Let
\[
A_d
:=
\sum_{r=1}^d
\binom{d}{r}\frac{d^{-r}}{(r+1)^2}
\le
\frac14\left\{\left(1+\frac1d\right)^d-1\right\}
\le
\frac{e-1}{4}.
\]
Summing over all
double-sparse supports gives
\begin{align*}
\sum_{S\in\mathcal M_n}e^{-C_S}
&\le
\sum_{g=1}^m
\binom{m}{g}\frac{m^{-g}}{(g+1)^2}A_d^g \\
&\le
\left(1+\frac{A_d}{m}\right)^m-1
\le
e^{A_d}-1
<1.
\end{align*}
Define $S_k^*:=S^*\cap\mathcal G_k$ for $k\in G^*$. By
Vandermonde's identity,
\[
N(\ell)
=
\sum_{\substack{
0\le \ell_k\le |S_k^*|\\
\sum_{k\in G^*}\ell_k=\ell}}
\prod_{k\in G^*}\binom{|S_k^*|}{\ell_k}
=
\binom{s^*}{\ell},
\qquad
\ell=1,\ldots,s^*.
\]

It remains to verify assumption~{\rm(A1)} in
Theorem~\ref{thm:complexity}. Let $c_0=c_0(\alpha,\zeta)>0$ be the
constant multiplying
$\lambda^{\alpha/2}(C_S-C_{S^*})$ in the overfitted-model
exponential bound in the proof of Theorem~\ref{thm:complexity}.
Fix $j\ge1$ and write $a:=c_0\lambda^{\alpha/2}$.
By choosing $c_{\alpha,\zeta}$ sufficiently large, the lower bound
on $\lambda$ in Corollary~\ref{cor:DS} ensures that $a>1+\gamma$.

For every $S\in\mathcal M_{\mathrm{sup}}$ with
$\Delta^+(S)=j$, let $t:=g-g^*$. Then $0\le t\le j$.
Since $S^*\subset S$, all remaining terms in the code are
nondecreasing, and hence
\[
C_S-C_{S^*}\ge t\log m+j\log d.
\]
Therefore,
\begin{align*}
\sum_{\substack{
S\in\mathcal M_{\mathrm{sup}}\\
\Delta^+(S)=j}}
\exp\{-a(C_S-C_{S^*})\} \le
\sum_{t=0}^{\min\{m-g^*,j\}}
\binom{m-g^*}{t}m^{-at}d^{-aj}
\sum_{j_1+j_2=j}
\binom{g^*d}{j_1}\binom{td}{j_2}.
\end{align*}
By Vandermonde's identity,
\[
\sum_{j_1+j_2=j}
\binom{g^*d}{j_1}\binom{td}{j_2}
=
\binom{(g^*+t)d}{j},
\]
and
\[
d^{-aj}\binom{(g^*+t)d}{j}
\le
(1+d^{-a})^{(g^*+t)d}.
\]
Consequently,
\begin{align*}
\sum_{\substack{
S\in\mathcal M_{\mathrm{sup}}\\
\Delta^+(S)=j}}
\exp\{-a(C_S-C_{S^*})\}
\le
(1+d^{-a})^{g^*d}
\left\{
1+m^{-a}(1+d^{-a})^d
\right\}^{m-g^*}.
\end{align*}
Since $a>1+\gamma$, we have
\[
(1+d^{-a})^{g^*d}
\le
\exp\{g^*d^{1-a}\}
\le
\exp\{d^{1+\gamma-a}\}
\le e.
\]
Moreover,
since $
(1+d^{-a})^d
\le
\exp\{d^{1-a}\}
\le e,
$
we have
\[
\left\{
1+m^{-a}(1+d^{-a})^d
\right\}^{m-g^*}
\le
\exp\{e\,m^{1-a}\}
\le e^e.
\]
These two inequalities imply that
\[
\sum_{\substack{
S\in\mathcal M_{\mathrm{sup}}\\
\Delta^+(S)=j}}
\exp\left\{
-c_0\lambda^{\alpha/2}(C_S-C_{S^*})
\right\}
\le e^{e+1}.
\]
Since $j\ge1$ and $\eta_n\to\infty$, for any fixed
$0<\widetilde c_1<c_0$,
\[
e^{e+1}
\le
\exp\left\{
\widetilde c_1(\eta_nj)^{\alpha/2}
\right\}
\]
for all sufficiently large $n$. Thus, assumption~{\rm(A1)} of
Theorem~\ref{thm:complexity} holds with $c_1=c_0$.

The remaining assumptions in Theorem~\ref{thm:complexity} are
exactly those imposed in Corollary~\ref{cor:DS}. The conclusion
therefore follows from Theorem~\ref{thm:complexity}.
\end{proof}

\subsection{Proof of Theorem~\ref{thm:example}}

\begin{proof}
We first establish lower bounds for \(\mathrm{APP}(S)\) over
\(S\in\mathcal M_{\mathrm{sub}}\cup\mathcal M_{\mathrm w}\).
Fix a candidate model \(S\) such that
\(\Delta^-(S)=|S^*\setminus S|\ge1\), and let
\[
T:=S\cap S^*,
\qquad
I:=S^*\setminus S.
\]
Since \(\mu_n=X_{S^*}\beta^*_{S^*}\) and
\(X_T\beta_T^*\in\operatorname{Col}(X_S)\), we have
\begin{align*}
(\mathrm{I}_n-P_S)\mu_n
&=(\mathrm{I}_n-P_S)X_I\beta_I^*,
\qquad
\mathrm{APP}(S)
=
\frac{\|(\mathrm{I}_n-P_S)X_I\beta_I^*\|_2^2}{\sigma^2}.
\end{align*}

\medskip
\noindent\textbf{Scenario 1.}
For every index set \(A\), \(\widehat\Sigma_{AA}\) is equicorrelated and
\(\lambda_{\min}(\widehat\Sigma_{AA})=1-\omega\).
For every \(S\in\mathcal M_{\mathrm{sub}}\cup\mathcal M_{\mathrm w}\),
\begin{align*}
\|(\mathrm{I}_n-P_S)X_I\beta_I^*\|_2^2
&=
\min_{\gamma\in\mathbb R^{|S|}}
\|X_I\beta_I^*-X_S\gamma\|_2^2 \\
&=
n\min_{\gamma}
\theta^\top\widehat\Sigma_{AA}\theta \\
&\ge
n(1-\omega)\min_{\gamma}\|\theta\|_2^2 \\
&\ge
n(1-\omega)\|\beta_I^*\|_2^2,
\end{align*}
where \(A:=I\cup S\) and
\(\theta:=(\beta_I^*,-\gamma)\in\mathbb R^{|A|}\).
Therefore,
\begin{align}
\mathrm{APP}(S)
&\ge
\frac{n(1-\omega)\|\beta_I^*\|_2^2}{\sigma^2}
\ge
\frac{n(1-\omega)\Delta^-(S)\beta_{\min}^2}{\sigma^2}.
\label{app_s1}
\end{align}

\medskip
\noindent\textbf{Scenario 2.}

\noindent\emph{(\romannumeral 1) \(S\in\mathcal M_{\mathrm{sub}}\).}
In this case, \(P_S=P_T\), and
\begin{align*}
\|(\mathrm{I}_n-P_S)X_I\beta_I^*\|_2^2
&=
\|(\mathrm{I}_n-P_T)X_I\beta_I^*\|_2^2 \\
&=
n(\beta_I^*)^\top
\widehat\Sigma_{I\mid T}\beta_I^*,
\end{align*}
where
\[
\widehat\Sigma_{I\mid T}
:=
\widehat\Sigma_{II}
-
\widehat\Sigma_{IT}
\widehat\Sigma_{TT}^{-1}
\widehat\Sigma_{TI}.
\]
Let \(A:=\widehat\Sigma_{S^*S^*}\). Then
\(\widehat\Sigma_{I\mid T}
=
\bigl((A^{-1})_{II}\bigr)^{-1}\), and hence
\[
\lambda_{\min}(\widehat\Sigma_{I\mid T})
=
\frac{1}{\lambda_{\max}((A^{-1})_{II})}
\ge
\frac{1}{\lambda_{\max}(A^{-1})}
=
\lambda_{\min}(A)
\ge
\lambda_*.
\]
Therefore,
\begin{align*}
\mathrm{APP}(S)
&\ge
\frac{n\lambda_*\|\beta_I^*\|_2^2}{\sigma^2}
\ge
\frac{n\lambda_*\Delta^-(S)\beta_{\min}^2}{\sigma^2},
\qquad
S\in\mathcal M_{\mathrm{sub}}.
\end{align*}

\smallskip
\noindent\emph{(\romannumeral 2) \(S\in\mathcal M_{\mathrm w}\) with
\(s\le2s^*\).}
Since \(\kappa_0\ge1\), we have
\(s\le2s^*\le(1+\kappa_0)s^*\), so the assumption in
Scenario~2 applies. Hence,
\[
\|(\mathrm{I}_n-P_S)X_I\beta_I^*\|_2^2
=
\|X_I\beta_I^*\|_2^2
-
\|P_SX_I\beta_I^*\|_2^2
\ge
\epsilon_0\|X_I\beta_I^*\|_2^2.
\]
Moreover, since \(I\subset S^*\), Cauchy interlacing yields
\[
\lambda_{\min}(\widehat\Sigma_{II})
\ge
\lambda_{\min}(\widehat\Sigma_{S^*S^*})
\ge
\lambda_*.
\]
Consequently,
\[
\|X_I\beta_I^*\|_2^2
=
n(\beta_I^*)^\top
\widehat\Sigma_{II}\beta_I^*
\ge
n\lambda_*\|\beta_I^*\|_2^2.
\]
Hence,
\begin{align}
\mathrm{APP}(S)
&\ge
\frac{\epsilon_0n\lambda_*
\|\beta_I^*\|_2^2}{\sigma^2}
\ge
\frac{\epsilon_0n\lambda_*
\Delta^-(S)\beta_{\min}^2}{\sigma^2},
\qquad
S\in\mathcal M_{\mathrm w},
\quad
s\le2s^*.
\label{app_s2}
\end{align}

\medskip
\noindent\textbf{Scenario 3.}

\noindent\emph{(\romannumeral 1) \(S\in\mathcal M_{\mathrm{sub}}\).}
Then \(S\subset S^*\) and \(X_I^\top X_S=0\), so
\(P_SX_I\beta_I^*=0\). Therefore,
\begin{align*}
\mathrm{APP}(S)
&=
\frac{\|X_I\beta_I^*\|_2^2}{\sigma^2}
=
\frac{n\|\beta_I^*\|_2^2}{\sigma^2} \\
&\ge
\frac{n\Delta^-(S)\beta_{\min}^2}{\sigma^2},
\qquad
S\in\mathcal M_{\mathrm{sub}}.
\end{align*}

\smallskip
\noindent\emph{(\romannumeral 2) \(S\in\mathcal M_{\mathrm w}\).}
For each \(j\in I\), if \(\widetilde j\in S\), then
\[
P_Sx_j=\omega x_{\widetilde j},
\qquad
\|x_j-P_Sx_j\|_2^2=n(1-\omega^2).
\]
If \(\widetilde j\notin S\), then
\[
P_Sx_j=0,
\qquad
\|x_j-P_Sx_j\|_2^2=n.
\]
Thus,
\[
\|x_j-P_Sx_j\|_2^2
\ge
n(1-\omega^2),
\qquad j\in I.
\]
Using the orthogonality across distinct signal--shadow pairs,
\[
\|(\mathrm{I}_n-P_S)X_I\beta_I^*\|_2^2
\ge
n(1-\omega^2)
\sum_{j\in I}(\beta_j^*)^2
\ge
n(1-\omega^2)
\Delta^-(S)\beta_{\min}^2.
\]
Therefore,
\begin{align}
\mathrm{APP}(S)
&\ge
\frac{n(1-\omega^2)
\Delta^-(S)\beta_{\min}^2}{\sigma^2},
\qquad
S\in\mathcal M_{\mathrm w}.
\label{app_s3}
\end{align}

We next verify the conditions of
Corollary~\ref{cor:all_subset}.
Throughout the remainder of the proof, \(c_\lambda>0\) denotes a
constant that may depend on the fixed value of \(\lambda\).
Write \(\ell:=\Delta^-(S)\) and \(j:=\Delta^+(S)\).

For \(S\in\mathcal M_{\mathrm{sub}}\), we have
\(s^*-s=\ell\), and
\[
\log\binom{s^*}{\ell}
\le
\ell\log p.
\]
Since \(\alpha=2\), the right-hand side of
\eqref{eq:suball} is bounded by
$
c_\lambda\ell(\log p+\log n).
$

For the wrong-model condition in
Corollary~\ref{cor:all_subset}, take its splitting constant to be
\(\kappa=1\).
We first verify that sufficiently large wrong models are excluded
automatically. Since
\(D_S\le C_S=s\log p\) and
\(r_S-\widetilde r_S\le s\), we have
\begin{align*}
2B_S-u_S
\le{}&
\frac{\lambda}{2}s\log p
+\frac12\log\log n
+4s \\
&-(s-s^*)\log n
-\lambda(s-s^*)\log p \\
={}&
-(s-s^*)\log n
+\lambda\left(s^*-\frac{s}{2}\right)\log p
+4s+\frac12\log\log n.
\end{align*}
If \(s>2s^*\), then
\(s-s^*>s/2\) and \(s^*-s/2<0\). It follows that
\[
2B_S-u_S
<
-\frac{s}{2}\log n
+4s+\frac12\log\log n
<0
\]
for all sufficiently large \(n\). Hence,
\(2B_S<u_S\) for every
\(S\in\mathcal M_{\mathrm w}\) with \(s>2s^*\).

It remains to consider
\(S\in\mathcal M_{\mathrm w}\) with \(s\le2s^*\).
Since \(r_S-\widetilde r_S\le j\), the additional rank term in
\eqref{eq:wrongall} satisfies
\(8(r_S-\widetilde r_S)\le8j\).

If \(s\le s^*\), then \(j\le\ell\). Using
\[
\log\binom{s^*}{\ell}\le\ell\log p,
\qquad
\log\binom{p-s^*}{j}\le j\log p,
\]
the right-hand side of \eqref{eq:wrongall} is bounded by
\begin{align*}
&\lambda(\ell+j)\log p
+2(\ell-j)\log n
+2\lambda(\ell-j)\log p
+\log\log n
+8j \\
&\hspace{3cm}\le
c_\lambda\ell(\log p+\log n)
\end{align*}
for all sufficiently large \(n\).

If \(s>s^*\), let \(d:=s-s^*=j-\ell>0\).
The right-hand side of \eqref{eq:wrongall} is then bounded by
\begin{align*}
&\lambda(\ell+j)\log p
-2d\log n
-2\lambda d\log p
+\log\log n
+8j \\
={}&
2\lambda\ell\log p
+8\ell+\log\log n
-d\{\lambda\log p+2\log n-8\} \\
\le{}&
c_\lambda\ell\log p \\
\le{}&
c_\lambda\ell(\log p+\log n)
\end{align*}
for all sufficiently large \(n\).

Therefore, for every underfitted or wrong model that is not
excluded automatically, the required lower bound is at most
\[
c_\lambda\Delta^-(S)(\log p+\log n).
\]
Combining this bound with
\eqref{app_s1}--\eqref{app_s3} yields, respectively,
\[
\beta_{\min}^2
\ge
\frac{c_\lambda\sigma^2(\log p+\log n)}
{(1-\omega)n},
\qquad
\beta_{\min}^2
\ge
\frac{c_\lambda\sigma^2(\log p+\log n)}
{\epsilon_0\lambda_*n},
\qquad
\beta_{\min}^2
\ge
\frac{c_\lambda\sigma^2(\log p+\log n)}
{(1-\omega^2)n}.
\]
Corollary~\ref{cor:all_subset} therefore gives the desired
selection consistency.
\end{proof}

\subsection{Proof of Theorem~\ref{thm:risk}}
To simplify notation in this proof,
let $S^{\circ}$ be the minimizer of $R_n(\mu_n;\mathcal M_n)$, and define
\[
\mathrm{rem}_1(S):=\varepsilon^\top(\mathrm{I}_n-P_S)\mu_n,
\qquad
\mathrm{rem}_2(S):=\sigma^2 r_S-\varepsilon^\top P_S\varepsilon.
\]
\begin{proof}
By the definition of DCIC
\begin{align*}
\sigma^2\mathrm{DCIC}(S)
&=\mathrm{RSS}(S)+\sigma^2\eta_n s+\lambda\sigma^2 C_S^{\frac{2}{\alpha}}\\
&=\|(\mathrm{I}_n-P_S)\mu_n\|_2^2
+2\varepsilon^\top(\mathrm{I}_n-P_S)\mu_n
+\varepsilon^\top(\mathrm{I}_n-P_S)\varepsilon\\
&\quad +\sigma^2\eta_n s+\lambda\sigma^2 C_S^{\frac{2}{\alpha}}\\
&=nR_n(\mu_n;S)+2\mathrm{rem}_1(S)+\mathrm{rem}_2(S)+\varepsilon^\top\varepsilon.
\end{align*}
We first show that, with probability at least $1-\delta$, the following two inequalities hold simultaneously for every $S \in \mathcal{M}_n$:
\begin{equation}\label{eq:rem1}
|\mathrm{rem}_1(S)|
\le \frac{1}{12}nR_n(\mu_n;S)+c_{1,\alpha,\zeta}\sigma^2\Bigl(\log(4/\delta)\Bigr)^\frac{2}{\alpha},
\end{equation}
and
\begin{equation}\label{eq:rem2}
|\mathrm{rem}_2(S)|
\le \frac{1}{12}nR_n(\mu_n;S)+c_{2,\alpha,\zeta}\sigma^2\Bigl(\log(4/\delta)\Bigr)^\frac{2}{\alpha}.
\end{equation}
We verify these two inequalities at the end of the proof.

By the optimality of $\widehat S$, we have $\sigma^2 \mathrm{DCIC}(\widehat S) \leq \sigma^2 \mathrm{DCIC}(S^\circ)$. Consequently, 
$$
nR_n(\mu_n;\widehat S)+2\mathrm{rem}_1(\widehat S)+\mathrm{rem}_2(\widehat S)+\varepsilon^\top\varepsilon \leq nR_n^*(\mu_n;\mathcal M_n)+2\mathrm{rem}_1(S^\circ)+\mathrm{rem}_2(S^\circ)+\varepsilon^\top\varepsilon.
$$
On the event where \eqref{eq:rem1}--\eqref{eq:rem2} hold, the preceding inequality yields 
\begin{equation}\label{eq:R}
    nR_n(\mu_n;\widehat S) \leq \frac{5}{3}nR_n^*(\mu_n;\mathcal M_n)+c_{3,\alpha,\zeta}\sigma^2\Bigl(\log(4/\delta)\Bigr)^\frac{2}{\alpha}.
\end{equation}
Note that $\mathrm{ASE}(S)=\frac1n\|(\mathrm{I}_n-P_S)\mu_n\|_2^2+\frac1n\varepsilon^\top P_S\varepsilon$.
Hence
\begin{align*}
n\,\mathrm{ASE}(S)+\sigma^2(\eta_n s-2r_S)+\lambda\sigma^2 C_S^{\frac{2}{\alpha}}
=nR_n(\mu_n;S)-\mathrm{rem}_2(S).
\end{align*}
Since $s \geq r_S$ and $\eta_n \geq 2$, we have 
\begin{align}\label{eq:R2}
\begin{split}
n\,\mathrm{ASE}(\hat S) &\leq nR_n(\mu_n;\hat S)+|\mathrm{rem}_2(\hat S)|\\
&\leq \frac{13}{12}nR_n(\mu_n;\hat S)+c_{4,\alpha,\zeta}\sigma^2\Bigl(\log(4/\delta)\Bigr)^\frac{2}{\alpha},
\end{split}
\end{align}
where the second inequality follows from \eqref{eq:rem2}.

Consequently, combining \eqref{eq:R} and \eqref{eq:R2}, with probability at least  $1-\delta$, we have
\begin{equation}
    \mathrm{ASE}(\widehat S) \leq 2 R_n^*(\mu_n;\mathcal M_n)+c_{5,\alpha,\zeta}\frac{\sigma^2\Bigl(\log(4/\delta)\Bigr)^\frac{2}{\alpha}}{n}.
\end{equation}
Let $Z:=\left(\mathrm{ASE}(\widehat S)-2R_n^*(\mu_n;\mathcal M_n)\right)_+$.
The previous inequality implies that for all $\delta\in(0,1)$,
\[
\Pr\Big(Z> c_{5,\alpha,\zeta}\frac{\sigma^2}{n}\big(\log(4/\delta)\big)^{\frac{2}{\alpha}}\Big)\le \delta.
\]
Set $\delta=4e^{-t}$ and integrate:
\begin{align*}
\mathbb{E}[Z]
&\leq \int_0^\infty \Pr(Z \geq z)\,dz\\
&\leq c_{5,\alpha,\zeta}\frac{\sigma^2}{n}\left((\log 4)^\frac{2}{\alpha}+\frac{8}{\alpha}\int_{\log 4}^\infty t^{\frac{2}{\alpha}-1}\,e^{-t}\,dt\right)\\
&= c_{6,\alpha,\zeta}\frac{\sigma^2}{n}.
\end{align*}
Therefore,
\[
\mathbb{E}\big\{\mathrm{ASE}(\widehat S)\big\}
\le 2\,R_n^*(\mu_n;\mathcal M_n)+c_{6,\alpha, \zeta}\frac{\sigma^2}{n}.
\]

Next, we verify \eqref{eq:rem1} and \eqref{eq:rem2}. 

\medskip
\noindent
\textbf{Control of \(\mathrm{rem}_1(S)\).}
Lemma \ref{lem:sum} implies that, for some \(c_{\alpha,\zeta}>0\) and all \(t\ge0\),
\[
\Pr\bigl(|\mathrm{rem}_1(S)|\ge t\bigr)
\le
2\exp\!\left\{
-c_{\alpha, \zeta}
\min\!\left(
\Bigl(\frac{t}{\sigma\|(\mathrm{I}_n-P_S)\mu_n\|_2}\Bigr)^2,
\Bigl(\frac{t}{\sigma\|(\mathrm{I}_n-P_S)\mu_n\|_2}\Bigr)^\alpha
\right)
\right\}.
\]
Let
$
t=\sigma \|(\mathrm{I}_n-P_S)\mu_n\|_2\, x_S^{1/\alpha}
 $.
Then, if $x_S \geq 1$, we have
\[
\min\!\left(
\Bigl(\frac{t}{\sigma\|(\mathrm{I}_n-P_S)\mu_n\|_2}\Bigr)^2,
\Bigl(\frac{t}{\sigma\|(\mathrm{I}_n-P_S)\mu_n\|_2}\Bigr)^\alpha
\right)
=
\min(x_S^{2/\alpha},x_S)
=
x_S.
\] 
Let $c_{1,\alpha,\zeta} := \max\{1, 1/c_{\alpha,\zeta}\}$, and $x_S:=c_{1, \alpha, \zeta}(C_S+\log (4/\delta))$. 
Since $x_S\geq 1$, we have
\[
\Pr\bigl(|\mathrm{rem}_1(S)|\ge \sigma \|(\mathrm{I}_n-P_S)\mu_n\|_2\, x_S^{1/\alpha}\bigr)\le 2e^{-C_S-\log(4/\delta)}.
\]
Taking a union bound over $S \in \mathcal{M}_n$ and using the Kraft inequality gives
\begin{align*}
    &\Pr\bigl(\exists S \in \mathcal{M}_n: |\mathrm{rem}_1(S)| \geq \sigma \|(\mathrm{I}_n-P_S)\mu_n\|_2\, x_S^{1/\alpha}\bigr)\\
    \leq& 2e^{-\log(4/\delta)}\sum_{S\in \mathcal{M}_n}e^{-C_S}\\
    \leq& \frac{\delta}{2}.
\end{align*}
Moreover, by Young's inequality,
\begin{align*}
\sigma\|(\mathrm{I}_n-P_S)\mu_n\|_2\cdot  x_S^{1/\alpha}
&\le
\frac{1}{12}\|(\mathrm{I}_n-P_S)\mu_n\|_2^2+3\sigma^2c^2_{1,\alpha,\zeta}x_S^{2/\alpha}\\
&\le \frac{1}{12}\|(\mathrm{I}_n-P_S)\mu_n\|_2^2+c'_{1,\alpha,\zeta}\sigma^2 C^{\frac{2}{\alpha}}_S+c'_{1,\alpha,\zeta}\sigma^2\Bigl(\log(4/\delta)\Bigr)^{\frac{2}{\alpha}}\\
&\le \frac{1}{12}\|(\mathrm{I}_n-P_S)\mu_n\|_2^2+\frac{1}{12}\lambda\sigma^2 C^{\frac{2}{\alpha}}_S+c'_{1,\alpha,\zeta}\sigma^2\Bigl(\log(4/\delta)\Bigr)^{\frac{2}{\alpha}}\\
&\leq \frac{1}{12}n R_n(\mu_n;S)+c'_{1,\alpha,\zeta}\sigma^2\Bigl(\log(4/\delta)\Bigr)^{\frac{2}{\alpha}},
\end{align*}
where the third inequality follows from $\lambda \geq 12 c'_{1,\alpha,\zeta}$, and the last inequality follows from the definition of $R_n(\mu_n;S)$.
Therefore, with probability at least $1-\frac{\delta}{2}$, for every $S \in \mathcal{M}_n$, we have 
 $$|\mathrm{rem}_1(S)|\leq \frac{1}{12}n R_n(\mu_n;S)+c'_{1,\alpha,\zeta}\sigma^2\Bigl(\log(4/\delta)\Bigr)^{\frac{2}{\alpha}}.$$
\medskip
\noindent
\textbf{Control of \(\mathrm{rem}_2(S)\).}
Note $\mathbb{E}(\varepsilon^\top P_S \varepsilon) = \sigma^2 \text{trace}(P_S) = \sigma^2 r_S$.
By Lemma~\ref{lem:hanson}, for some \(c_{\alpha,\zeta}>0\) and all \(t\ge0\),
\[
\Pr\bigl(|\mathrm{rem}_2(S)|\ge t\bigr)
\le
2\exp\!\left\{
-c_{\alpha,\zeta}\min\!\left(
\frac{t^2}{\sigma^4r_S},
\Bigl(\frac{t}{\sigma^2}\Bigr)^{\alpha/2}
\right)
\right\}.
\]
Let
$t=\sigma^2\bigl(\sqrt{r_Sx_S}+x_S^{2/\alpha}\bigr).
$
Then, we have
\[
\frac{t^2}{\sigma^4r_S}\ge x_S,
\qquad
\Bigl(\frac{t}{\sigma^2}\Bigr)^{\alpha/2}\ge x_S.
\]
Set $c_{2,\alpha,\zeta} := \max\{1, 1/c_{\alpha,\zeta}\}$, and $x_S:=c_{2, \alpha, \zeta}(C_S+\log (4/\delta))$. Then,
\[
\Pr\bigl(|\mathrm{rem}_2(S)|\ge t\bigr)\le 2e^{-C_S-\log (4/\delta)}.
\]
Taking a union bound over $S \in \mathcal{M}_n$ and using the Kraft inequality gives
\begin{align*}
    &\Pr\bigl(\exists S \in \mathcal{M}_n: |\mathrm{rem}_2(S)| \geq \sigma^2\bigl(\sqrt{r_Sx_S}+x_S^{2/\alpha}\bigr)\bigr)\\
    \leq& 2e^{-\log(4/\delta)}\sum_{S\in \mathcal{M}_n}e^{-C_S}\\
    \leq& \frac{\delta}{2}.
\end{align*}
By Young's inequality and $x_S \geq 1$, we have
\[
\sqrt{r_Sx_S}\le \frac{1}{12}r_S+3x_S\le \frac{1}{12}r_S+3x_S^{2/\alpha}.
\]
Consequently, we obtain
\begin{align*}
  t&\leq \frac{1}{12}\sigma^2r_S+4\sigma^2x_S^{2/\alpha}\\
  &\leq \frac{1}{12}\sigma^2(\eta_ns -r_S)+c'_{2,\alpha,\zeta}\sigma^2 C^{\frac{2}{\alpha}}_S+c'_{2,\alpha,\zeta}\sigma^2\Bigl(\log(4/\delta)\Bigr)^{\frac{2}{\alpha}}\\
    &\leq \frac{1}{12}\sigma^2(\eta_ns -r_S)+\frac{1}{12}\lambda\sigma^2 C^{\frac{2}{\alpha}}_S+c'_{2,\alpha,\zeta}\sigma^2\Bigl(\log(4/\delta)\Bigr)^{\frac{2}{\alpha}}\\
  &\leq \frac{1}{12} n R_n(\mu_n;S)+c'_{2,\alpha,\zeta}\sigma^2\Bigl(\log(4/\delta)\Bigr)^{\frac{2}{\alpha}},
\end{align*}
where the second inequality follows from $\eta_n\geq2$, and the third inequality follows from $\lambda \geq 12 c'_{2,\alpha,\zeta}$. Therefore, with probability at least $1-\frac{\delta}{2}$, for every $S \in \mathcal{M}_n$, we have 
 $$|\mathrm{rem}_2(S)|\leq \frac{1}{12}n R_n(\mu_n;S)+c'_{2,\alpha,\zeta}\sigma^2\Bigl(\log(4/\delta)\Bigr)^{\frac{2}{\alpha}}.$$

Finally, choose $\lambda_{\rm risk}>12 \max\{c'_{1,\alpha,\zeta}, c'_{2,\alpha,\zeta}\}$. Then, for every $\lambda \geq \lambda_{\rm risk}$, with probability at least \(1-\delta\),
both \eqref{eq:rem1} and \eqref{eq:rem2} hold simultaneously for all \(S\in\mathcal M_n\).
\end{proof}

\subsection{Precise formulation and proof of Corollary~\ref{cor:multiclass_consistency} using class-model pairs}\label{app:multiclass_consistency}

For \((m,S)\in\widetilde{\mathcal{M}}_n\), let \(X_S^{(m)}\) denote the associated design submatrix under class \(m\), and let
$
V_S^{(m)}:=\operatorname{Col}(X_S^{(m)})\subset\mathbb R^n
$
be its column space. Denote by \(P_S^{(m)}\) the orthogonal projection onto \(V_S^{(m)}\), and let
$
r_{(m,S)}:=\operatorname{rank}(P_S^{(m)}).
$
In addition, define
\[
\mathrm{APP}(m,S):=\frac{\|(\mathrm{I}_n-P_S^{(m)})\mu_n\|_2^2}{\sigma^2}.
\]
We divide the wrong model class $\widetilde{\mathcal{M}}_{\mathrm{w}}$ into $\widetilde{\mathcal{M}}_{\mathrm{w}}^{\mathrm{in}} = \{(m, S) \in \widetilde{\mathcal{M}}_{\mathrm{w}}: m = m^*\}$ and $\widetilde{\mathcal{M}}_{\mathrm{w}}^{\mathrm{out}} = \{(m, S) \in \widetilde{\mathcal{M}}_{\mathrm{w}}: m \neq m^*\}$.
For \((m^*,S)\in\widetilde{\mathcal{M}}_{\mathrm{sup}}\cup\widetilde{\mathcal{M}}_{\mathrm{sub}}\cup \widetilde{\mathcal{M}}_{\mathrm{w}}^{\mathrm{in}}\), define
\[
\widetilde{\Delta}^+(S):=\bigl|\mathcal{U}^{(m^*)}(S)\setminus \mathcal{U}^{(m^*)}(S^*)\bigr|,
\qquad
\widetilde{\Delta}^-(S):=\bigl|\mathcal{U}^{(m^*)}(S^*)\setminus \mathcal{U}^{(m^*)}(S)\bigr|,
\]
where $\mathcal{U}^{m}(S)$ denotes the selected atom set of model $S$ within class $m$. 
For \(\ell\ge1\),
\[
\widetilde{N}(\ell):=
\bigl|
\{(m,S)\in\widetilde{\mathcal{M}}_{\mathrm{sub}}:\widetilde{\Delta}^-(S)=\ell\}
\bigr|.
\]
For every \((m,S)\in\widetilde{\mathcal{M}}_n\), let \(\widetilde P_S^{(m)}\) be the orthogonal projection onto
$
V_S^{(m)}\cap V_{S^*}^{(m^*)},
$
and
$
\widetilde r_{(m,S)}:=\operatorname{rank}(\widetilde P_S^{(m)}).
$
Define
$
u_{(m,S)}
:=
\eta_n\bigl(s_{(m,S)}-s^*\bigr)
+
\lambda\Bigl(C_{(m,S)}^{2/\alpha}-C_{(m^*,S^*)}^{2/\alpha}\Bigr),
$
and
\[
B_{(m,S)}
:=
\frac14\lambda D_{(m,S)}^{2/\alpha}
+
\frac14\log\log n
+
2\bigl(r_{(m,S)}-\widetilde r_{(m,S)}\bigr),
\]
where $q^*=|\mathcal{U}^{(m^*)}(S^*)|$, $q = |\mathcal{U}^{(m^*)}|$, and
\[
D_{(m,S)}
=
\begin{cases}
\displaystyle
\log\left\{
\binom{q^*}{\widetilde{\Delta}^{-}(S)}
\binom{q-q^*}{\widetilde{\Delta}^{+}(S)}
\right\}\wedge C_{(m,S)},
&
(m,S)\in\widetilde{\mathcal{M}}_{\mathrm{w}}^{\mathrm{in}},
\\[1.2ex]
C_{(m,S)},
&
(m,S)\in\widetilde{\mathcal{M}}_{\mathrm{w}}^{\mathrm{out}}.
\end{cases}
\]
The following corollary provides a precise formulation of Corollary~\ref{cor:multiclass_consistency} in terms of class-model pairs.

\begin{corollary}
Suppose there exists a unique true pair \((m^*,S^*)\in\widetilde{\mathcal{M}}_n\), and for every $\lambda \geq \lambda_{\rm sel}$, assume the following conditions hold:

\begin{itemize}
\item[(\romannumeral 1)] For every \((m, S)\in \widetilde{\mathcal M}_{\mathrm{sup}}\), \(C_{(m^*, S)}\ge C_{(m^*,S^*)}\). In addition, there exist constants \(0<\widetilde c_1<c_1\) such that for all \(j\ge1\),
\[
\sum_{\substack{(m,S)\in\widetilde{\mathcal{M}}_{\mathrm{sup}}\\ \widetilde\Delta^+(S)=j}}
\exp\!\bigl\{-c_1\lambda^{\alpha/2}\bigl(C_{(m,S)}-C_{(m^*,S^*)}\bigr)\bigr\}
\le
\exp\!\bigl\{\widetilde c_1(\eta_n j)^{\alpha/2}\bigr\};
\]

\item[(\romannumeral 2)]There exists a sufficiently large constant $c_2>1$ such that, for every \((m,S)\in\widetilde{\mathcal{M}}_{\mathrm{sub}}\),
\[
\mathrm{APP}(m,S)\ge
c_2\Bigg(
\eta_n\bigl(s^*-s_{(m,S)}\bigr)
+\lambda\Bigl(C_{(m^*,S^*)}^{2/\alpha}-C_{(m,S)}^{2/\alpha}\Bigr)_+
+\log^{2/\alpha} \widetilde{N}\bigl(\widetilde\Delta^-(S)\bigr)
\Bigg);
\]

\item[(\romannumeral 3)] For every \((m,S)\in\widetilde{\mathcal{M}}_{\mathrm{w}}\) satisfying
\(2B_{(m,S)}>u_{(m,S)}\),
\[
\mathrm{APP}(m,S)\ge 2\bigl(2B_{(m,S)}-u_{(m,S)}\bigr).
\]
\end{itemize}

Then
\[
\Pr\bigl((\hat m,\hat S)=(m^*,S^*)\bigr)\to1.
\]
\end{corollary}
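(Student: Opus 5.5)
The plan is to reproduce the argument behind Theorem~\ref{thm:complexity} on the aggregated list $\widetilde{\mathcal M}_n$, using Lemma~\ref{thm:main} as the engine. Lemma~\ref{thm:main} only uses a finite or countable candidate list, column spaces $V_S^{(m)}$ with projections $P_S^{(m)}$, ranks $r_{(m,S)}$, and the decomposition of $\mathrm{DCIC}(m,S)-\mathrm{DCIC}(m^*,S^*)$ into noise quadratic forms, a linear term $2\mu_n^\top(P_{S^*}^{(m^*)}-P_S^{(m)})\varepsilon/\sigma^2$, $\mathrm{APP}(m,S)$ and the penalty gap $u_{(m,S)}$. None of this depends on the pairs sharing one dictionary. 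I would therefore restate (A1)--(A3) for pairs, with the following replacements: $\widetilde P_S$ becomes the projection onto $V_S^{(m)}\cap V_{S^*}^{(m^*)}$, $\Delta^\pm$ becomes $\widetilde\Delta^\pm$, and $N$ becomes $\widetilde N$. Then I would check that each case of the proof of Lemma~\ref{thm:main} goes through verbatim.

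\textbf{Cases 1 and 2.} These live entirely inside class $m^*$. In Case~1, $V_{S^*}^{(m^*)}\subset V_S^{(m^*)}$, so $P_S^{(m^*)}-P_{S^*}^{(m^*)}$ is a projection and Lemma~\ref{lem:hanson} applies. Since $C_{(m^*,S)}-C_{(m^*,S^*)}=C^{(m^*)}_S-C^{(m^*)}_{S^*}$, condition (i) is exactly (A1). In Case~2, $V_S^{(m^*)}\subset V_{S^*}^{(m^*)}$, and condition (ii) is exactly (A2) with $\widetilde N$. The counting step there sums over $\ell$ with $N(\ell)$ replaced by $\widetilde N(\ell)$, so nothing changes.

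\textbf{Case 3.} For every wrong pair, $V_S^{(m)}\cap V_{S^*}^{(m^*)}\subset V_{S^*}^{(m^*)}$. Hence $P_{S^*}^{(m^*)}-\widetilde P_S^{(m)}$ is positive semidefinite, which is all that identity \eqref{eq:wrong} used. Identity \eqref{eq:wrong} itself also holds, because $P_S^{(m)}\varepsilon$ splits orthogonally into $\widetilde P_S^{(m)}\varepsilon$ plus a component in $V_S^{(m)}\ominus(V_S^{(m)}\cap V_{S^*}^{(m^*)})$. I should double-check the cross terms here: the term $\varepsilon^\top(P_{S^*}-\widetilde P_S)\varepsilon$ must appear with the correct sign, and this relies on $\widetilde P_S$ commuting with both projections. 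That holds because the range of $\widetilde P_S$ is contained in both ranges.

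\textbf{Verification of (A3a) and (A3b), the main obstacle.} This is the summability step, and it is where $\widetilde{\mathcal M}_{\mathrm w}^{\mathrm{out}}$ differs. For in-class wrong pairs, the argument of \eqref{eq:s5} applies unchanged. The combinatorial bound $\sum e^{-2D'_S}\le4$ uses only atom counts inside class $m^*$, and the complementary piece uses $\sum e^{-2C_{(m^*,S)}}\le1$. For out-of-class pairs, $D_{(m,S)}=C_{(m,S)}$, so the tail bound reads $2\exp\{-c_{\alpha,\zeta}\lambda^{\alpha/2}C_{(m,S)}\}e^{-c(\log\log n)^{\alpha/2}}$. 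With $c_{\alpha,\zeta}\lambda_{\rm sel}^{\alpha/2}>2$, summing over all $(m,S)$ and applying the aggregated Kraft inequality $\sum_{(m,S)}e^{-C_{(m,S)}}\le1$ bounds the sum by $2e^{-c(\log\log n)^{\alpha/2}}\to0$. One subtlety is the noise rank term $r_{(m,S)}-\widetilde r_{(m,S)}$. Lemma~\ref{lem:hanson} requires that $B_{(m,S)}-(r_{(m,S)}-\widetilde r_{(m,S)})\ge1$ and that $(B-r)^2/r\ge B-r$; both follow because $B$ contains $2(r-\widetilde r)$.

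\textbf{Conclusion.} For (A3b), the same two-case argument as in the proof of Theorem~\ref{thm:complexity} gives $\Gamma_{(m,S)}/\sqrt{\mathrm{APP}(m,S)}\ge\sqrt2\sqrt{B_{(m,S)}}$, splitting on $u\ge2B$ versus condition (iii). The sum then vanishes by the bound just established. Lemma~\ref{thm:main} then yields $\Pr(\min_{(m,S)\ne(m^*,S^*)}\mathrm{DCIC}(m,S)>\mathrm{DCIC}(m^*,S^*))\to1$. Uniqueness of the true pair makes the minimizer equal to $(m^*,S^*)$, which gives $\Pr((\hat m,\hat S)=(m^*,S^*))\to1$.
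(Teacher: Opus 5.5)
Your proposal is correct and takes essentially the same route as the paper. You verify (A1)--(A2) directly for the in-class overfitted and underfitted pairs, split $\widetilde{\mathcal M}_{\mathrm w}$ into in-class and out-of-class pairs for (A3a)--(A3b), use the combinatorial count for the former and the aggregated Kraft inequality (with $D_{(m,S)}=C_{(m,S)}$) for the latter, and conclude via Lemma~\ref{thm:main}. One small point: identity \eqref{eq:wrong} is just adding and subtracting $\varepsilon^\top\widetilde P_S^{(m)}\varepsilon$, so it needs no commutation; what actually matters is $\mu_n\in V_{S^*}^{(m^*)}$, positive semidefiniteness of $P_{S^*}^{(m^*)}-\widetilde P_S^{(m)}$, and $P_S^{(m)}-\widetilde P_S^{(m)}$ being a projection of rank $r_{(m,S)}-\widetilde r_{(m,S)}$.
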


\begin{proof}
First, the aggregated complexity satisfies the Kraft inequality on \(\widetilde{\mathcal{M}}_n\):
\[
\sum_{(m,S)\in\widetilde{\mathcal{M}}_n} e^{-C_{(m,S)}}\le1.
\]

Next, assumptions (\romannumeral 1) and (\romannumeral 2) are the pair-indexed extensions of assumptions
(A1) and (A2) in Lemma~\ref{thm:main}, with the single-class candidate \(S\) replaced by
the class-model pair \((m,S)\). Therefore, it remains to verify assumption (A3) in
Lemma~\ref{thm:main}.

\medskip
\noindent
{\bf Verification of (A3a).}
Since \(\widetilde P_S^{(m)}\) is the orthogonal projection onto
$
V_S^{(m)}\cap V_{S^*}^{(m^*)}\subset V_S^{(m)},
$
the matrix
$
P_S^{(m)}-\widetilde P_S^{(m)}
$
is an orthogonal projection of rank
$
r_{(m,S)}-\widetilde r_{(m,S)}.
$
Hence
\[
\|P_S^{(m)}-\widetilde P_S^{(m)}\|_F^2
=
r_{(m,S)}-\widetilde r_{(m,S)},
\qquad
\|P_S^{(m)}-\widetilde P_S^{(m)}\|_{\mathrm{op}}\leq1.
\]
As in the proof of Theorem~\ref{thm:complexity}, we obtain
\begin{align*}
&\sum_{(m,S)\in\widetilde{\mathcal{M}}_{\mathrm{w}}}
\Pr\!\left(
\frac{\varepsilon^\top(P_S^{(m)}-\widetilde P_S^{(m)})\varepsilon}{\sigma^2}
>
B_{(m,S)}
\right)\\
&\le
\sum_{(m,S)\in\widetilde{\mathcal{M}}_{\mathrm{w}}}
2\exp\!\left\{
-c_{1,\alpha, \zeta}
\bigl(\lambda D_{(m,S)}^{2/\alpha}+\log\log n+r_{(m,S)}-\tilde r_{(m,S)}\bigr)^{\alpha/2}
\right\}\\
&\le
\sum_{(m,S)\in\widetilde{\mathcal{M}}_{\mathrm{w}}}
2\exp\!\left\{
-c_{2,\alpha, \zeta}\Bigl(
\lambda^{\alpha/2}D_{(m,S)}+(\log\log n)^{\alpha/2}
\Bigr)
\right\}\\
&\le
2\exp\!\bigl\{-c_{2,\alpha, \zeta}(\log\log n)^{\alpha/2}\bigr\}
\Biggl(\sum_{(m,S)\in\widetilde{\mathcal{M}}_{\mathrm{w}}^{\mathrm{in}}} e^{-2D_{(m,S)}}+\sum_{(m,S) \in \widetilde{\mathcal{M}}_{\mathrm{w}}^{\mathrm{out}}} e^{-2D_{(m,S)}}\Biggr)\\
&\to0,
\end{align*}
where the third inequality follows by the choice of $\lambda_{\rm sel}$, and the last inequality follows from the same argument as in the proof of Theorem~\ref{thm:complexity}, together with the Kraft inequality.

\medskip
\noindent
{\bf Verification of (A3b).}
Define
\[
\Gamma_{(m,S)}
:=
\mathrm{APP}(m,S)+u_{(m,S)}-B_{(m,S)},
\qquad (m,S)\in\widetilde{\mathcal{M}}_{\mathrm{w}}.
\]
As in the proof of inequality \eqref{eq:gammas}, we have
\[
\frac{\Gamma_{(m,S)}}{\sqrt{\mathrm{APP}(m,S)}}\ge c\sqrt{B_{(m,S)}}
\]
for some universal constant \(c>0\).

Using Lemma~\ref{lem:sum}, we obtain
\begin{align*}
&\ \sum_{(m,S)\in\widetilde{\mathcal{M}}_{\mathrm{w}}}
\Pr\!\left(
\frac{2\mu_n^\top(P_{S^*}^{(m^*)}-P_S^{(m)})\varepsilon}{\sigma^2}
+\Gamma_{(m,S)}
\le0
\right)\\
&\le
2\sum_{(m,S)\in\widetilde{\mathcal{M}}_{\mathrm{w}}}
\exp\!\left\{
-c_{3,\alpha, \zeta}
\left(
\frac{\Gamma_{(m,S)}}{\sqrt{\mathrm{APP}(m,S)}}
\right)^\alpha
\right\}\\
&\le
2\sum_{(m,S)\in\widetilde{\mathcal{M}}_{\mathrm{w}}}
\exp\!\bigl\{-c_{4,\alpha, \zeta}B_{(m,S)}^{\alpha/2}\bigr\}\\
&\le
2\exp\!\bigl\{-c_{5,\alpha, \zeta}(\log\log n)^{\alpha/2}\bigr\}
\Biggl(\sum_{(m,S)\in\widetilde{\mathcal{M}}_{\mathrm{w}}^{\mathrm{in}}} e^{-2D_{(m,S)}}+\sum_{(m,S) \in \widetilde{\mathcal{M}}_{\mathrm{w}}^{\mathrm{out}}} e^{-2D_{(m,S)}}\Biggr)\\
&\to0.
\end{align*}
This proves (A3b), and hence assumption (A3) of Lemma~\ref{thm:main} holds on the aggregated list.

Therefore, all assumptions of Lemma~\ref{thm:main} are satisfied for the pair-indexed candidate list
\(\widetilde{\mathcal{M}}_n\). It follows that
\[
\Pr\!\left(
\min_{\substack{(m,S)\in\widetilde{\mathcal{M}}_n\\ (m,S)\neq(m^*,S^*)}}
\mathrm{DCIC}(m,S)
>
\mathrm{DCIC}(m^*,S^*)
\right)\to1,
\]
which is equivalent to
\[
\Pr\bigl((\hat m,\hat S)=(m^*,S^*)\bigr)\to1.
\]
This completes the proof.
\end{proof}

\subsection{Proof of Corollary~\ref{cor:basis_consistency}}
\begin{proof}
Let
\[
\widetilde{\mathcal M}_{n}
=
\bigcup_{m=1}^{M}
\{(m,S):S\in\mathcal M_n^{(m)}\},
\qquad M=3.
\]
Fix any $\lambda\ge\lambda_{\mathrm{basis}}$.
By Corollary~\ref{cor:multiclass_consistency}, it suffices to verify
the three conditions in Theorem~\ref{thm:complexity} on
$\widetilde{\mathcal M}_{n}$.

\medskip
\noindent
\textbf{Case 1: Overfitted pairs.}
For every $(m,S)\in\widetilde{\mathcal M}_{\mathrm{sup}}$, we have
$m=m^*$, and hence
\[
C_{(m,S)}-C_{(m^*,S^*)}
=
\bigl(C_S^{(m^*)}+\log M\bigr)
-
\bigl(C_{S^*}^{(m^*)}+\log M\bigr)
=
C_S^{(m^*)}-C_{S^*}^{(m^*)}.
\]
Thus, the additive term $\log M$ cancels. Since the within-class
overfitted summability condition for the code lengths in
Section~\ref{sub:example_family} is unchanged by aggregation,
assumption~(\romannumeral 1) of Theorem~\ref{thm:complexity} holds on
$\widetilde{\mathcal M}_{n}$.

\medskip
\noindent
\textbf{Case 2: Underfitted pairs.}
For every $(m^*,S)\in\widetilde{\mathcal M}_{\mathrm{sub}}$, condition~(\romannumeral 1)
of the corollary gives
\[
\widetilde{\Delta}^{-}(S)
\le
s^*-s_{(m^*,S)}
\le
K_{\max}\widetilde{\Delta}^{-}(S).
\]
Moreover, for some constant $c_3>0$,
\[
\bigl(
C_{(m^*,S^*)}-C_{(m^*,S)}
\bigr)_+
\le
c_3\widetilde{\Delta}^{-}(S)\log p
\le
c_3\bigl(s^*-s_{(m^*,S)}\bigr)\log p.
\]
We also have
\[
\log
\widetilde N\bigl(\widetilde{\Delta}^{-}(S)\bigr)
\le
\bigl(s^*-s_{(m^*,S)}\bigr)\log p.
\]
Therefore, for some constant $c_4>0$,
\begin{align*}
&\eta_n\bigl(s^*-s_{(m^*,S)}\bigr)
+\lambda
\bigl(
C_{(m^*,S^*)}-C_{(m^*,S)}
\bigr)_+
+\log
\widetilde N\bigl(\widetilde{\Delta}^{-}(S)\bigr)
\\
&\qquad\le
\bigl(s^*-s_{(m^*,S)}\bigr)\log n
+\lambda c_3
\bigl(s^*-s_{(m^*,S)}\bigr)\log p
+\bigl(s^*-s_{(m^*,S)}\bigr)\log p
\\
&\qquad\le
c_4
\bigl(s^*-s_{(m^*,S)}\bigr)
\bigl(\log n+\lambda\log p\bigr).
\end{align*}
By assumption~(\romannumeral 2),
\[
\mathrm{APP}(m^*,S)
\ge
c_1
\bigl(s^*-s_{(m^*,S)}\bigr)
\bigl(\log n+\lambda\log p\bigr).
\]
Taking $c_1$ sufficiently large verifies assumption~(\romannumeral 2) of
Theorem~\ref{thm:complexity} for all
$(m^*,S)\in\widetilde{\mathcal M}_{\mathrm{sub}}$.

\medskip
\noindent
\textbf{Case 3: Wrong pairs.}
Fix $(m,S)\in\widetilde{\mathcal M}_{\mathrm w}$, and let $G$ denote
its active component set. By assumption~(\romannumeral 1), each active component
contributes at least one and at most $K_{\max}$ selected basis terms.
Therefore,
\[
|G|
\le s_{(m,S)}
\le K_{\max}|G|.
\]
The code lengths in Section~\ref{sub:example_family} imply that, for
some constant $c_5>0$,
\[
C_{(m^*,S^*)}\le c_5s^*\log p,
\qquad
C_{(m,S)}\ge |G|\log\frac{ep}{|G|}.
\]

We first consider $(m,S)$ satisfying
$s_{(m,S)}>(1+\kappa)s^*$. Since
$|G|\ge s_{(m,S)}/K_{\max}$, we have
\[
|G|>
\frac{(1+\kappa)s^*}{K_{\max}}.
\]
The function $x\mapsto x\log(ep/x)$ is nondecreasing on $(0,p]$.
Thus, whenever such a candidate exists,
\[
C_{(m,S)}
\ge
\frac{(1+\kappa)s^*}{K_{\max}}
\log\left\{
\frac{epK_{\max}}{(1+\kappa)s^*}
\right\}.
\]
Moreover, assumption~(\romannumeral 1) and $g^*\le p^\gamma$ imply
$s^*\le K_{\max}g^*\le K_{\max}p^\gamma$. Hence, for every fixed
$\kappa$ and all sufficiently large $p$,
\[
\log\left\{
\frac{epK_{\max}}{(1+\kappa)s^*}
\right\}
\ge
\frac{1-\gamma}{2}\log p,
\]
and therefore
\[
C_{(m,S)}
\ge
\frac{(1+\kappa)(1-\gamma)}
     {2K_{\max}}
s^*\log p.
\]
Taking the constant $\kappa$ in assumption~(\romannumeral 3) sufficiently large
so that
\[
\frac{(1+\kappa)(1-\gamma)}
     {2K_{\max}}
\ge 2c_5,
\]
we obtain
$
C_{(m,S)}\ge 2C_{(m^*,S^*)}
$
for every $(m,S)\in\widetilde{\mathcal M}_{\mathrm w}$ satisfying
$s_{(m,S)}>(1+\kappa)s^*$.

Since $D_{(m,S)}\le C_{(m,S)}$ and
$r_{(m,S)}-\widetilde r_{(m,S)}\le s_{(m,S)}$, it follows that
\begin{align*}
2B_{(m,S)}-u_{(m,S)}
&=
\frac{\lambda}{2}D_{(m,S)}
-\bigl(s_{(m,S)}-s^*\bigr)\log n
+\lambda C_{(m^*,S^*)}
-\lambda C_{(m,S)}
\\
&\qquad
+\frac12\log\log n
+4\bigl(r_{(m,S)}-\widetilde r_{(m,S)}\bigr)
\\
&\le
-\bigl(s_{(m,S)}-s^*\bigr)\log n
+\lambda C_{(m^*,S^*)}
-\frac{\lambda}{2}C_{(m,S)}
\\
&\qquad
+\frac12\log\log n
+4s_{(m,S)}
\\
&\le
-\bigl(s_{(m,S)}-s^*\bigr)\log n
+\frac12\log\log n
+4s_{(m,S)}.
\end{align*}
Because $s_{(m,S)}>(1+\kappa)s^*$,
\[
s_{(m,S)}-s^*
>
\frac{\kappa}{1+\kappa}s_{(m,S)}.
\]
Consequently,
\[
2B_{(m,S)}-u_{(m,S)}
\le
-\frac{\kappa}{1+\kappa}s_{(m,S)}\log n
+4s_{(m,S)}
+\frac12\log\log n
<0
\]
for all sufficiently large $n$. Hence,
$
2B_{(m,S)}<u_{(m,S)}
$
for every $(m,S)\in\widetilde{\mathcal M}_{\mathrm w}$ with
$s_{(m,S)}>(1+\kappa)s^*$.

It remains to consider
$(m,S)\in\widetilde{\mathcal M}_{\mathrm w}$ satisfying
$s_{(m,S)}\le(1+\kappa)s^*$. For such a pair,
\begin{align*}
2\bigl(2B_{(m,S)}-u_{(m,S)}\bigr)
&=
\lambda D_{(m,S)}
+2\bigl(s^*-s_{(m,S)}\bigr)\log n
+2\lambda C_{(m^*,S^*)}
-2\lambda C_{(m,S)}
\\
&\qquad
+\log\log n
+8\bigl(r_{(m,S)}-\widetilde r_{(m,S)}\bigr)
\\
&\le
2\bigl(s^*-s_{(m,S)}\bigr)_+\log n
+2\lambda C_{(m^*,S^*)}
+\log\log n
+8s_{(m,S)}
\\
&\le
c_7\left\{
\bigl(s^*-s_{(m,S)}\bigr)_+\log n
+\lambda s^*\log p
+\log\log n
\right\}
\end{align*}
for some constant $c_7>0$, where the first inequality follows from
$D_{(m,S)}\le C_{(m,S)}$, and the last inequality uses
$C_{(m^*,S^*)}\le c_5s^*\log p$,
$s_{(m,S)}\le(1+\kappa)s^*$, and
$\lambda\ge\lambda_{\mathrm{basis}}>0$.

By assumption~(\romannumeral 3),
\[
\mathrm{APP}(m,S)
\ge
c_2\left\{
\bigl(s^*-s_{(m,S)}\bigr)_+\log n
+\lambda s^*\log p
+\log\log n
\right\}
\]
for every $(m,S)\in\widetilde{\mathcal M}_{\mathrm w}$ with
$s_{(m,S)}\le(1+\kappa)s^*$. Taking $c_2$ sufficiently large gives
\[
\mathrm{APP}(m,S)
\ge
2\bigl(2B_{(m,S)}-u_{(m,S)}\bigr)
\]
whenever $2B_{(m,S)}>u_{(m,S)}$. Thus, assumption~(\romannumeral 3) of
Theorem~\ref{thm:complexity} is verified.

\medskip
All three conditions of Theorem~\ref{thm:complexity} therefore hold
on $\widetilde{\mathcal M}_{n}$. Corollary~\ref{cor:multiclass_consistency}
yields
\[
\Pr\bigl\{(\widehat m,\widehat S)=(m^*,S^*)\bigr\}\to1.
\]
Since the choice of $\lambda\ge\lambda_{\mathrm{basis}}$ was
arbitrary, this completes the proof.
\end{proof}

\subsection{Proof of Corollary~\ref{cor:additive_multiclass_oracle}}

\begin{proof}
Let $S^\circ\in\mathcal M_n^{(m^\circ)}$ be the admissible
candidate formed by the active set $G^*$ and the componentwise
approximants $\{f^\circ_{j,m^\circ}:j\in G^*\}$. By the multi-class
risk bound,
\[
\mathbb E\bigl\{\mathrm{ASE}(\widehat m,\widehat S)\bigr\}
\le
2R_n^*(\mu_n;\widetilde{\mathcal M}_n)+\frac{c}{n}
\le
2R_n(\mu_n;m^\circ,S^\circ)+\frac{c}{n}.
\]
Since the model space associated with $(m^\circ,S^\circ)$ contains
$\sum_{j\in G^*}f^\circ_{j,m^\circ}$, the projection property gives
\[
\frac{1}{n}
\bigl\|
(\mathrm{I}_n-P_{(m^\circ,S^\circ)})\mu_n
\bigr\|_2^2
\le
\left\|
\sum_{j\in G^*}
\bigl(f_j^*-f^\circ_{j,m^\circ}\bigr)
\right\|_n^2.
\]
By the assumptions in Corollary 4.3, the right-hand side is bounded by
$
c_0
\sum_{j\in G^*}
\bigl\|
f_j^*-f^\circ_{j,m^\circ}
\bigr\|_n^2.
$

By the definition of $r_{m^\circ}(n)$ and the construction of
$S^\circ$, the componentwise approximation, estimation, and
within-family complexity costs are bounded by
$c g^*r_{m^\circ}(n)$, while the support and class codes contribute
$c g^*\log(ep/g^*)/n$ and $c\log M/n$, respectively. Since $\alpha=2$ and $\sigma=1$ in the present setting,
it follows that
\[
R_n(\mu_n;m^\circ,S^\circ)
\le
c_\lambda\left\{
\frac{g^*\log(ep/g^*)}{n}
+
g^*r_{m^\circ}(n)
+
\frac{\log M}{n}
\right\}.
\]

Combining the preceding inequalities and using
$r_{m^\circ}(n)=\min_{m\in\mathcal F}r_m(n)$ gives
\[
\mathbb E\bigl\{\mathrm{ASE}(\widehat m,\widehat S)\bigr\}
\le
c_\lambda\left\{
\frac{g^*\log(ep/g^*)}{n}
+
g^*\min_{m\in\mathcal F}r_m(n)
+
\frac{\log M}{n}
\right\}.
\]
The remainder term $c/n$ is absorbed into the first term because
$g^*\log(ep/g^*)\ge1$. This completes the proof.
\end{proof}

\subsection{Proof of Lemma~\ref{lem:mono-complexity}}
\begin{proof}
Given $0<\lambda_1<\lambda_2$, let $\widehat{S}_{\lambda_1} \in \arg \min\limits_{S \in \mathcal{M}_n} \mathrm{DCIC}_{\lambda_1}(S)$ and $\widehat{S}_{\lambda_2} \in \arg \min\limits _{S \in \mathcal{M}_n} \mathrm{DCIC}_{\lambda_2}(S)$. By optimality, we have

$$
\begin{aligned}
& \frac{\operatorname{RSS}\left(\widehat{S}_{\lambda_1}\right)}{\sigma^2}+\eta_n\left|\widehat{S}_{\lambda_1}\right|+\lambda_1 C_{\widehat{S}_{\lambda_1}} \leq \frac{\operatorname{RSS}\left(\widehat{S}_{\lambda_2}\right)}{\sigma^2}+\eta_n\left|\widehat{S}_{\lambda_2}\right|+\lambda_1 C_{\widehat{S}_{\lambda_2}}, \\
& \frac{\operatorname{RSS}\left(\widehat{S}_{\lambda_2}\right)}{\sigma^2}+\eta_n\left|\widehat{S}_{\lambda_2}\right|+\lambda_2 C_{\widehat{S}_{\lambda_2}} \leq \frac{\operatorname{RSS}\left(\widehat{S}_{\lambda_1}\right)}{\sigma^2}+\eta_n\left|\widehat{S}_{\lambda_1}\right|+\lambda_2 C_{\widehat{S}_{\lambda_1}}.
\end{aligned}
$$
Adding the two inequalities and using simple algebra, we have
$$
\left(\lambda_2-\lambda_1\right)\left(C_{\widehat{S}_{\lambda_2}}-C_{\widehat{S}_{\lambda_1}}\right) \leq 0.
$$
Since $\lambda_2>\lambda_1$, we conclude $C_{\widehat{S}_{\lambda_1}} \geq C_{\widehat{S}_{\lambda_2}}$.
\end{proof}

\subsection{Proof of Theorem~\ref{thm:coarse_poly_regime}}
\begin{proof}
Let \(B_{0}=\mathrm{DCIC}_{\lambda}(\varnothing)=\|y\|_2^2/\sigma^2\).
We first show that \(B_{0}=O_{\mathbb P}(n)\). Since \(y=\mu_n+\varepsilon\), we have
$
\|y\|_2^2
\le
2\|\mu_n\|_2^2+2\|\varepsilon\|_2^2.
$
By assumption, \(\|\mu_n\|_2^2\le c_1n\sigma^2\). Moreover, since the noise is sub-Gaussian, the variables \(\varepsilon_i^2\) are sub-exponential. Hence, there exist constants \(c_2, c_3>0\) such that
\[
\Pr\!\left(\sum_{i=1}^n \varepsilon_i^2 \le c_2 \sigma^2 n\right)\ge 1-\exp(-c_3n).
\]
Therefore, on an event of probability at least \(1-\exp(-c_3n)\),
\[
\|y\|_2^2
\le
2c_1n\sigma^2+2c_2\sigma^2 n
\le
c_4 n\sigma^2
\]
for some constant \(c_4>0\), and thus
$
B_{0}
=
\|y\|_2^2/\sigma^2
\le
c_4 n.
$

Now let \(\mathcal V_\lambda\) denote the collection of candidate models satisfying \eqref{eq:Ts-def}, so that \(N_\lambda=|\mathcal V_\lambda|\).
By construction of the search, every model \(S\in\mathcal V_\lambda\) satisfies
$
C_S
\le
B_{0}/\lambda.
$
Hence, on the event \(\{B_{0}\le c_4 n\}\),
\[
C_S\le \frac{c_4 n}{\lambda},
\qquad \forall\, S\in\mathcal V_\lambda.
\]
Therefore,
\[
e^{-C_S}\ge \exp\!\left(-\frac{c_4 n}{\lambda}\right),
\qquad \forall\, S\in\mathcal V_\lambda,
\]
and so
\[
N_\lambda \exp\!\left(-\frac{c_4 n}{\lambda}\right)
\le
\sum_{S\in\mathcal V_\lambda} e^{-C_S}.
\]
Using the Kraft inequality,
\[
\sum_{S\in\mathcal V_\lambda} e^{-C_S}
\le
\sum_{S\in\mathcal M_n} e^{-C_S}
\le 1.
\]
Thus,
\[
N_\lambda \le \exp\!\left(\frac{c_4 n}{\lambda}\right)
\]
with probability at least \(1-\exp(-c_3 n)\).
Finally, if
$
\lambda \ge c_5\,n/\log p,
$
 we have
\[
N_\lambda
\le
\exp\!\left(\frac{c_4}{c_5}\log p\right)
=
p^{\,c_4/c_5}.
\]
This proves the claim.
\end{proof}

\subsection{Proof of Theorem~\ref{thm:control}}
\begin{proof}
The constant $\lambda_\zeta>0$ will be chosen below depending on $\zeta$. The proof treats the three classes of competitors separately, using the decompositions \eqref{eq:sup}, \eqref{eq:sub}, and \eqref{eq:wrong}.

\paragraph*{Case 1: $S\in\mathcal M_{\mathrm{sup}}$}
Fix \(S\in\mathcal M_{\mathrm{sup}}\) with \(\Delta^+(S)=j\). For simplicity, denote $x_S = \log(L/\delta)+C_S-C_{S^*}$. Note that $\mathbb{E}\{\varepsilon^\top(P_S-P_{S^*})\varepsilon\}=\sigma^2(r_S-r_{S^*})$.
By Lemma \ref{lem:hanson}, there exists a constant $c'_\zeta>0$ such that
\begin{align*}
&\Pr\left(\frac{\varepsilon^\top (P_{S}-P_{S^*})\varepsilon}{\sigma^2}-(r_S-r_{S^*})
\geq c'_\zeta\bigl(\sqrt{(r_S-r_{S^*})x_S}+x_S\bigr)\right)\\
\leq{}\;&
2\exp\Bigl\{-\min\Bigl(\frac{(\sqrt{(r_S-r_{S^*})x_S}+x_S)^2}{r_S-r_{S^*}}, \sqrt{(r_S-r_{S^*})x_S}+x_S\Bigr)\Bigr\}\\
\leq{}\;&
2\exp\{-x_S\}.
\end{align*}
On the other hand, since $
r_S-r_{S^*} \leq \Delta^+(S)= j
$ and $x_S \geq C_S-C_{S^*}\geq jC_{\min}$, we have
\begin{align*}
    \sqrt{(r_S-r_{S^*})x_S}+(r_S-r_{S^*})+x_S \leq \frac{3}{2}\Bigl(\frac{1}{C_{\min}}+1\Bigr)x_S.
\end{align*}
Note that $C_{\min}\geq \log^*(1) \geq c_0 >0$. We choose $\lambda_\zeta$ so that
$
\lambda_\zeta \ge
\frac{3}{2}\left(\frac{1}{c_0}+1\right)\max\{c'_\zeta,1\}.
$ Consequently, we have
\begin{align*}
&\Pr\left(\frac{\varepsilon^\top (P_{S}-P_{S^*})\varepsilon}{\sigma^2}
\geq \lambda_{\zeta} x_S\right)\\
\leq{}\;&\Pr\left(\frac{\varepsilon^\top (P_{S}-P_{S^*})\varepsilon}{\sigma^2}
\geq c'_\zeta\bigl(\sqrt{(r_S-r_{S^*})x_S}+x_S\bigr)+(r_S-r_{S^*})\right)\\
\leq{}\;&
2\exp\{-(\log(L/\delta)+C_S-C_{S^*})\}.
\end{align*}
Then, combining the above inequality and \eqref{eq:sup}, with high probability, we have
\begin{align}
\begin{split}\label{eq:sup_upper}
\mathrm{DCIC}(S)-\mathrm{DCIC}(S^*)\geq{}\;& \eta_n j+(\lambda-\lambda_{\zeta})(C_S-C_{S^*})-\lambda_{\zeta}\log(L/\delta)\\
\geq{}\;& \eta_n j+(\lambda-\lambda_{\zeta})jC_{\min}-\lambda_{\zeta}\log(L/\delta).
\end{split}
\end{align}
Consequently, from \eqref{eq:sup_upper}, $\mathrm{DCIC}(S)-\mathrm{DCIC}(S^*)>0$ is guaranteed whenever
\[
j > \frac{\lambda_{\zeta}\log(L/\delta)}{\eta_n+(\lambda-\lambda_{\zeta}) C_{\min}}.
\]
By the union bound, we have
\begin{align*}
&\Pr\Bigl(\forall\, S\in \mathcal{M}_{\mathrm{sup}}\ \text{with}\ j\geq \frac{\lambda_{\zeta}\log(L/\delta)}{\eta_n+(\lambda-\lambda_{\zeta}) C_{\min}},\ \mathrm{DCIC}(S)-\mathrm{DCIC}(S^*)>0\Bigr)\\
\geq{}\;&
1-\sum_{j\geq \frac{\lambda_{\zeta}\log(L/\delta)}{\eta_n+(\lambda-\lambda_{\zeta}) C_{\min}}}\sum_{\substack{S\in \mathcal{M}_{\mathrm{sup}}\\\Delta^+(S) = j}}
\Pr\bigl(\mathrm{DCIC}(S)-\mathrm{DCIC}(S^*)\leq 0\bigr)\\
\geq{}\;&
1-\sum_{j\geq \frac{\lambda_{\zeta}\log(L/\delta)}{\eta_n+(\lambda-\lambda_{\zeta}) C_{\min}}}\sum_{\substack{S\in \mathcal{M}_{\mathrm{sup}}\\\Delta^+(S) = j}}
2\exp\{-(\log(L/\delta)+C_S-C_{S^*})\}\\
\geq{}\;&
1-2(\delta/L)\sum_{j\geq 1}\sum_{\substack{S\in \mathcal{M}_{\mathrm{sup}}\\\Delta^+(S) = j}}\exp\{-(C_S-C_{S^*})\}\\
\geq{}\;&
1-2e\,(\delta/L),
\end{align*}
where the last inequality follows from $$\sum_{j\geq 1}\sum_{\substack{S\in \mathcal{M}_{\mathrm{sup}}\\\Delta^+(S) = j}}\exp\{-(C_S-C_{S^*})\} \leq \exp \left\{\sum_{i \notin S^*} e^{-C_i}\right\}\leq e.$$

\paragraph*{Case 2: $S\in\mathcal M_{\mathrm{sub}}$}
Fix \(S\in\mathcal M_{\mathrm{sub}}\) with \(\Delta^-(S)=\ell\). Note that
$
P_{S^*}-P_{S}\succeq 0,
$
which implies $\varepsilon^\top(P_{S^*}-P_{S})\varepsilon \geq 0$.
By the choice of $\lambda_\zeta$, assume that
$c_{\zeta}\lambda_\zeta\ge 2$, where $c_{\zeta}>0$ is the
constant in Lemma \ref{lem:sum} with $\alpha=2$.
Choosing
$
t=\sqrt{\mathrm{APP}(S)}\times \sqrt{2\lambda_{\zeta}(\log(L/\delta) + \ell\log(es^*/\ell))}
$,
we have
\begin{align*}
&\Pr\left(\left|\frac{2\mu_n^\top(P_{S^*}-P_S)\varepsilon}{\sigma^2}\right|
\geq \frac{1}{2}\mathrm{APP}(S)+\lambda_{\zeta}\left(\log(L/\delta)+\ell\log(es^*/\ell)\right)\right)\\
\leq{}\;&
\Pr\left(\left|\frac{2\mu_n^\top(P_{S^*}-P_S)\varepsilon}{\sigma^2}\right|
\geq \sqrt{\mathrm{APP}(S)}\times\sqrt{2\lambda_{\zeta}(\log(L/\delta)+\ell\log(es^*/\ell))}\right)\\
\leq{}\;&
2\exp\left\{-2\left(\log(L/\delta)+\ell\log\frac{es^*}{\ell}\right)\right\}.
\end{align*}
Then, combining the above two inequalities and \eqref{eq:sub}, with high probability, we have
\begin{align}
\begin{split}\label{eq:sub_upper}
&\mathrm{DCIC}(S)-\mathrm{DCIC}(S^*)\\\geq{}\;& \frac{1}{2}\mathrm{APP}(S)-\eta_n \ell-\lambda(C_{S^*}-C_S)-\lambda_{\zeta}\log(L/\delta)-\lambda_{\zeta}\ell\log(es^*/\ell)\\
\geq{}\;& \frac{n}{2\sigma^2}\omega_{\lambda} \ell \beta_{\min}^2-\eta_n \ell-\lambda\ell C^*_{\max}-\lambda_{\zeta}\log(L/\delta)-\lambda_{\zeta}\ell\log s^*,
\end{split}
\end{align}
where the second inequality follows from $\mathcal{M}_{\rm sub} \subseteq \mathcal{L}_{\lambda}$.

Consequently, from \eqref{eq:sub_upper}, $\mathrm{DCIC}(S)-\mathrm{DCIC}(S^*)>0$ is guaranteed whenever
\begin{equation}\label{eq:subl}
\left(\frac{n}{2\sigma^2}\omega_{\lambda} \beta_{\min}^2-\eta_n-\lambda C^*_{\max}-\lambda_{\zeta}\log s^*\right)\ell > \lambda_{\zeta}\log(L/\delta)
\end{equation}
holds.
By the union bound, we have
\begin{align*}
&\Pr\Bigl(\forall\, S\in \mathcal{M}_{\mathrm{sub}}\ \text{with}\ \ell\ \text{satisfying}~\eqref{eq:subl},\ \mathrm{DCIC}(S)-\mathrm{DCIC}(S^*)>0\Bigr)\\
\geq{}\;&
1-\sum_{\ell\ \text{satisfies} \eqref{eq:subl}}\sum_{\substack{S\in \mathcal{M}_{\mathrm{sub}}\\\Delta^-(S)=\ell}}
\Pr\bigl(\mathrm{DCIC}(S)-\mathrm{DCIC}(S^*)\leq 0\bigr)\\
\geq{}\;&
1-\sum_{\ell\ \text{satisfies}~\eqref{eq:subl}}\sum_{\substack{S\in \mathcal{M}_{\mathrm{sub}}\\\Delta^-(S)=\ell}}
2\exp\Bigl\{-2\Bigl(\log(L/\delta)+\ell\log (es^*/\ell)\Bigr)\Bigr\}\\
\geq{}\;&
1-2(\delta/L)^{2}\sum_{\ell=1}^{s^*}\binom{s^*}{\ell}\exp\left\{-2\ell\log (es^*/\ell)\right\}\\
\geq{}\;&
1-2e\,(\delta/L)^{2}\sum_{\ell=1}^{s^*}\exp\left\{-\ell\log (es^*/\ell)\right\}\\
\geq{}\;&
1-2e\delta/L.
\end{align*}

\paragraph*{Case 3: $S\in\mathcal M_{\mathrm{w}}$}
Fix \(S\in\mathcal M_w\) with \(\Delta^+(S)=j\) and \(\Delta^-(S)=\ell\). For simplicity, denote $x'_S = \log(L/\delta)+\ell\log(es^*/\ell)+C_{S\backslash S^*}$. Note that
\[
\|P_{S^*}-\tilde P_{S}\|_F^2 \leq \ell
\qquad\text{and}\qquad
\|P_{S}-\tilde P_{S}\|_F^2 \leq j.
\]
Since
$
P_{S^*}-\tilde P_{S}\succeq 0,
$
we have $\varepsilon^\top(P_{S^*}-\tilde P_{S})\varepsilon \geq 0$.
Arguing as in {\bf Case 1} and {\bf Case 2}, and using the choice of \(\lambda_\zeta\), we have
\begin{align*}
&\Pr\left(\frac{\varepsilon^\top (P_{S}-\tilde P_{S})\varepsilon}{\sigma^2}
\geq \frac{\lambda_{\zeta}}{2} x'_S\right)\\
\leq{}\;&
\Pr\left(\frac{\varepsilon^\top (P_{S}-\tilde P_{S})\varepsilon}{\sigma^2}
\geq  c'_\zeta\bigl(\sqrt{(r_S-\tilde r_{S})x'_S}+x'_S\bigr)+(r_S-\tilde r_{S})\right)\\
\leq{}\;&
2\exp\{-2x'_S\},
\end{align*}
and
\begin{align*}
&\Pr\left(\left|\frac{2\mu_n^\top(P_{S^*}-P_S)\varepsilon}{\sigma^2}\right|
\geq \frac{1}{2}\left(\mathrm{APP}(S)+\lambda_{\zeta} x'_S\right)\right)\\
\leq{}\;&
\Pr\left(\left|\frac{2\mu_n^\top(P_{S^*}-P_S)\varepsilon}{\sigma^2}\right|
\geq \frac{1}{2}\sqrt{\mathrm{APP}(S)}\times\sqrt{\lambda_{\zeta} x'_S}\right)\\
\leq{}\;&
2\exp\left\{-2x'_S\right\}.
\end{align*}
Combining the above inequalities with~\eqref{eq:wrong} shows that, with high
probability,
\begin{align*}
&\mathrm{DCIC}(S)-\mathrm{DCIC}(S^*)\\
    = {}\;&\frac{\varepsilon^\top (P_{S^*}-\widetilde P_S)\varepsilon}{\sigma^2}
    -\frac{\varepsilon^\top ( P_{S}-\widetilde P_S)\varepsilon}{\sigma^2}
    +\frac{2\mu_n^\top(P_{S^*}-P_S)\varepsilon}{\sigma^2}
    +\mathrm{APP}(S)+u_{S}\\
\geq{}\;&\frac{1}{2}\mathrm{APP}(S)+ \eta_n j-\eta_n\ell+\lambda(C_S-C_{S^*})-\lambda_{\zeta}\log(L/\delta)-\lambda_{\zeta} C_{S\backslash S^*}-\lambda_{\zeta}\ell\log(es^*/\ell).
\end{align*}

If $S\notin \mathcal{L}_{\lambda}$, we have $\mathrm{DCIC}(S)-\mathrm{DCIC}(S^*) >0$.
On the other hand, if $S\in \mathcal{L}_{\lambda}$, we have
\begin{align*}
&\mathrm{DCIC}(S)-\mathrm{DCIC}(S^*)\\
\geq{}\;& \left(\frac{n}{2\sigma^2}\omega_{\lambda}\beta^2_{\min} -\eta_n-\lambda C^*_{\max}-\lambda_{\zeta}\log s^*\right)\ell+(\eta_n+(\lambda-\lambda_{\zeta})C_{\min})j-\lambda_{\zeta}\log(L/\delta).
\end{align*}
Consequently, $\mathrm{DCIC}(S)-\mathrm{DCIC}(S^*)>0$ is guaranteed whenever
\begin{align}\label{eq:constraint}
\Big(\eta_n+(\lambda-\lambda_{\zeta}) C_{\min}\Big)j+\left(\frac{n}{2\sigma^2}\omega_{\lambda}\beta^2_{\min}-\eta_n-\lambda C^*_{\max}-\lambda_{\zeta}\log s^*\right)\ell> \lambda_{\zeta}\log (L/\delta).
\end{align}
Then, by the union bound, we have
\begin{align*}
&\Pr\Bigl(\forall\, S\in \mathcal{M}_{\mathrm{w}}\ \text{satisfying}\ \eqref{eq:constraint},\ \mathrm{DCIC}(S)-\mathrm{DCIC}(S^*)>0\Bigr)\\
\geq{}\;&
1-\sum_{\substack{S\in \mathcal{M}_{\mathrm{w}}\\ \Delta^+(S),\,\Delta^-(S)\ \text{satisfy}\ \eqref{eq:constraint}}}
\Pr\bigl(\mathrm{DCIC}(S)-\mathrm{DCIC}(S^*)\leq 0\bigr)\\
\geq{}\;&
1-\sum_{\ell=1}^{s^*}\sum_{\substack{S\in \mathcal{M}_{\mathrm{w}}\\ \Delta^-(S)=\ell}}
4\exp\left\{-2\left(\log(L/\delta)+\ell\log (es^*/\ell)+C_{S\backslash S^*}\right)\right\}\\
\geq{}\;&
1-4e\,(\delta/L)^{2}\sum_{\ell=1}^{s^*}\exp\left\{-\ell\log (es^*/\ell)\right\}
\sum_{\substack{S\in \mathcal{M}_{\mathrm{w}}\\ \Delta^-(S)=\ell}}\exp\{-2C_{S\backslash S^*}\}\\
\geq{}\;&
1-ce(\delta/L),
\end{align*}
where the last inequality follows from the Kraft condition.

Combining the above bounds for each fixed $\lambda\in\Lambda$ and applying a union bound over $\Lambda$, we conclude that, with probability at least $1-c\delta$, the stated inequality holds simultaneously for all $\lambda\in\Lambda$.
\end{proof}

\section{Auxiliary Lemmas}\label{app:auxiliary}
Recall that
\[
\|Z\|_{\psi_\alpha}
:=
\inf\left\{K>0:
\mathbb E\exp\left(\frac{|Z|^\alpha}{K^\alpha}\right)\le 2
\right\}.
\]
Under the tail assumption in Section~\ref{sub:framework}, there exists a constant
$C_{\alpha,\zeta}>0$ such that
\[
\|\varepsilon_i\|_{\psi_\alpha}
\le C_{\alpha,\zeta}\sigma,
\qquad i=1,\ldots,n.
\]
Indeed, with
$
K_{\alpha,\zeta}
=
\max\left\{\zeta,\frac{2}{(\log 2)^{1/\alpha}}\right\},
$
we have
\[
\Pr\left(\frac{|\varepsilon_i|}{\sigma}>t\right)
\le
2\exp\left\{-\left(\frac{t}{K_{\alpha,\zeta}}\right)^\alpha\right\},
\qquad t\ge0,
\]
and the claim follows from the standard tail--Orlicz equivalence.

\begin{lemma}[\citet{ejp2021,sambale2023some}]\label{lem:hanson}
Let $\varepsilon=(\varepsilon_1,\ldots,\varepsilon_n)^\top$ be independent, mean-zero
sub-Weibull $(\alpha,\zeta)$ random variables for all $i$ and some $\alpha\in(0,2]$.
Let $A=(a_{ij})\in\mathbb{R}^{n\times n}$ be a symmetric matrix. Then, for every $t\ge 0$,
\begin{equation*}
\Pr\!\Big(
\big|\varepsilon^\top A\varepsilon-\mathbb{E}(\varepsilon^\top A\varepsilon)\big|
\ge t
\Big)
\ \le\
2\exp\!\left\{
-c_{\alpha,\zeta}
\min\!\left(
\frac{t^2}{\sigma^4\|A\|_{\mathrm{F}}^2},\
\Big(\frac{t}{\sigma^2\|A\|_{\mathrm{op}}}\Big)^{\alpha/2}
\right)
\right\},
\end{equation*}
where $\|A\|_F$ is the Frobenius norm and
$\|A\|_{\mathrm{op}}$ is the operator norm.
\end{lemma}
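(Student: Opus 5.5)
We would prove Lemma~\ref{lem:hanson} by the moment method. The target is an $L_q$ bound, valid for all $q\ge 2$,
\[
\bigl\|\varepsilon^\top A\varepsilon-\mathbb E\varepsilon^\top A\varepsilon\bigr\|_{L_q}
\le C_{\alpha,\zeta}\,\sigma^2\Bigl(\sqrt q\,\|A\|_{\mathrm F}+q^{2/\alpha}\|A\|_{\mathrm{op}}\Bigr).
\]
Markov's inequality with $q$ chosen according to whether $t^2/(\sigma^4\|A\|_{\mathrm F}^2)$ or $(t/(\sigma^2\|A\|_{\mathrm{op}}))^{\alpha/2}$ is smaller then gives the stated two-regime tail. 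The constant $2$ in front is obtained by adjusting $c_{\alpha,\zeta}$ for small $t$. As a preliminary step, we use the tail--Orlicz equivalence noted just before the lemma to rescale, so that $\|\varepsilon_i/\sigma\|_{\psi_\alpha}\le C_{\alpha,\zeta}$.

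We split $\varepsilon^\top A\varepsilon-\mathbb E(\cdot)=\sum_i a_{ii}(\varepsilon_i^2-\sigma^2)+\sum_{i\neq j}a_{ij}\varepsilon_i\varepsilon_j$ into a diagonal and an off-diagonal part.

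For the diagonal part, $\varepsilon_i^2-\sigma^2$ is centered with $\psi_{\alpha/2}$-norm at most $C\sigma^2$. The moment inequality for sums of independent sub-Weibull variables (Hitczenko--Montgomery-Smith--Oleszkiewicz, or the Kuchibhotla--Chakrabortty form) gives
\[
\bigl\|\textstyle\sum_i a_{ii}(\varepsilon_i^2-\sigma^2)\bigr\|_{L_q}\lesssim \sigma^2\bigl(\sqrt q\,\|(a_{ii})\|_2+q^{2/\alpha}\max_i|a_{ii}|\bigr).
\]
This is dominated by the target, since $\|(a_{ii})\|_2\le\|A\|_{\mathrm F}$ and $\max_i|a_{ii}|\le\|A\|_{\mathrm{op}}$.

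For the off-diagonal part, we first decouple (de la Pe\~na--Montgomery-Smith) and replace it by $\varepsilon^\top A_0\varepsilon'$, where $\varepsilon'$ is an independent copy and $A_0$ is $A$ with its diagonal removed, at the cost of an absolute constant in $L_q$. Conditionally on $\varepsilon'$, this is a linear form $\sum_i\varepsilon_i b_i$ with $b=A_0\varepsilon'$. The sub-Weibull linear moment bound gives
\[
\|\langle\varepsilon,b\rangle\|_{L_q\mid\varepsilon'}\lesssim\sigma\bigl(\sqrt q\|b\|_2+q^{1/\alpha}\|b\|_\infty\bigr).
\]
Taking $L_q$ norms in $\varepsilon'$ then leaves two quantities to control. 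The first is $\|\,\|A_0\varepsilon'\|_2\|_{L_q}\lesssim\sigma(\|A_0\|_{\mathrm F}+q^{1/\alpha}\|A_0\|_{\mathrm{op}})$. We would obtain this from the same argument applied to the nonnegative quadratic form $\varepsilon'^\top A_0^2\varepsilon'$, whose trace is $\sigma^2\|A_0\|_{\mathrm F}^2$ and whose Frobenius norm is at most $\|A_0\|_{\mathrm{op}}\|A_0\|_{\mathrm F}$; solving the resulting quadratic inequality closes this step. The second is $\|\,\|A_0\varepsilon'\|_\infty\|_{L_q}$, which we would bound by the same quantity, or more crudely by $q^{1/\alpha}\|A_0\|_{\mathrm{op}}$. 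Multiplying the pieces produces $\sqrt q\|A\|_{\mathrm F}+q^{1/2+1/\alpha}\|A\|_{\mathrm{op}}+q^{2/\alpha}\|A\|_{\mathrm{op}}$. Since $\alpha\le 2$ we have $1/2+1/\alpha\le 2/\alpha$, so every term fits the target; note that $\|A_0\|_{\mathrm{op}}\le 2\|A\|_{\mathrm{op}}$.

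The main obstacle is the heavy-tailed regime $\alpha<1$, where $\psi_\alpha$ is only a quasi-norm. There the convexity-based symmetrization and decoupling steps, as well as the linear moment bound, must be carried out with $L_q$ moments rather than Orlicz norms. We would first symmetrize with Rademacher signs and compare each $|\varepsilon_i|$ with a symmetric Weibull variable. We would then invoke Lata{\l}a-type moment estimates for chaoses in variables with log-concave tails, the route taken by \citet{ejp2021,sambale2023some}. Since the paper only uses this lemma as a cited tool, the cleanest option is to reproduce their moment estimate and check the constants. The dependence on $\zeta$ enters only through $C_{\alpha,\zeta}$.
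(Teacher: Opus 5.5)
The paper does not prove this lemma; it imports it from the cited works, so your closing fallback of reproducing their moment estimate matches what the paper effectively does. Your self-contained moment argument, however, has a genuine gap where you assemble the off-diagonal term.

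The bound $\bigl\|\,\|A_0\varepsilon'\|_\infty\bigr\|_{L_q}\lesssim q^{1/\alpha}\|A_0\|_{\mathrm{op}}$ is false. Take $A_0$ to be the adjacency matrix of a perfect matching on $[n]$. Then $\|A_0\|_{\mathrm{op}}=1$, but $\|A_0\varepsilon'\|_\infty=\max_i|\varepsilon'_i|$, whose $L_2$ norm grows like $\sigma(\log n)^{1/\alpha}$ for Weibull noise. Your other option, $\|b\|_\infty\le\|b\|_2$, is valid, but it contributes a term $q^{1/\alpha}\|A\|_{\mathrm F}$ that is missing from your list. For $\alpha<2$ that term is not dominated by $\sqrt q\,\|A\|_{\mathrm F}+q^{2/\alpha}\|A\|_{\mathrm{op}}$: use the same matrix, where $\|A\|_{\mathrm F}=\sqrt n$, with $1\ll q\ll n^{\alpha/2}$.

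For $\alpha\le1$ you can repair this by summing $q$-th moments over rows:
\[
\bigl\|\,\|A_0\varepsilon'\|_\infty\bigr\|_{L_q}
\le\Bigl(\sum_i\mathbb E\,|\langle A_0e_i,\varepsilon'\rangle|^q\Bigr)^{1/q}
\lesssim\sigma q^{1/\alpha}\Bigl(\sum_i\|A_0e_i\|_2^q\Bigr)^{1/q}
\le\sigma q^{1/\alpha}\|A_0\|_{\mathrm F}^{2/q}\|A_0\|_{\mathrm{op}}^{1-2/q}.
\]
The middle step is your linear bound together with $\sqrt q\le q^{1/\alpha}$ and $\|\cdot\|_\infty\le\|\cdot\|_2$. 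Weighted AM--GM then gives $q^{2/\alpha}\|A_0\|_{\mathrm F}^{2/q}\|A_0\|_{\mathrm{op}}^{1-2/q}\le\sqrt q\,\|A_0\|_{\mathrm F}+C_\alpha q^{2/\alpha}\|A_0\|_{\mathrm{op}}$ for $q\ge3$. You also need your bootstrap for $\|A_0\varepsilon'\|_2$. It does close once you phrase it as a bound on $\sup_B\bigl\|\,\|B\varepsilon\|_2\bigr\|_{L_q}/\bigl(\sigma(\|B\|_{\mathrm F}+q^{1/\alpha}\|B\|_{\mathrm{op}})\bigr)$ over $n\times n$ matrices, which is finite for each fixed $n$. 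With these two pieces your scheme proves the lemma for $\alpha\le1$. For $\alpha=2$ your first option already suffices as written.

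For $1<\alpha<2$ there is a second problem: the conditional linear bound you quote is false. In that range the Kuchibhotla--Chakrabortty inequality has the term $q^{1/\alpha}\|b\|_{\alpha/(\alpha-1)}$, not $q^{1/\alpha}\|b\|_\infty$. To see the failure, take symmetric Weibull$(\alpha)$ variables and let $b$ be the indicator of $k\asymp q^{2/\alpha-1}$ coordinates. Forcing all $k$ of them to be of size $(q/k)^{1/\alpha}$ gives $\|\langle\varepsilon,b\rangle\|_{L_q}\gtrsim k^{1-1/\alpha}q^{1/\alpha}$, whereas $\sqrt q\,\|b\|_2+q^{1/\alpha}\|b\|_\infty\asymp q^{1/\alpha}$.

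With the correct norm, the row-wise estimate above only yields $q^{2/\alpha}\|A_0\|_{\mathrm F}^{2/r}\|A_0\|_{\mathrm{op}}^{1-2/r}$, where the exponent $r=\alpha/(\alpha-1)$ is fixed. This is not dominated by the target, so decoupling plus conditioning does not close with the tools you list. This range, not $\alpha<1$, is where you would need finer chaos moment estimates (log-concave tails, of Adamczak--Lata{\l}a type).

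Your diagnosis of the obstacle is also off. Symmetrization and decoupling in $L_q$, $q\ge1$, use only independence and centering, so the quasi-norm issue never arises in a moment argument. And Weibull tails with $\alpha<1$ are log-convex, not log-concave.
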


\begin{lemma}\label{lem:sum}
Let $\varepsilon=(\varepsilon_1,\ldots,\varepsilon_n)^\top$ be independent, mean-zero
sub-Weibull $(\alpha,\zeta)$ random variables for all $i$ and some $\alpha\in(0,2]$.
Then, for every deterministic vector $a \in \mathbb{R}^n$ and $t>0$, we have
\begin{equation*}
\Pr\!\Big(
\big|a^\top\varepsilon\big|
\ge t
\Big)
\ \le\
2\exp\!\left\{
-c_{\alpha,\zeta}
\min\!\left(
\left(\frac{t}{\sigma\|a\|_2}\right)^2,\,
\left(\frac{t}{\sigma\|a\|_2}\right)^\alpha
\right)
\right\}.
\end{equation*}
\end{lemma}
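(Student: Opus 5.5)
The plan is to deduce Lemma~\ref{lem:sum} from a generalized Bernstein inequality for weighted sums of independent sub-Weibull variables, after a comparison of weight norms. The first step uses the Orlicz-norm bound stated just before Lemma~\ref{lem:hanson}: under the tail condition of Section~\ref{sub:framework}, $\|\varepsilon_i\|_{\psi_\alpha}\le C_{\alpha,\zeta}\sigma$ for every $i$. By rescaling $\varepsilon_i\mapsto\varepsilon_i/\sigma$ and $a\mapsto a/\|a\|_2$, it then suffices to treat $\sigma=1$ and $\|a\|_2=1$. The goal becomes
\[
\Pr\bigl(|a^\top\varepsilon|\ge t\bigr)\le 2\exp\{-c\min(t^2,t^\alpha)\}
\]
for independent, mean-zero $\varepsilon_i$ with uniformly bounded $\psi_\alpha$ norms.

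The second step invokes the generalized Bernstein--Orlicz inequality of Kuchibhotla and Chakrabortty (the same family of results as \citet{ejp2021,sambale2023some} cited for Lemma~\ref{lem:hanson}). It states
\[
\Pr\bigl(|a^\top\varepsilon|\ge t\bigr)\le 2\exp\Bigl\{-c_\alpha\min\Bigl(\frac{t^2}{K^2\|a\|_2^2},\Bigl(\frac{t}{K\|a\|_{\beta(\alpha)}}\Bigr)^{\alpha}\Bigr)\Bigr\},
\qquad K=\max_i\|\varepsilon_i\|_{\psi_\alpha}.
\]
Here $\beta(\alpha)=\infty$ for $\alpha\le1$, and $\beta(\alpha)=\alpha/(\alpha-1)$, the conjugate exponent, for $1<\alpha\le2$. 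The third step replaces $\|a\|_{\beta(\alpha)}$ by $\|a\|_2$:
\begin{itemize}
\item when $\alpha\le1$, use $\|a\|_\infty\le\|a\|_2$;
\item when $1<\alpha\le 2$, the conjugate exponent is at least $2$, so $\|a\|_{\beta(\alpha)}\le\|a\|_2$ by monotonicity of $\ell_q$ norms in $q$.
\end{itemize}
In both cases the second argument of the minimum only increases under the replacement, so the bound remains valid. Substituting $K\le C_{\alpha,\zeta}\sigma$ and absorbing constants into $c_{\alpha,\zeta}$ gives the stated inequality. The case $\alpha=2$ reduces to Hoeffding's sub-Gaussian inequality, where the Gaussian term is the operative one.

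The main obstacle is the regime $\alpha<1$, where moment generating functions do not exist and a Chernoff argument fails. If a self-contained argument is wanted rather than a citation, I would first try truncation. Split $\varepsilon_i=\varepsilon_i\mathbf 1\{|\varepsilon_i|\le M\}+\varepsilon_i\mathbf 1\{|\varepsilon_i|>M\}$ with $M\asymp t^{\,\cdot}$ chosen to balance the two pieces. Apply Bernstein's inequality to the bounded, recentred part; the recentring error is controlled by the sub-Weibull tails. Bound the unbounded part by a union bound over $i$ together with a moment estimate $\|\sum a_i\varepsilon_i\mathbf 1\{|\varepsilon_i|>M\}\|_{L^p}\lesssim p^{1/\alpha}\|a\|_\infty$, which follows from Latała-type moment bounds for sums of independent Weibull-like variables. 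Converting these $L^p$ bounds to tails yields exactly the $(t/\|a\|_\infty)^\alpha$ term. I expect the bookkeeping of this conversion to be the delicate part, so in the write-up I would simply cite the generalized Bernstein--Orlicz theorem.
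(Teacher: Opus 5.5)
Your proof is correct, and its skeleton matches the paper's one-line proof: invoke a Bernstein-type tail bound for weighted sums of independent sub-Weibull variables, then compare norms of the weight vector. The difference is in the form of the key inequality, and it matters.

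\textbf{How the two routes differ.} The paper quotes the bound with $\|a\|_\infty$ in the Weibull term for every $\alpha\in(0,2]$ and then uses $\|a\|_\infty\le\|a\|_2$. You use $\|a\|_\infty$ only for $\alpha\le 1$, and the conjugate-exponent norm $\|a\|_{\alpha/(\alpha-1)}$ for $1<\alpha\le 2$. Your version is the correct statement of the Kuchibhotla--Chakrabortty theorem.

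\textbf{Why your case split is needed.} The $\|a\|_\infty$ form is false for $\alpha\in(1,2)$. Take $a=\mathbf{1}_n$ and i.i.d.\ symmetric $\varepsilon_i$ with density proportional to $e^{-|x|^\alpha}$. Forcing every $\varepsilon_i\ge t/n$ gives $\Pr(a^\top\varepsilon\ge t)\ge\exp\{-C\,t^\alpha n^{1-\alpha}\}$ for large $t/n$. For $t^{2-\alpha}\ge n$, the paper's intermediate bound reads $2\exp\{-c\,t^\alpha\}$. This is violated once $n$ is large enough that $Cn^{1-\alpha}<c$ and $t$ is large. The reason is that convex tails make it cheaper to spread a large deviation over many coordinates than to concentrate it in one.

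Your split is therefore what justifies the final $\|a\|_2$ bound over the whole range of $\alpha$. It rests on $\|a\|_{\alpha/(\alpha-1)}\le\|a\|_2$, which holds because the conjugate exponent is at least $2$; in the example this is $n^{1-1/\alpha}\le n^{1/2}$. The lemma as stated is true, but your route is the one that actually proves it for $\alpha\in(1,2)$.

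\textbf{Minor points.}
\begin{itemize}
\item Wording slip: replacing $\|a\|_{\beta(\alpha)}$ by the larger $\|a\|_2$ makes the second argument of the minimum \emph{decrease}, not increase. This weakens the right-hand side, which is exactly why the bound stays valid.
\item The truncation sketch for $\alpha<1$ is unnecessary once you cite the theorem.
\end{itemize}
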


\begin{proof}
By the Bernstein-type inequality for linear forms of independent sub-Weibull random variables,

$$
\operatorname{Pr}\left(\left|a^{\top} \varepsilon\right| \geq t\right) \leq 2 \exp \left\{-c_{\alpha, \zeta} \min \left[\frac{t^2}{\sigma^2\|a\|_2^2},\left(\frac{t}{\sigma\|a\|_{\infty}}\right)^\alpha\right]\right\} .
$$

Since $\|a\|_{\infty} \leq\|a\|_2$, the asserted inequality follows. 
\end{proof}

\section{Additional Simulation Results}\label{app:simulations}
\subsection{Additional results for Section 6.1}\label{app:highly}

Across all simulation settings in Section 6.1 and covariance mixes ($w \in \{0.2, 0.5, 0.8\}$), two consistent patterns emerge. First, as the sample size $n$ increases, all methods improve in both prediction error (ASE) and support recovery metrics (FP, FN, MCC). Second, performance improves across all metrics as $w$ decreases when the design shifts from block-dominated to AR(1)-dominated correlation. However, the rates of improvement and the performance gaps between methods hinge on the degree of signal clustering and the strength of within-block collinearity.

\begin{figure}[H]
  \centering
  \includegraphics[scale = 0.57]{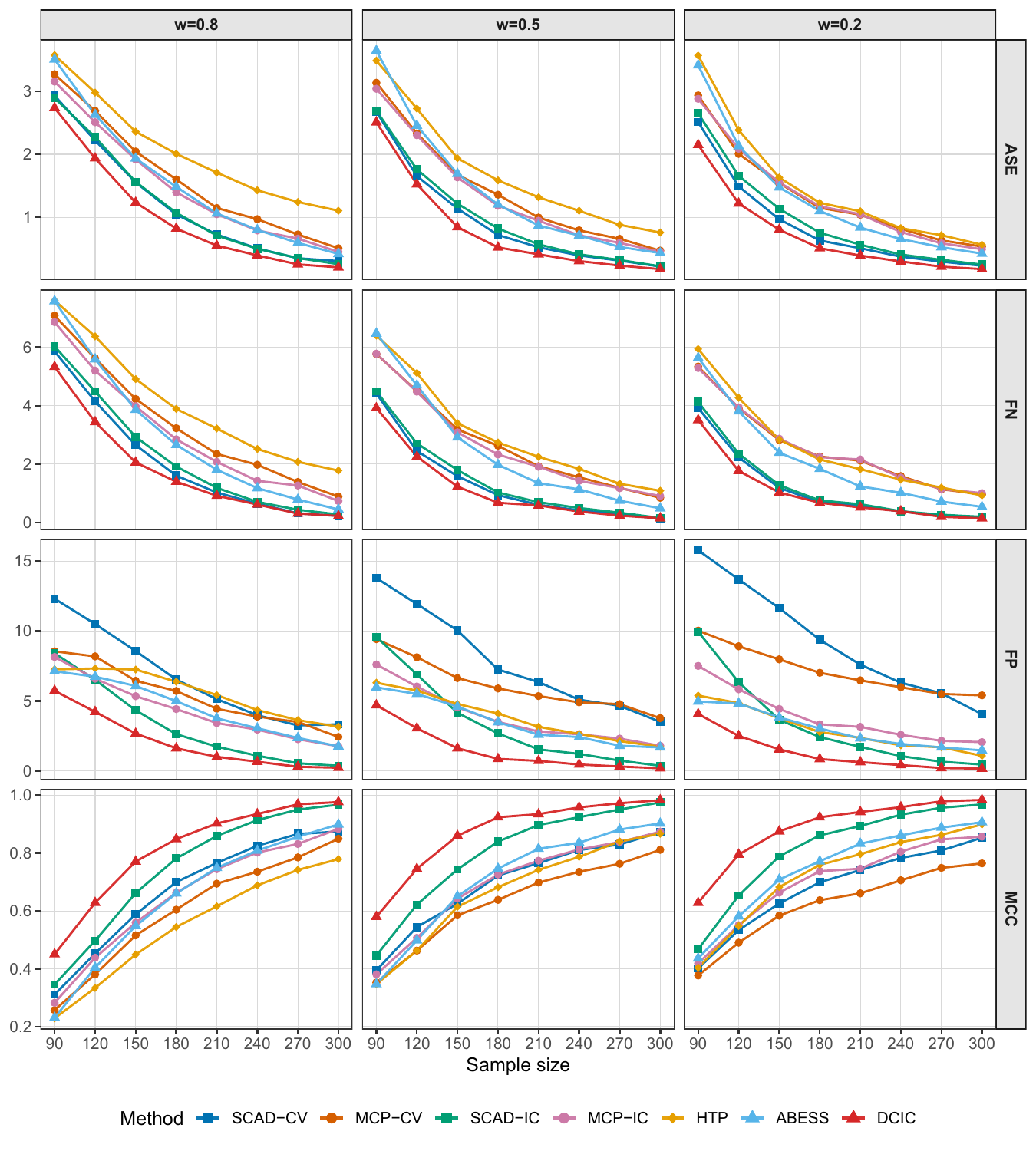}
  \caption{\label{fig:fig1}   Performance metrics under Setting 1 with an increasing sample size.
  The suffix ``-CV'' indicates tuning by 10-fold cross-validation. The suffix ``-IC'' indicates tuning by information criterion.
  }
\end{figure}

In the one-per-block signals setting (Figure~\ref{fig:fig1}), signals are uniformly dispersed across blocks, avoiding within-block competition. When the sample size reaches $n \geq 240$, nearly exact support recovery is attained for DCIC and SCAD-IC, which outperform the other methods significantly. In terms of estimation accuracy, DCIC, SCAD-CV and SCAD-IC deliver substantially lower ASE than the remaining competitors.
DCIC simultaneously keeps both false positives and false negatives at low levels, which is consistent with our theoretical guarantees. By contrast, the best-subset methods (HTP and ABESS) exhibit elevated false negatives, indicating that in the presence of strongly correlated competitor variables they frequently miss true predictors, which also explains their higher ASE.

\begin{figure}[htbp]
  \centering
  \includegraphics[scale = 0.57]{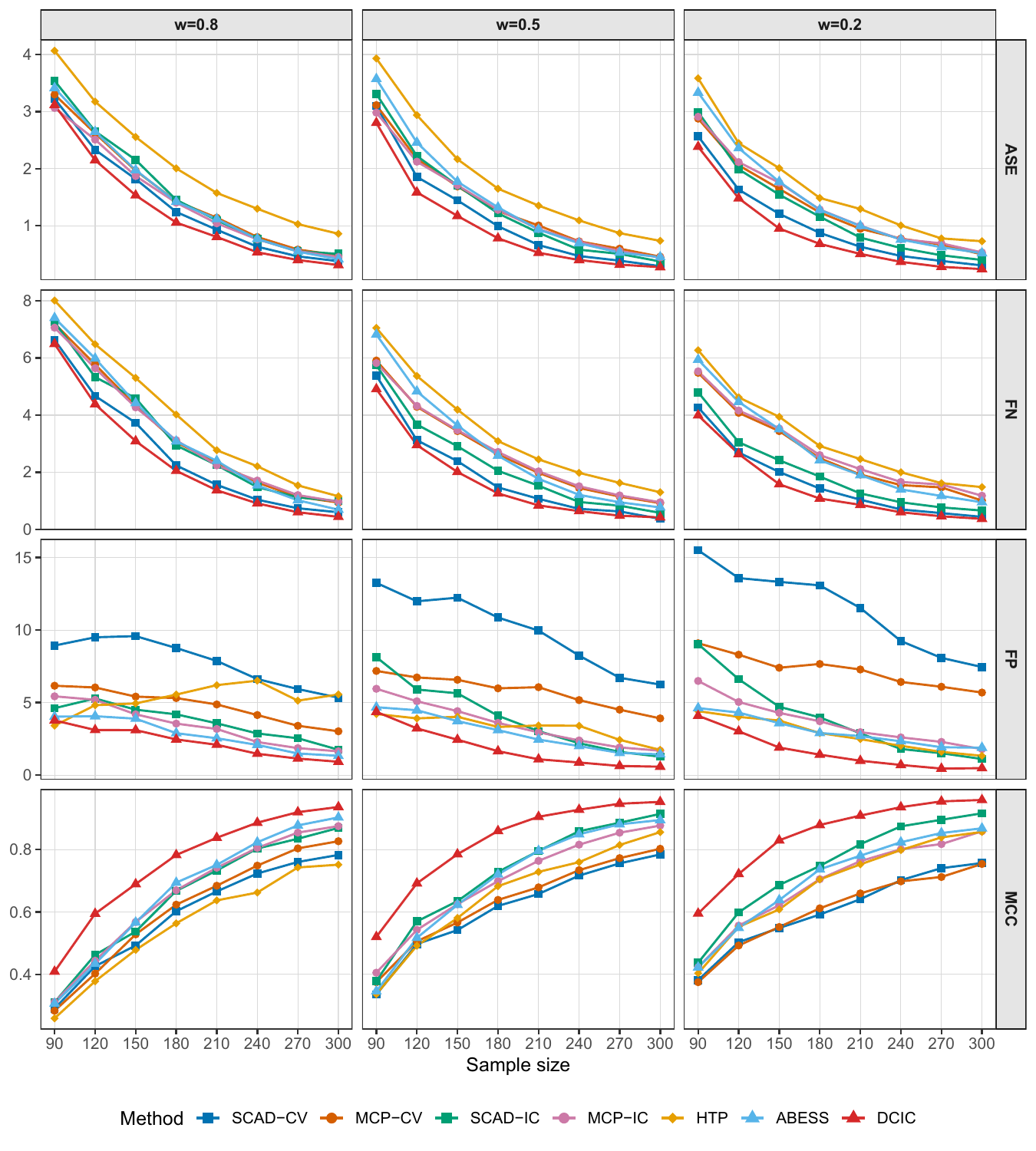}
  \caption{\label{fig:fig2}   Performance metrics under Setting 2 with an increasing sample size.
  The suffix ``-CV'' indicates tuning by 10-fold cross-validation. The suffix ``-IC'' indicates tuning by information criterion.
  }
\end{figure}
In the moderately clustered signals setting (Figure~\ref{fig:fig2}), signals are concentrated in half of the blocks, intensifying within-block competition. DCIC maintains a clear advantage in MCC owing to its good control of both false positives and false negatives. While HTP and ABESS limit false positives, they omit a substantial number of true predictors, returning an overly sparse model. By contrast, SCAD and MCP, especially tuned by cross-validation, exhibit inflated false positives at small $n$, reflecting difficulty in separating signals from strongly correlated surrogates. The ASE of DCIC matches or improves upon that of the best competitors, indicating competitive estimation performance.

In summary, the simulation results illustrate DCIC as a robust method for model selection under strongly correlated designs. Although best-subset methods such as HTP and ABESS have strong theoretical guarantees and often perform competitively in practice \citep{huang2018constructive,Zhu202014241,zhang2025}, under strong correlation, they tend to select overly sparse models, resulting in higher false negatives and weaker support recovery.

\subsection{Illustration of the pathwise computation--statistics trade-off}\label{app:pathwise-sim}
\label{app:path-illustration}
We provide an additional illustration of the pathwise computation--statistics
trade-off underlying the complexity-guided search in Section~\ref{sub:tradeoff}.

The data are generated from the same covariance construction as Setting~1 in
Section~\ref{sub:sim_highly}. Specifically, we take \(n=50\), \(p=300\), and
\(s^*=5\), and use the correlated design with
\(\rho_{\mathrm{in}}=\rho_{\mathrm{out}}=0.9\) and \(w=0.5\). To control the quality of
the complexity ranking, we construct the ranking so that the active variables
are randomly embedded in the first \(\tau_n\) ranked positions, and consider
$
  \tau_n \in \{20,50,100\}.
$
These three choices represent near-oracle, favorable, and uninformative
rankings, respectively.

For each ranking regime, we run the DCIC path algorithm over the same geometric
grid
$
  \lambda_k=\lambda_{\max}\rho^{k-1}, k=1,\ldots,30,
$
with \(\lambda_{\max}=30\) and \(\rho=0.9\). At each path iteration \(k\), we
record the number of models evaluated by the implemented DFS
search, the average squared error (ASE), and the Matthews correlation
coefficient (MCC). The experiment is
repeated 100 times, and Figure~\ref{fig:log} reports the average curves over
these replications.

\begin{figure}[htbp]
  \centering
  \includegraphics[scale = 0.5]{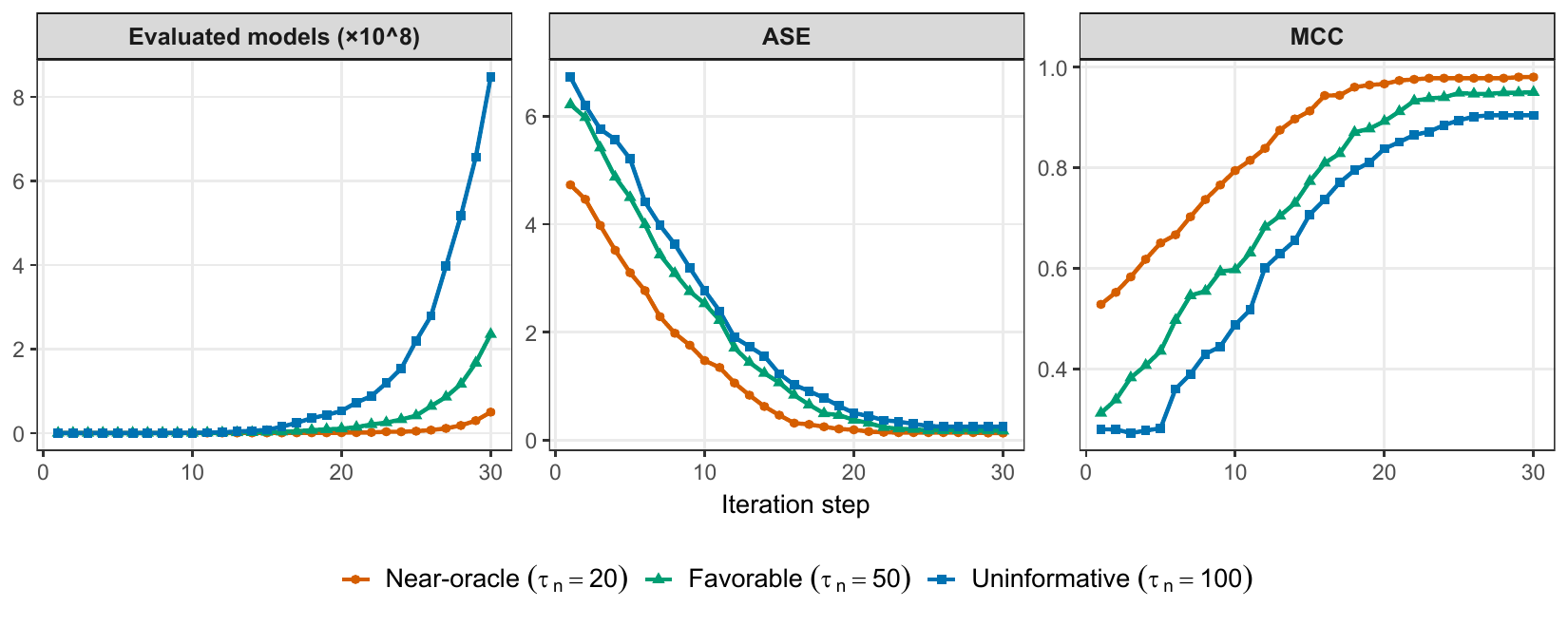}
  \caption{\label{fig:log}
  Pathwise computation--statistics trade-off under different ranking
  informativeness. The curves are averaged over 100 replications.
  }
\end{figure}

Figure~\ref{fig:log} displays a clear pathwise pattern. When the ranking is
strongly informative, the procedure enters a statistically favorable region at
early iterations: ASE decreases rapidly and MCC becomes high while the evaluated
search size remains small. As the ranking becomes less informative, comparable
statistical accuracy is achieved only at later iterations, where the search
region has expanded substantially. In particular, the uninformative regime
requires a much larger evaluated search size before reaching a similar level of
ASE and MCC.

This behavior is consistent with the theoretical picture in
Section~\ref{sub:tradeoff}. A more informative complexity ranking reduces the
rank envelope \(\tau_n\), thereby shifting the statistical entry index \(k_{\rm stat}\)
to earlier grid points, where \(\lambda\) is larger and the search is more
strongly constrained. Conversely, weaker ranking information delays recovery
until smaller values of \(\lambda\), at which point the number of evaluated
models grows rapidly. The experiment therefore illustrates how the complexity
ranking and the \(\lambda\)-path jointly determine the effective
computation--statistics trade-off.

We further examine data-driven constructions of the preliminary ranking under the
same simulation setting. In addition to the oracle-prefix reference rankings with
$\tau_n=20$ and $\tau_n=50$, we consider two practical ranking procedures: a
LASSO-based ranking and a marginal-correlation ranking. The sample size $n$ is
varied from 40 to 200 in increments of 20. Each configuration is
repeated 100 times, and the results are summarized in Figure~\ref{fig:lasso}.

\begin{figure}[htbp]
\centering
\includegraphics[scale=0.52]{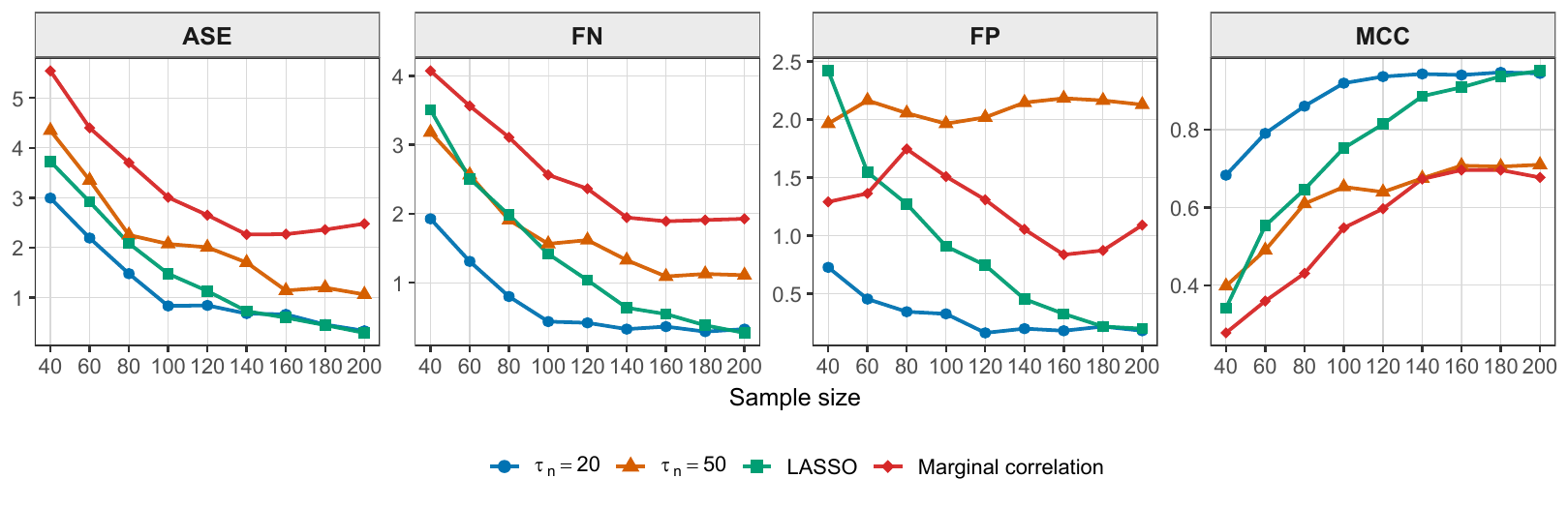}
\caption{\label{fig:lasso}
Performance of different ranking procedures under a strongly correlated design.
}
\end{figure}

The LASSO-based ranking provides a useful data-driven approximation to the
favorable oracle-prefix rankings. For small sample sizes, its performance is
comparable to that of the $\tau_n=50$ reference ranking. As $n$ increases, the
LASSO ranking becomes more informative and moves closer to the performance of the
stronger $\tau_n=20$ reference ranking. By contrast, marginal correlation performs
substantially worse in this strongly correlated setting, reflecting its sensitivity
to correlated proxy variables. These results support the use of LASSO as a
practical ranking device in the complexity-guided DCIC implementation.

To assess whether the preceding comparison is specific to the strongly correlated
design, we repeat the same ranking experiment under a moderately correlated AR(1)
design with covariance $\Sigma_{ij}=0.6^{|i-j|}$, keeping the remaining simulation
parameters unchanged. The results are summarized in Figure~\ref{fig:lasso_AR}.
\begin{figure}[htbp]
\centering
\includegraphics[scale=0.52]{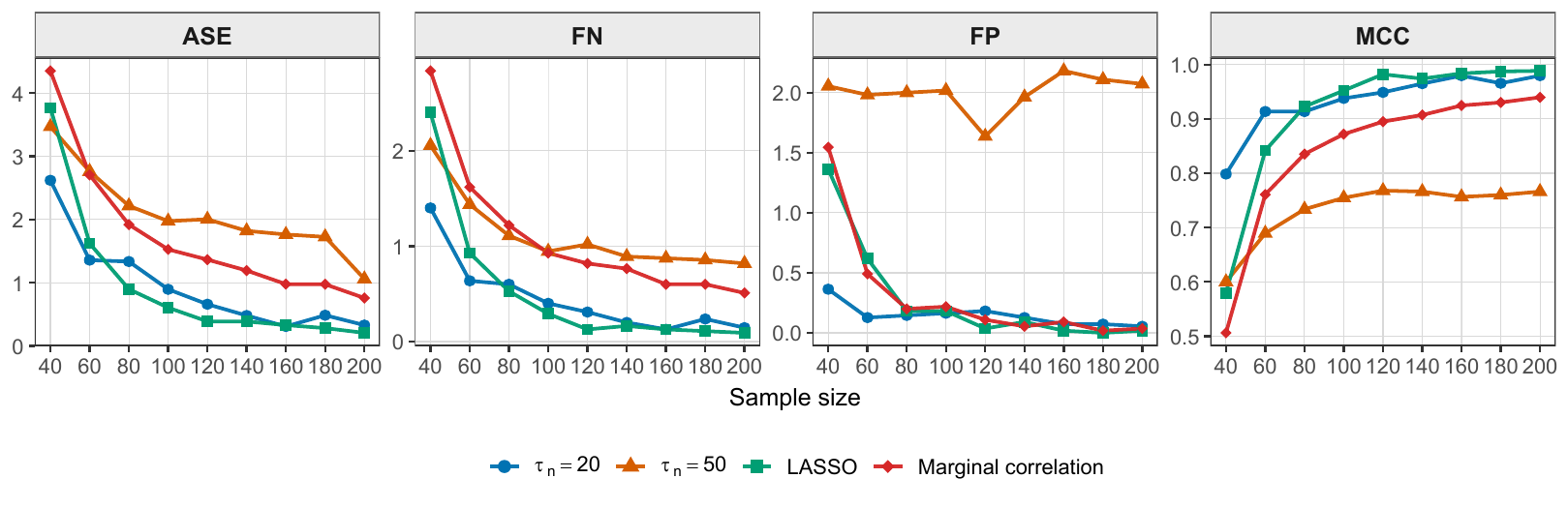}
\caption{\label{fig:lasso_AR}
Performance of different ranking procedures under a moderately correlated AR(1) design.
}
\end{figure}

Under the moderate AR(1) design, the LASSO-based ranking remains
competitive with the favorable $\tau_n=20$ reference ranking across all four metrics.
Marginal correlation also performs reasonably well, in contrast to its behavior under
the strongly correlated design in Figure~\ref{fig:lasso}. Taken together, Figures~\ref{fig:lasso} and~\ref{fig:lasso_AR}
suggest that LASSO provides a stable practical ranking device across both difficult and
moderately correlated regimes, while marginal correlation can be adequate in easier
settings but may deteriorate under strong correlation.

As an additional large-scale runtime illustration, we consider a problem with $n=200$, $p=5000$, and $s^*=20$, and run the proposed complexity-guided search under different rank envelopes $\tau_n$. We set $\sigma=1$, and the nonzero entries of $\beta^*$ are drawn independently from $\operatorname{Unif}[1,2]$.

Unlike Figure~\ref{fig:log}, where the horizontal axis is the path iteration, Figure~\ref{fig:time} reports ASE and MCC against accumulated computational time. Panel~(a) uses the design in Setting~1 of Section~\ref{sub:sim_highly}, with $\rho_{\mathrm{in}}=\rho_{\mathrm{out}}=0.9$ and $w=0.5$. Panel~(b) uses an AR(1) design with covariance $\Sigma_{ij}=0.6^{|i-j|}$. The same qualitative pattern remains visible on the runtime scale: a smaller rank envelope reaches low ASE and high MCC much earlier, whereas a larger rank envelope requires substantially longer runtime and may remain away from the favorable statistical region under the same computational budget. This experiment is intended as a representative runtime illustration of the algorithmic role of $\tau_n$ in the computation--statistics trade-off.

\begin{figure}[H]
\centering
\includegraphics[scale=0.52]{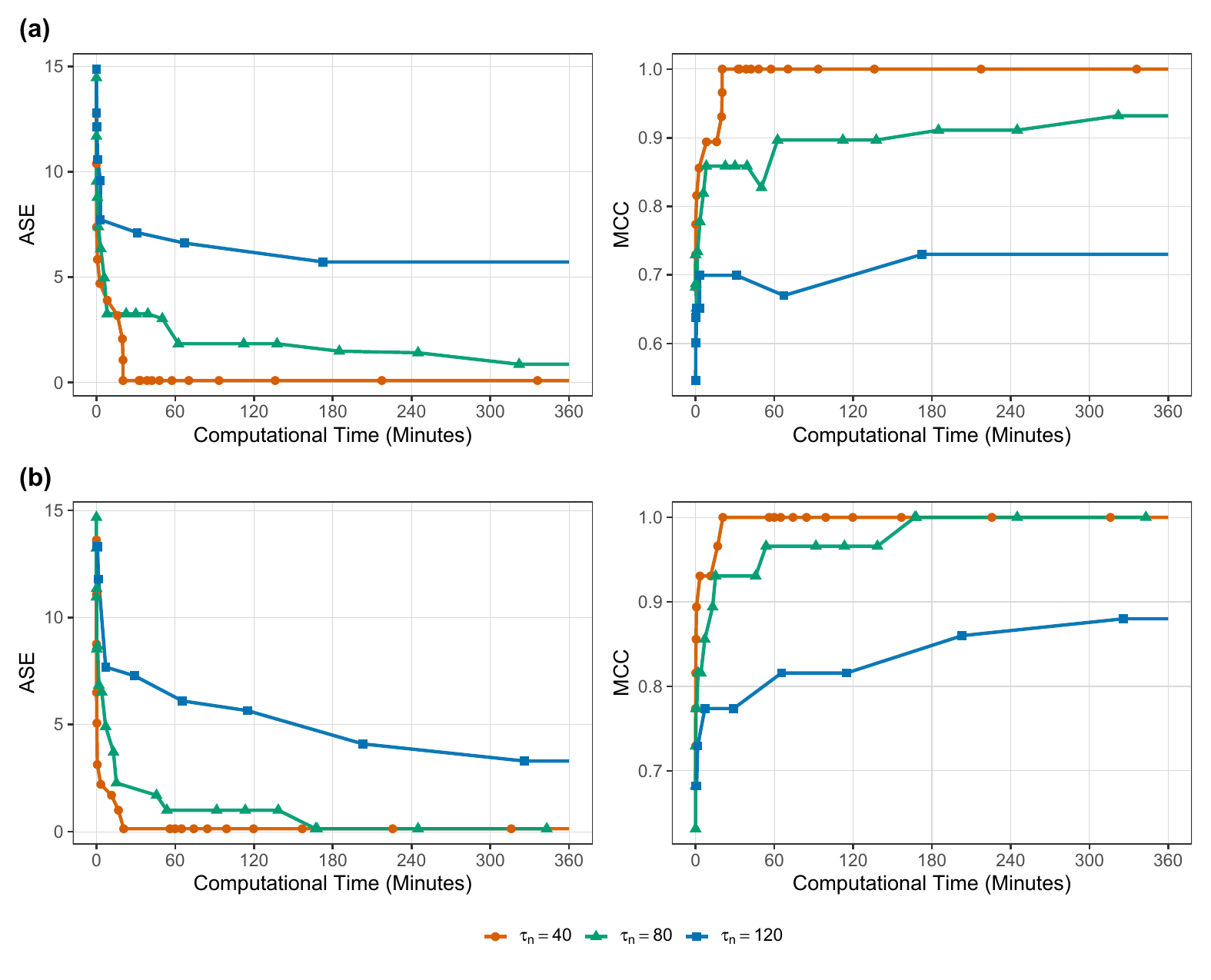}
\caption{\label{fig:time}
Representative large-scale runtime illustration of the pathwise computation--statistics trade-off, with $n=200$, $p=5000$, and $s^*=20$. The curves report ASE and MCC against accumulated computational time under different rank envelopes $\tau_n$. Panel~(a) corresponds to the strongly correlated design in Setting~1 of Section~\ref{sub:sim_highly}, with $\rho_{\mathrm{in}}=\rho_{\mathrm{out}}=0.9$ and $w=0.5$; panel~(b) corresponds to an AR(1) design with $\Sigma_{ij}=0.6^{|i-j|}$.
}
\end{figure}

\subsection{Sensitivity analysis for implementation heuristics}\label{sub:sensitivity}
We examine the sensitivity of the DCIC search to two heuristic
parameters used to accelerate the depth-first search described in
Remark~\ref{rem:accelerate}: the ranked-search prefix size and the RSS-based
budget-tightening factor. The experiments in this subsection use the same block-design
mechanism and active-block structure as Setting~2 in Section~\ref{sub:sim_highly}.
The active variables are distributed over five correlated blocks, with two nonzero
coefficients in each active block. We fix \(w=0.5\) and consider two correlation regimes by setting
\(\rho_{\mathrm{in}}=\rho_{\mathrm{out}}\) to \(0.5\) and \(0.8\), corresponding to
moderate and strong correlation, respectively.

Each configuration is repeated 50 times. We impose a 10-minute time limit on each
replication. If the search does not terminate within the time limit, the last available
solution along the DCIC path is used as the reported estimator. We report both the
statistical performance of the resulting estimators and the runtime.

\subsubsection{RSS-based budget tightening}
We first vary the RSS-based budget-tightening factor. This factor controls how
aggressively the RSS term is used to tighten the complexity budget during the
implemented search. Smaller values retain more candidate branches and are therefore
more conservative, whereas larger values prune more aggressively. We set \(n=100\),
\(p=400\), and \(\sigma=1\), and draw the nonzero coefficients from
\(\mathrm{Unif}[0.5,1]\). The ranked-search prefix size is fixed at its default value
\(p/5\). We compare the budget-tightening factors
$
\{0.6,\ 0.8,\ 1.0\}.
$
\begin{table}[htbp]
\centering
\caption{Sensitivity to the RSS-based budget-tightening factor.}
\label{tab:scale-sensitivity}
\begin{tabular}{llrrrrrr}
\toprule
Regime & Budget factor & ASE & MCC & TP & FN & Truncated & Runtime (s) \\
\midrule
Moderate & \(0.6\) & 0.730 & 0.863 & 7.98 & 2.02 & 12 & 173.52 \\
Moderate & \(0.8\) & 0.738 & 0.863 & 7.98 & 2.02 & 0  & 15.85  \\
Moderate & \(1.0\) & 0.779 & 0.852 & 7.86 & 2.14 & 0  & 0.82   \\
\midrule
Strong & \(0.6\) & 1.171 & 0.647 & 5.35 & 4.65 & 23 & 283.12 \\
Strong & \(0.8\) & 1.092 & 0.676 & 5.76 & 4.24 & 1  & 35.61  \\
Strong & \(1.0\) & 1.201 & 0.647 & 5.34 & 4.66 & 0  & 1.51   \\
\bottomrule
\end{tabular}
\end{table}

Table~\ref{tab:scale-sensitivity} exhibits the expected computation--statistics
trade-off. The conservative factor \(0.6\) retains more branches and achieves
competitive recovery accuracy, but it leads to many truncated replications and much
larger runtime. The aggressive factor \(1.0\) is substantially faster, but it slightly
degrades recovery accuracy, especially in the strong regime. The intermediate factor
\(0.8\) retains nearly the same statistical performance as the conservative choice in
the moderate regime and gives the best recovery performance in the strong regime, while
requiring much less computation than \(0.6\). These results support the default
RSS-based budget-tightening factor \(0.8\) used in the main experiments.

\subsubsection{Ranked-search prefix size}
We next vary the ranked-search prefix size, which controls how many top-ranked
variables are retained in the depth-first search. We set \(n=100\), \(p=300\), and
\(\sigma=2\), and draw the nonzero coefficients from \(\mathrm{Unif}[1,2]\). The
RSS-based budget-tightening factor is fixed at \(0.8\). We compare prefix sizes
$
\{p,\ p/2,\ p/5,\ p/10\}.
$

\begin{table}[H]
\centering
\caption{Sensitivity to the ranked-search prefix size.}
\label{tab:prefix-sensitivity}
\begin{tabular}{llrrrrrr}
\toprule
Regime & Prefix size & ASE & MCC & TP & FN & Truncated & Runtime (s) \\
\midrule
Moderate & \(p\)      & 2.952 & 0.870 & 8.03 & 1.97 & 0 & 22.02 \\
Moderate & \(p/2\)    & 2.952 & 0.870 & 8.03 & 1.97 & 0 & 13.85 \\
Moderate & \(p/5\)    & 2.952 & 0.870 & 8.03 & 1.97 & 0 & 2.97  \\
Moderate & \(p/10\)   & 2.966 & 0.868 & 7.98 & 2.02 & 0 & 0.17  \\
\midrule
Strong & \(p\)      & 5.020 & 0.675 & 5.36 & 4.64 & 3 & 177.45 \\
Strong & \(p/2\)    & 5.020 & 0.675 & 5.36 & 4.64 & 3 & 101.44 \\
Strong & \(p/5\)    & 5.027 & 0.675 & 5.36 & 4.64 & 1 & 38.17 \\
Strong & \(p/10\)   & 5.027 & 0.674 & 5.36 & 4.64 & 0 & 2.22   \\
\bottomrule
\end{tabular}
\end{table}

Table~\ref{tab:prefix-sensitivity} shows that reducing the prefix size substantially
decreases runtime while leaving the statistical performance competitive.
The default choice \(p/5\) provides a useful compromise: it is much faster than using
the full ranked list or \(p/2\), while remaining more conservative than the very small
prefix \(p/10\).

% \begin{flushleft}
% \footnotesize
% Note: ``Truncated'' reports the number of replications not completed within the
% 10-minute cap. For truncated replications, performance metrics are computed from the
% latest checkpoint, while runtime is counted as 600 seconds when computing the capped
% mean runtime.
% \end{flushleft}

\subsection{Additional setting of all-subset selection}
In this section, we report additional simulation results under a standard AR(1) design with moderate correlation \citep{bertsimas2016,Zhu202014241}. Specifically, the rows of the design matrix $X$ are generated independently from $N(0,\Sigma)$ with $\Sigma_{ij}=0.6^{|i-j|}$. We fix $p=1000$, $s^*=10$, and $\sigma=2$, and let the sample size $n$ vary from 90 to 360 in increments of 30. The nonzero coefficients are set to $1$ with random signs, and their indices are equally spaced. In addition to the performance metrics reported in Section~\ref{sub:sim_highly}, we also record the runtime in seconds for each method.

\begin{figure}[H] \centering \includegraphics[scale = 0.6]{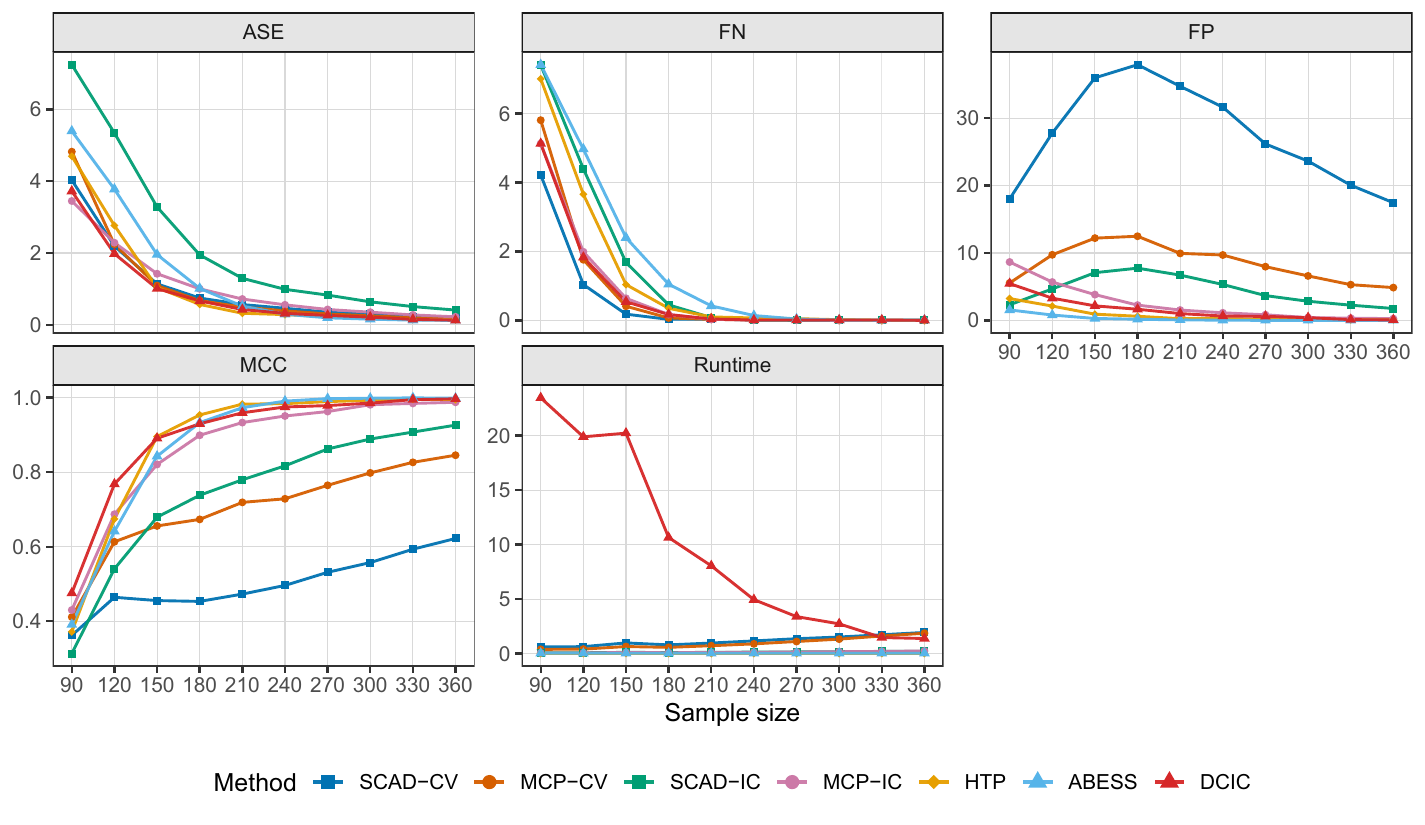} \caption{\label{fig:figAR}
Performance under the AR(1) design with $\Sigma_{ij}=0.6^{|i-j|}$ as the sample size increases.
The suffix ``-CV'' denotes tuning by 10-fold cross-validation, whereas ``-IC'' denotes tuning by an information criterion.
}
\end{figure}

Figure~\ref{fig:figAR} shows that all methods improve as the sample size increases, although the extent of improvement differs substantially across procedures. In terms of support recovery, HTP, ABESS, and DCIC exhibit the strongest overall performance: both false negatives and false positives decrease rapidly with $n$, and their MCC values approach one once the sample size is moderate to large. Among the remaining competitors, MCP-IC performs reasonably well, whereas SCAD-CV continues to select a non-negligible number of irrelevant variables even at the largest sample sizes, and SCAD-IC shows comparatively slower improvement in estimation and recovery accuracy. 

From a computational perspective, HTP and ABESS are consistently the fastest methods across the entire range of $n$. DCIC is more computationally demanding when $n$ is small, but its runtime decreases markedly as $n$ increases. This pattern is consistent with the fact that, under more favorable recovery regimes, the screening and pruning steps discard a substantially larger fraction of inferior candidate models before the final search stage.

\bibliographystyle{plainnat}
\bibliography{ref}
\end{document}